\definecolor{myblue}{rgb}{0.21, 0.34, 0.74}
\definecolor{mygrey}{rgb}{0.55, 0.57, 0.67}
\definecolor{myred}{rgb}{0.79, 0.0, 0.09}
\definecolor{mygreen}{rgb}{0.05, 0.5, 0.06}
\DeclareMathAlphabet{\mathscrbf}{OMS}{mdugm}{b}{n}
\renewcommand{\S}{\textsection}
\newcommand{\R}{\mathbb{R}}
\newcommand{\proj}{\bm{\mathsf{P}}^\perp}
\newcommand{\diff}{\, \mathrm{d}}
\newcommand{\dist}{\mathrm{dist}}
\newcommand{\dive}{\mathrm{div}}
\renewcommand{\S}{\mathbb{S}^{d-1}}
\newcommand{\Tan}{\mathrm{T}}
\newcommand{\cP}{\mathcal{P}}
\DeclareMathOperator*{\argmin}{arg\,min}
\DeclareMathOperator*{\arginf}{arg\,inf}
\numberwithin{equation}{section}
\newcommand{\gradW}{\nabla\mkern-10mu\nabla}
\theoremstyle{plain}
\newtheorem{theorem}{Theorem}[section]
\newtheorem{definition}[theorem]{Definition}
\newtheorem{proposition}[theorem]{Proposition}
\newtheorem{lemma}[theorem]{Lemma}
\newtheorem{remark}[theorem]{Remark}
\newtheorem{example}[theorem]{Example}
\newtheorem{open}{Problem}
\begin{document}

\title{A mathematical perspective on Transformers}

\author[Geshkovski]{Borjan Geshkovski}
\address{Department of Mathematics, Massachusetts Institute of Technology, 77 Massachusetts Ave, 02139 Cambridge MA, USA}
\email{borjan@mit.edu}

\author[Letrouit]{Cyril Letrouit}
\address{CNRS \& Université Paris-Saclay\\  Laboratoire de mathématiques d'Orsay, 307 rue Michel Magat, Bâtiment 307, 91400 Orsay, France}
\email{cyril.letrouit@universite-paris-saclay.fr}

\author[Polyanskiy]{Yury Polyanskiy}
\address{Department of EECS, Massachusetts Institute of Technology, 77 Massachusetts Ave, 02139 Cambridge MA, USA}
\email{yp@mit.edu}

\author[Rigollet]{Philippe Rigollet}
\address{Department of Mathematics, Massachusetts Institute of Technology, 77 Massachusetts Ave, 02139 Cambridge MA, USA}
\email{rigollet@math.mit.edu}

\subjclass{Primary: 34D05, 34D06, 35Q83; Secondary: 52C17}

\date{}

\keywords{Transformers, self-attention, interacting particle systems, clustering, gradient flows}

\maketitle

\begin{abstract}
Transformers play a central role in the inner workings of large language models. We develop a mathematical framework for analyzing Transformers based on their interpretation as interacting particle systems, with a particular emphasis on long-time clustering behavior.
    Our study explores the underlying theory and offers new perspectives for mathematicians as well as computer scientists.
\end{abstract}

\setcounter{tocdepth}{1}
\tableofcontents

\section{Outline}
The introduction of \emph{Transformers} in 2017 by Vaswani et al.~\cite{vaswani2017attention} marked a significant milestone in the development of neural network architectures. Central to this contribution is \emph{self-attention}, a novel mechanism which distinguishes Transformers from traditional architectures, and which plays a substantial role in their superior practical performance. In fact, this innovation has been a key catalyst for the progress of artificial intelligence in areas such as computer vision and natural language processing, notably with the emergence of large language models. As a result, understanding the mechanisms by which Transformers, and especially self-attention, process data is a crucial yet largely uncharted research area.

A common characteristic of deep neural networks (DNNs) is their compositional nature: data is processed sequentially, layer by layer, resulting in a discrete-time dynamical system (we refer the reader to the textbook \cite{GooBenCou16} for a general introduction). 
This perspective has been successfully employed to model \emph{residual neural networks}---see Section~\ref{sec: resnets} for more details---as continuous-time dynamical systems called neural ordinary differential equations (neural ODEs)~\cite{chen2018neural,weinan2017proposal,haber2017stable}. 
In this context, an input $x(0)\in\R^d$, say an image, is evolving according to a given time-varying velocity field as $\dot x(t)=v_t(x(t))$ over some time interval $(0,T)$. 
As such, a DNN can be seen as a flow map $x(0) \mapsto x(T)$ from $\R^d$ to $\R^d$. Even within the restricted class of velocity fields $\{v_t\}_{t\geq0}$ imposed by classical DNN architectures, such flow maps enjoy strong approximation properties as exemplified by a long line of work on these questions \cite{lin2018resnet, zhang2020approximation, li2022deep, tabuada2020universal, ruiz2023neural, cheng2023interpolation}.

Following~\cite{sander2022sinkformers} and ~\cite{vuckovic2020mathematical}, we observe that Transformers are in fact flow maps on $\cP(\R^d)$, the space of probability measures over $\R^d$. 
To realize this flow map from measures to measures, Transformers evolve a \emph{mean-field  
 interacting particle system}. 
More specifically, every particle (called a \emph{token} in this context) follows the flow of a vector field which depends on the empirical measure of all particles.
In turn, the \emph{continuity equation} governs the evolution of the empirical measure of particles, whose long-time behavior is of crucial interest.  
In this regard, our main observation is that particles tend to cluster under these dynamics. This phenomenon is of particular relevance in learning tasks such as \emph{next-token prediction}, wherein one seeks to map a given input sequence (i.e., a sentence) of $n$ tokens (i.e., words) onto a given next token. In this case, the output measure encodes the probability distribution of the next token, and its clustering indicates a small number of possible outcomes. Our results on a simplified but insightful toy model indicate that the limiting distribution is
actually a point mass, leaving no room for diversity or randomness, which is at odds with practical observations. 
This apparent paradox is resolved by the existence of a long-time metastable state. As can be seen from Figures \ref{fig: phase.diag.Id} and \ref{fig: random.QKV}, the Transformer flow appears to possess two different time-scales: in a first phase, tokens quickly form a few clusters, while in a second (much slower) phase, through the process of pairwise merging of clusters, all tokens finally collapse to a single point. This appears to corroborate behavior observed empirically in trained Transformer models, which goes under the names \emph{token uniformity}, \emph{over-smoothing} \cite{chen2022principle, ru2023token, guo2023contranorm, wu2024demystifying, wu2024role, dovonon2024setting, scholkemper2024residual}, or \emph{rank-collapse} \cite{dong2021attention, feng2022rank, noci2022signal, joudaki2023impact, zhao2023are, zhai2023stabilizing, noci2024shaped, bao2024self, cowsik2024geometric}; see also Figure~\ref{fig: albert}.

The goal of this manuscript is twofold. On the one hand, we aim to provide a general and accessible framework to study Transformers from a mathematical perspective. In particular, the structure of these interacting particle systems enables concrete connections to established mathematical topics, such as nonlinear transport equations, Wasserstein gradient flows, collective behavior models, and optimal configurations of points on spheres, among others. On the other hand, we describe several promising research directions with a particular focus on the long-time clustering phenomenon. The main results we present are new, and we also provide what we believe are interesting open problems throughout the paper.

The rest of the paper is arranged in three parts.

\subsubsection*{Part~\ref{part: modeling}: Modeling}

We define an idealized model of the Transformer architecture that captures two of the main characteristics of transformers: \emph{self-attention} and \emph{layer-normalization}. Following a perspective put forward in classical architectures such as ResNets~\cite{chen2018neural,weinan2017proposal,haber2017stable}, we view the successive layers of a neural network as time discretizations of a dynamical system of interacting particles. Layer-normalization effectively constrains particles to evolve on the unit sphere $\S$, whereas self-attention is the particular nonlinear coupling of the particles done through the empirical measure
(Section~\ref{sec: ips}). In turn, the empirical measure evolves according to the continuity partial differential equation (Section~\ref{s:WGF}). Even after significant simplifications, this toy model retains macroscopic characteristics of trained Transformers, namely clustering. We also introduce a simpler surrogate model for self-attention which has the convenient property of being a Wasserstein gradient flow~\cite{ambrosio2005gradient} for an energy functional that is well-studied in the context of optimal configurations of points on the sphere and sheds a complementary light of the source of clustering.

\subsubsection*{Part~\ref{part: clustering}: Clustering}
In this part we recall existing and establish new mathematical results that indicate clustering of tokens in the long-time limit. (See Figure \ref{fig: clustering.graph} for a summary.) Our first results (Theorem \ref{thm: beta.tiny} in Section \ref{sec: temperature} and Theorem \ref{thm: beta.interval} in Section \ref{sec: large.beta}) are in extreme regimes of a temperature parameter $\beta^{-1}$ that appears in the equation. We then move to the high-dimensional case in Section \ref{sec: high.d}, where we begin by recalling Theorem \ref{thm: boumal}---a result of \cite{markdahl2017almost}, recently revisited in \cite{criscitiello2024synchronization}---which entails long-time clustering at any temperature when $d\geq 3$. We provide an exponential rate of convergence in Theorem~\ref{thm: d.infty} when $d\geq n$---
here $n$ denotes the number of particles---. 
We complement this result with an even more precise characterization of the rate of contraction of particles into a cluster. Namely, we describe the histogram of all inter-particle distances, and the time at which all particles are already nearly clustered (Theorem~\ref{thm: phase.transition.curve}). 

\subsubsection*{Part~\ref{sec: beyond}: Further questions}
We propose potential avenues for future research, largely in the form of open questions substantiated by numerical observations. We first focus on the case $d=2$ (Section~\ref{sec: circle}) and elicit a link to Kuramoto oscillators. 
We briefly show in Section~\ref{sec:repulsive} how a simple and natural modification of our model leads to non-trivial questions related to optimal configurations on the sphere. The remaining sections explore interacting particle systems that allow for parameter tuning of the Transformers architectures, a key feature of practical implementations. 

\part{Modeling} \label{part: modeling}

We begin this part by presenting the mathematical model for a Transformer in Section~\ref{sec: ips}. Throughout the paper we focus on a simplified version that includes the self-attention mechanism as well as layer normalization (Section ~\ref{s:interacting}), but excludes additional feed-forward layers commonly used in practice (Section~\ref{sec: toward.complete}). This nonetheless  leads to a highly nonlinear mean-field interacting particle system. In turn, this system implements, via the continuity equation, a flow map from initial to terminal distributions of particles that we present in Section~\ref{s:WGF}.

\section{Interacting particle system}  \label{sec: ips}

Before writing down the Transformer model, we first provide a brief preliminary discussion to clarify our methodological choice of treating the discrete layer indices in the model as a continuous time variable in Section~\ref{sec: resnets}, echoing previous work on ResNets.
The specifics of the toy Transformer model are presented in Section~\ref{s:interacting}, and a complete model is presented in Section~\ref{sec: toward.complete}.

\subsection{Residual neural networks} \label{sec: resnets}

One of the standard paradigms in machine learning is that of supervised learning, where one aims to approximate an unknown function $f:\R^d\to\R^m$, from data, $\mathscr{D} = \{x^{(i)}, f(x^{(i)})\}_{i\in[N]}$ say. 
This is typically done by choosing one among an arsenal of possible parametric models, whose parameters are then fit to the data by means of minimizing some user-specified cost. With the advent of graphical processing units (GPUs) in the realm of computer vision \cite{krizhevsky2012imagenet}, large neural networks have become computationally accessible, resulting in their popularity as one such parametric model.

Within the class of neural networks, \emph{residual neural networks} (ResNets for short) have become a staple DNN architecture
since their introduction in \cite{he2016identity}.
In their most basic form, ResNets approximate a function $f$ at $x \in\R^d$ through a sequence of affine transformations, a component-wise nonlinearity, and skip connections. Put in formulae, 
\begin{equation} \label{eq: resnet}
    \begin{dcases}
        x(k+1) = x(k) + w(k)\sigma(a(k)x(k)+b(k)) &\text{ for } k\in\{0, \ldots, L-1\}\\
        x(0) =x \,.
    \end{dcases}
\end{equation}
Here $\sigma$ is a Lipschitz function applied component-wise to the input vector, while $\theta(\cdot) = (w(\cdot), a(\cdot), b(\cdot))\in\R^{d\times d}\times\R^{d\times d}\times\R^{d}$ are trainable parameters. We say that~\eqref{eq: resnet} has $L\geq1$ hidden layers (or $L+1$ layers, or is of depth $L$).
The output $x_i(L)$ serves as a \emph{representation} of the input $x^{(i)}$ that is then fed into a last layer that corresponds to a classical learning task such as linear or logistic regression in order to predict the label $f(x^{(i)})$.
One can also devise generalizations of~\eqref{eq: resnet}, for instance in which matrix-vector multiplications are replaced by discrete convolutions in order to reflect other common architectures such as convolutional neural networks~\cite[Chapter~9]{GooBenCou16}.
The key element that all these models share is that they all have \emph{skip-connections}, namely, the previous step $x_i(k)$ appears explicitly in the iteration for the next one.

One upside of~\eqref{eq: resnet}, which is the one of interest to our narrative, is that the layer index $k$ can naturally be interpreted as a time variable, motivating the continuous-time analogue
\begin{equation} \label{eq: neural.ode}
    \begin{dcases}
        \dot{x}(t)=w(t)\sigma(a(t)x(t)+b(t)) &\text{ for } t\in(0, T)\\
        x(0) = x.
    \end{dcases}
\end{equation}
These are dubbed \emph{neural ordinary differential equations} (neural ODEs). Since their introduction in \cite{chen2018neural,weinan2017proposal,haber2017stable}, neural ODEs have emerged as a flexible mathematical framework to implement and study ResNets. We use this abstraction here for convenience, as dynamical systems are more naturally defined in continuous time. Although this approach is helpful for exposition, we emphasize that all the results presented can also be derived in discrete time using the same techniques.

\subsection{The interacting particle system} \label{s:interacting} 
Unlike ResNets, which operate on a single input vector $x(0)\in\mathbb{R}^d$ at a time, Transformers operate on a sequence of vectors of length $n$, namely, $(x_i(0))_{i\in[n]}\in(\R^d)^n$. This perspective is rooted in natural language processing, where each vector represents a word, and the entire sequence a sentence or a paragraph. In particular, it allows to process words together with their context.  A sequence element $x_i(0)\in\mathbb{R}^d$ is called a \emph{token}, and the entire sequence $(x_i(0))_{i\in[n]}$ a \emph{prompt}. We use the words ``token'' and ``particle'' interchangeably.

All practical implementations of Transformers make use of \emph{layer normalization} \cite{ba2016layer}, most commonly in the form of \emph{root mean square (RMS) normalization} \cite{zhang2019root}. 
(See lines 105--116 in \cite{mistralai} for instance.)
RMS normalization takes the sequence of tokens output after each layer, divides each token by its Euclidean norm\footnote{The original form instead consisted in an entry-wise standardization of every token, namely subtracting the mean of all tokens, then dividing by the standard deviation.} (plus a small parameter to avoid a possible division by zero), and multiplies the result by a trained diagonal matrix. This process aims to ensure that tokens do not diverge, thus avoiding rounding errors and overflow. The result is an evolution on a time-varying axis-aligned ellipsoid. To simplify the presentation and obtain insight and precise results, we assume that the trained diagonal matrix is equal to the idenitity, so that we work on the unit sphere $\S$ throughout. This simplification is justified empirically in the trained {\sf ALBERT XLarge v2} model described in Figure~\ref{fig: albert}, wherein this diagonal matrix is constant over all layers, with entries of mean value equal to $0.44$ and standard deviation equal to $0.008$. 
Furthermore, current text embeddings provided by OpenAI, namely {\tt text-embedding-3-small} and {\tt text-embedding-3-large}, return norm-one embedding vectors. While we can only speculate as to the actual implementation of these models, this is an indication that layer normalization could be as simple as the one used in our toy model.

A Transformer is then a flow map on $(\S)^n$: the input sequence  $(x_i(0))_{i\in[n]}\in(\S)^n$ is an initial condition which is evolved through the dynamics
\begin{equation} \label{eq: transformerSd.QKV}
    \dot{x}_i(t) =\proj_{x_i(t)} \left(\frac{1}{Z_{\beta,i}(t)}\sum_{j=1}^n e^{\beta\langle Q(t)x_i(t), K(t)x_j(t)\rangle}V(t)x_j(t)\right)
\end{equation}
for all $i\in[n]$ and $t\geq0$. (We refer the reader to \eqref{eq: albert} and Section \ref{sec: toward.complete} for the full model.)
Here and henceforth $$\proj_{x}y=y-\langle x,y\rangle x$$ 
denotes the projection of $y\in\R^d$ onto $\Tan_x\S$. The \emph{partition function} $Z_{\beta, i}(t)>0$ reads
\begin{equation} \label{eq: SA.QKV}
Z_{\beta, i}(t) = \sum_{k=1}^n e^{\beta\langle Q(t)x_i(t), K(t)x_k(t)\rangle}. 
\end{equation}
where $(Q(\cdot),K(\cdot),V(\cdot))$ (standing for Query, Key, and Value) are parameter matrices learned from data, and $\beta>0$ a fixed number intrinsic to the model\footnote{In practical implementations the inner products are multiplied by $d^{-\frac12}$, which along with the typical magnitude of $Q, K$ leads to the appearance of $\beta$.}, which, can be seen as an inverse temperature using terminology from statistical physics. Note that $Q(\cdot), K(\cdot)$ need not be square. 

The interacting particle system~\eqref{eq: transformerSd.QKV}--\eqref{eq: SA.QKV}, a simplified version of which was first written down in \cite{lu2019understanding, dutta2021redesigning, sander2022sinkformers}, importantly contains the true novelty that Transformers carry with regard to other models: the \emph{self-attention mechanism}
\begin{equation} \label{eq:P}
 A_{ij}(t):=\frac{e^{\beta\langle Q(t)x_i(t), K(t)x_j(t)\rangle}}{Z_{\beta, i}(t)}\,, \hspace{1cm} (i,j)\in[n]^2,
\end{equation}
which is the nonlinear coupling mechanism in the interacting particle system.
The $n \times n$ stochastic matrix $A(t)$ (rows are probability vectors) is called the \emph{self-attention matrix}. The wording \emph{attention} stems from the fact that $A_{ij}(t)$ captures the attention given by particle $i$ to particle $j$ relatively to all particles $\ell\in[n]$. In particular, a particle pays attention to its neighbors where neighborhoods are dictated by the matrices $Q(t)$ and $K(t)$ in~\eqref{eq:P}. 

It has been observed numerically that the probability vectors $(A_{ij}(\cdot))_{j\in[n]}$ (for $i\in[n]$) in a trained self-attention matrix exhibit behavior related to the syntactic and semantic structure of sentences in natural language processing tasks (see \cite[Figures 3-5]{vaswani2017attention}). 
To illustrate our conclusions as pedagogically as possible, 
throughout the paper we focus on a simplified scenario wherein the parameter matrices $(Q, K, V)$ are constant, and even all equal to the identity unless stated otherwise, resulting in the dynamics 
\begin{equation} \label{SA}
    \dot{x}_i(t) =\proj_{x_i(t)} \left(\frac{1}{Z_{\beta,i}(t)}\sum_{j=1}^n e^{\beta\langle x_i(t), x_j(t)\rangle}x_j(t)\right) \tag{SA}
\end{equation}
for $i\in[n]$ and $t\geq0$ and, as before
\begin{equation} \label{eq: SA}
Z_{\beta, i}(t) = \sum_{k=1}^n e^{\beta\langle x_i(t), x_k(t)\rangle}.
\end{equation}
The appearance of clusters in Transformers is actually corroborated by numerical experiments with trained models (see Figure~\ref{fig: albert}). 
While we focus on a much simplified model, numerical evidence shows that the clustering phenomenon looks qualitatively the same in the cases $Q=K=V=\lambda I_d$, $\lambda>0$, and generic random $(Q,K,V)$ (see Figures~\ref{fig: phase.diag.Id} and~\ref{fig: random.QKV} for instance). We refer the interested reader directly to Sections~\ref{sec: temperature}, \ref{sec: large.beta}, \ref{sec: high.d}; here, we continue the presentation on the modeling of different mechanisms appearing in the Transformer architecture.

\begin{figure}[h!]
    \centering
    \includegraphics[scale=0.25]{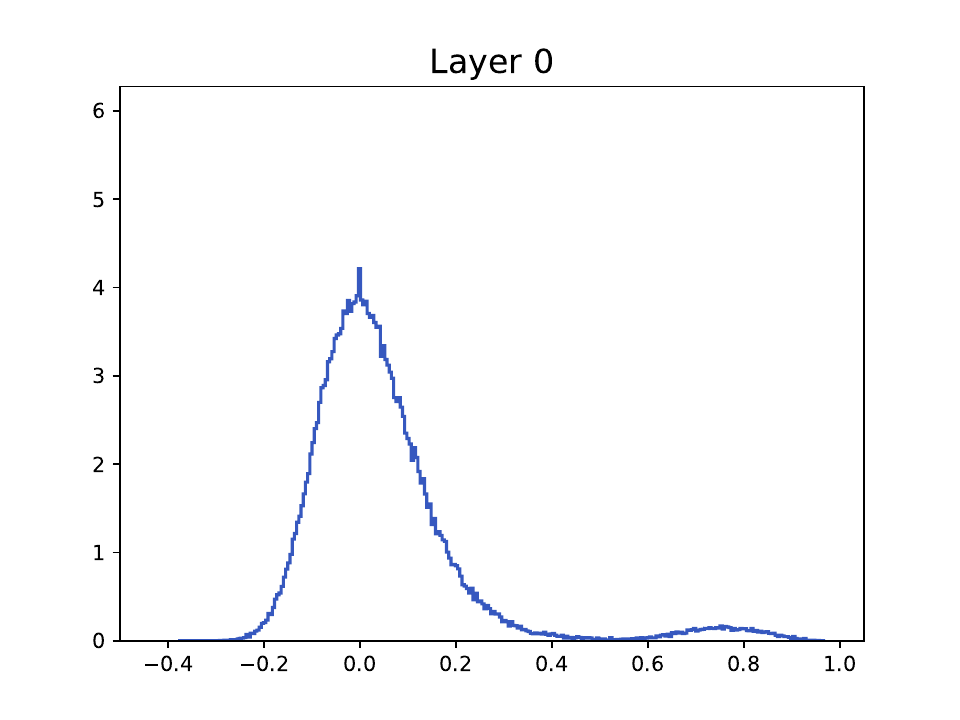}
    \includegraphics[scale=0.25]{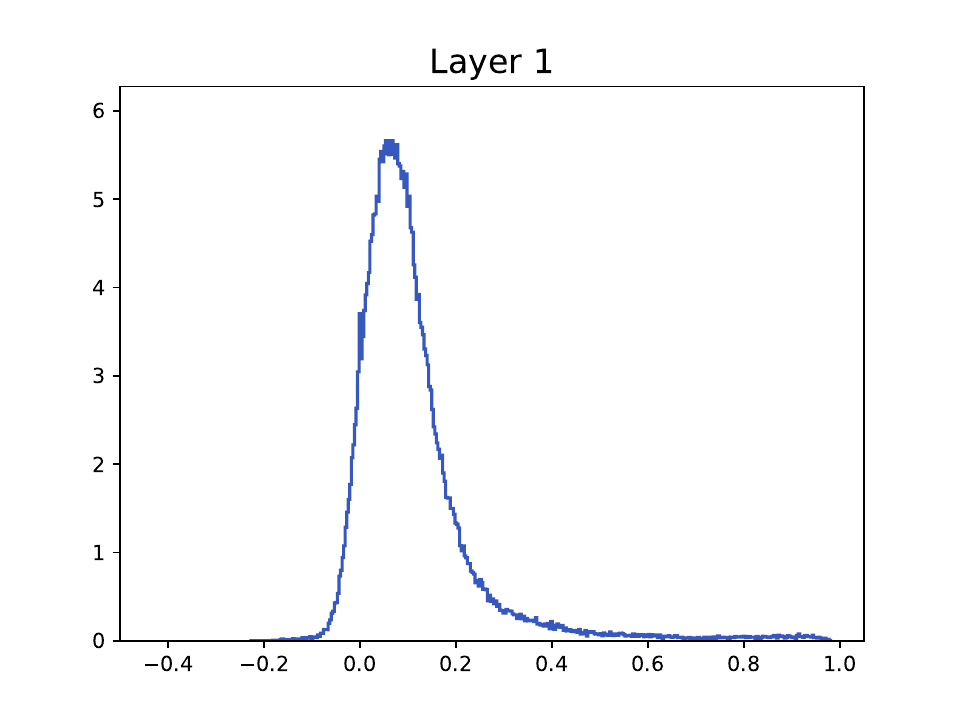}
    \includegraphics[scale=0.25]{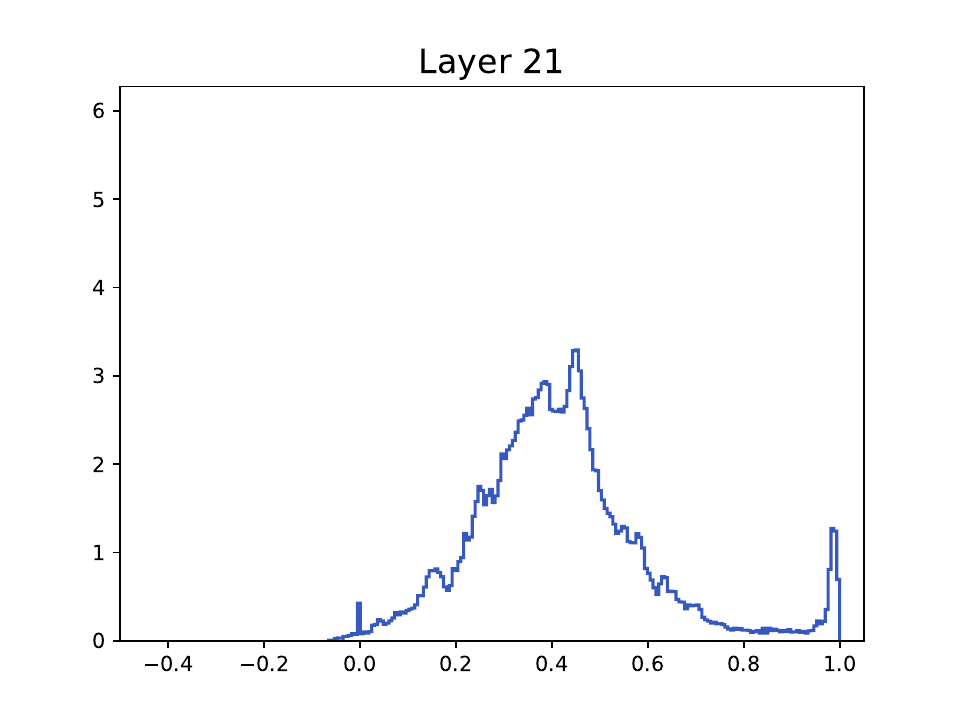}
    \includegraphics[scale=0.25]{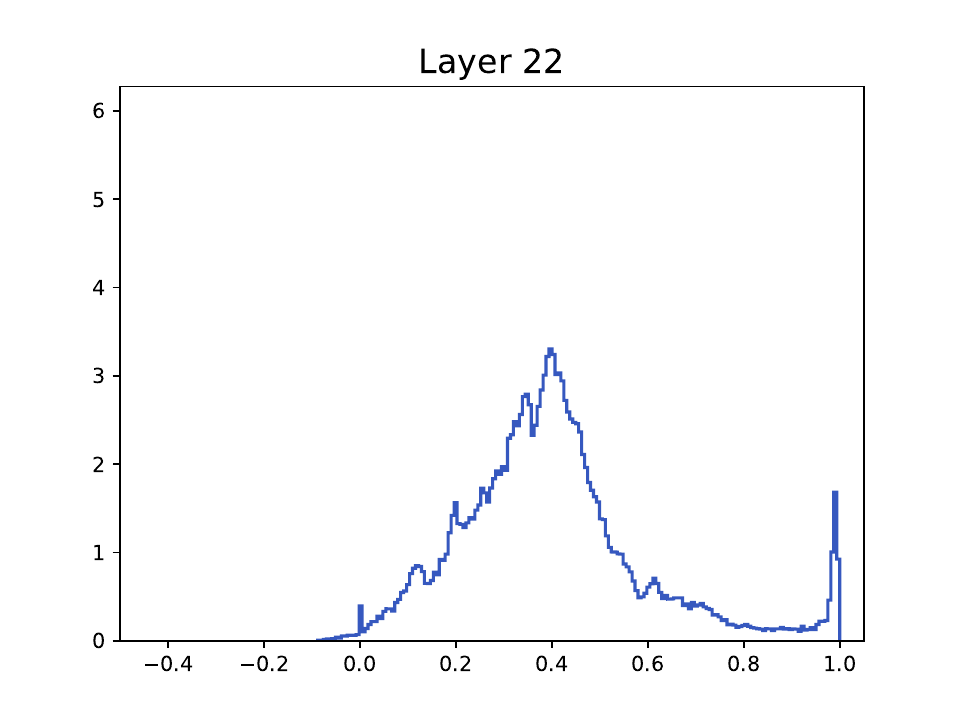}
    \includegraphics[scale=0.25]{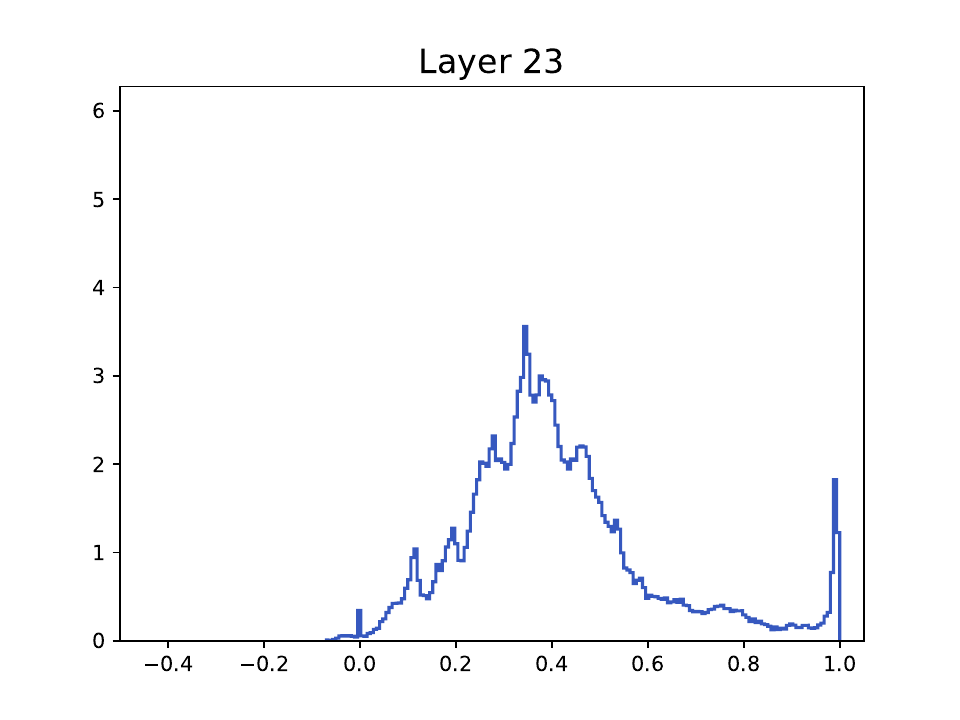}
    \includegraphics[scale=0.25]{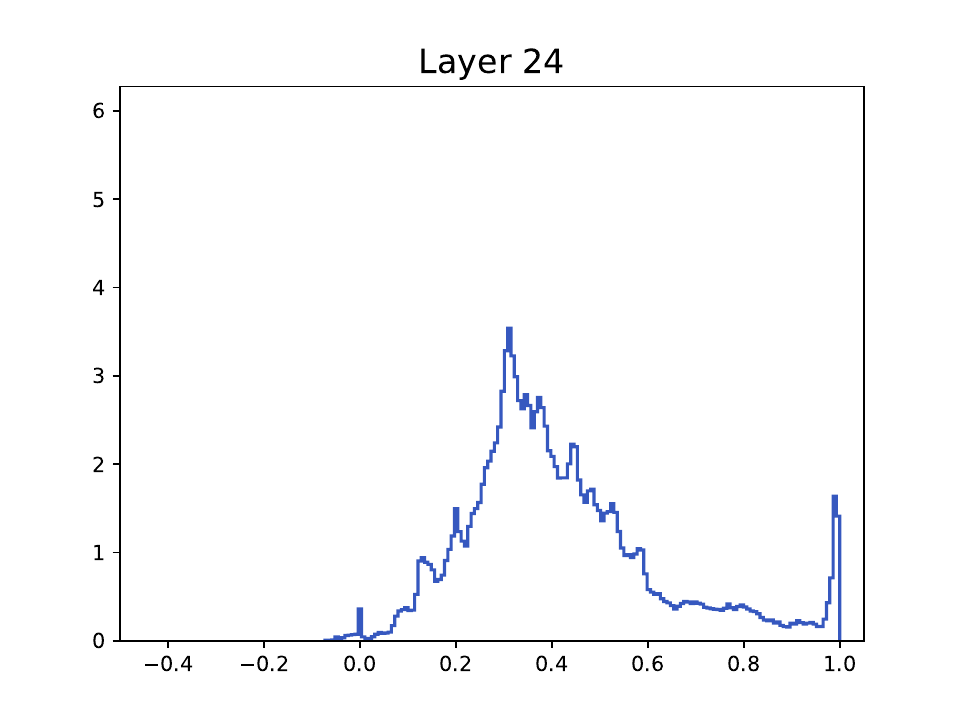}
    \caption{Histogram of $\{\langle x_i(t),x_j(t)\rangle\}_{(i,j)\in[n]^2, i\neq j}$ at different layers $t$ in the context of the trained {\sf ALBERT XLarge v2} model (\cite{lanalbert} and \href{https://huggingface.co/albert-xlarge-v2}{\tt https://huggingface.co/albert-xlarge-v2})\protect\footnotemark, which has constant parameter matrices. Here we randomly selected a single prompt, which in this context is a paragraph from a random Wikipedia entry, and then generate the histogram of the pairwise inner products. We see the progressive emergence of clusters all the way to the $24$th (and last) hidden layer (top), as evidenced by the growing mass at $1$. If the number of layers is increased, up to 48 say, the clustering is further enhanced (bottom).
    }
    \includegraphics[scale=0.25]{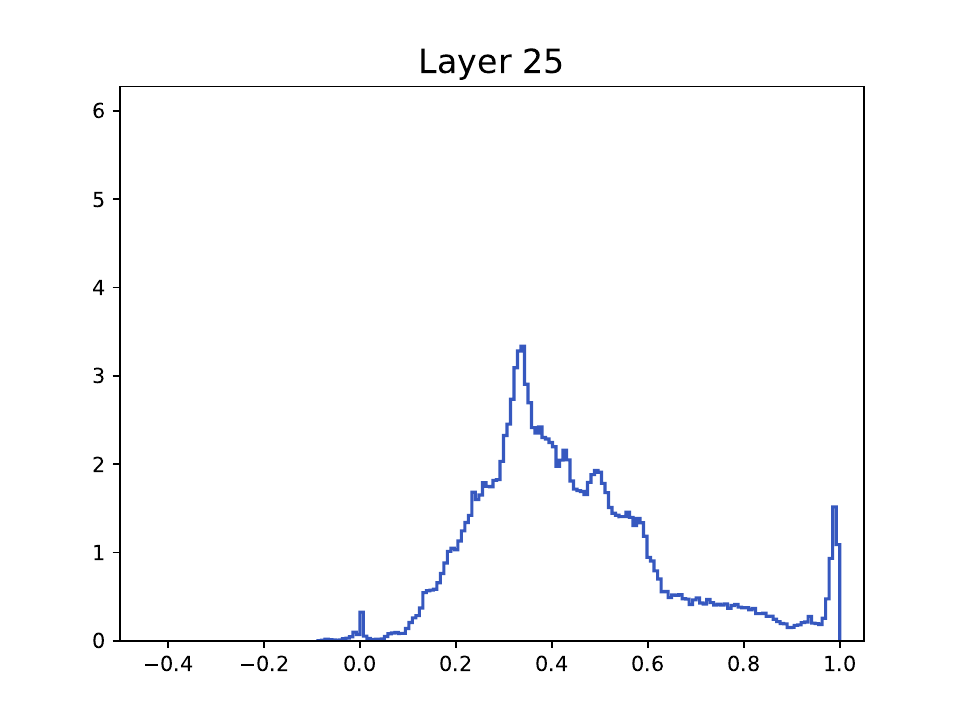}
    \includegraphics[scale=0.25]{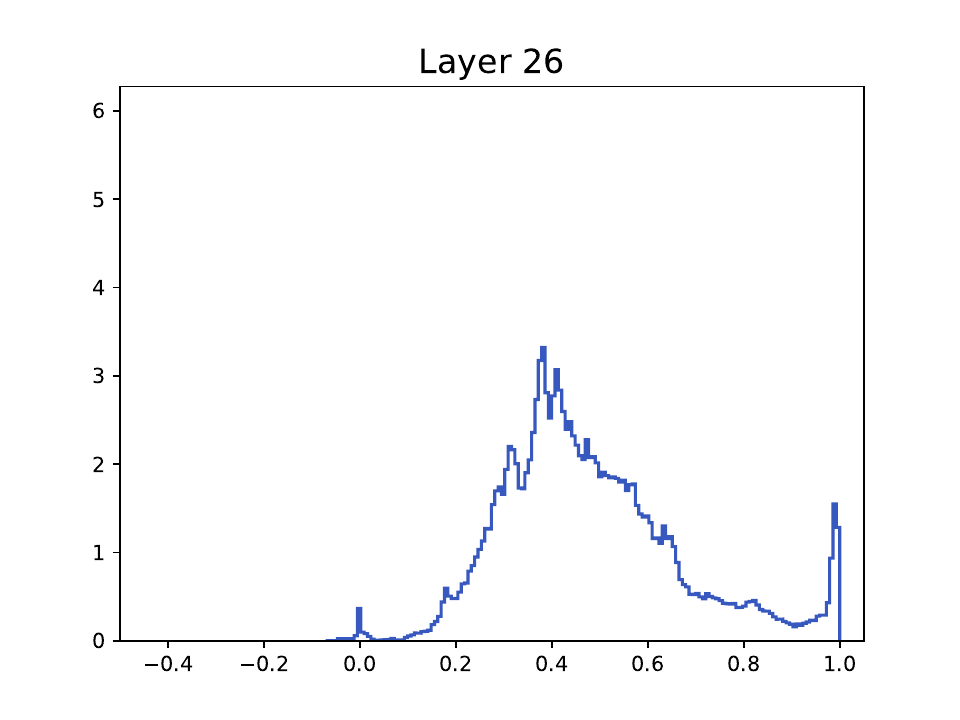}
    \includegraphics[scale=0.25]{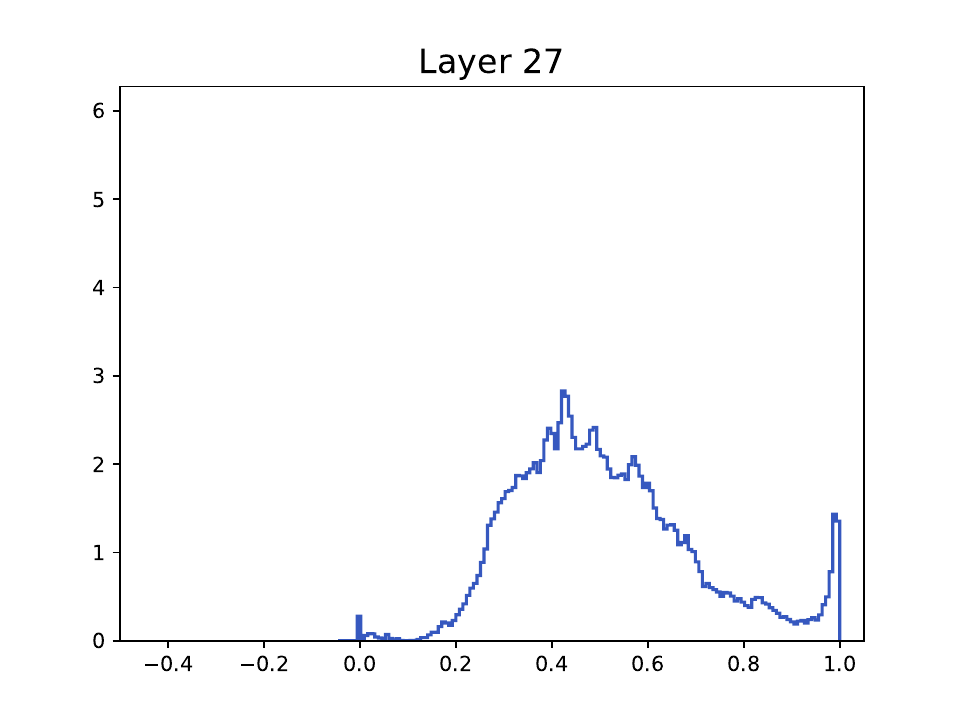}
    \includegraphics[scale=0.25]{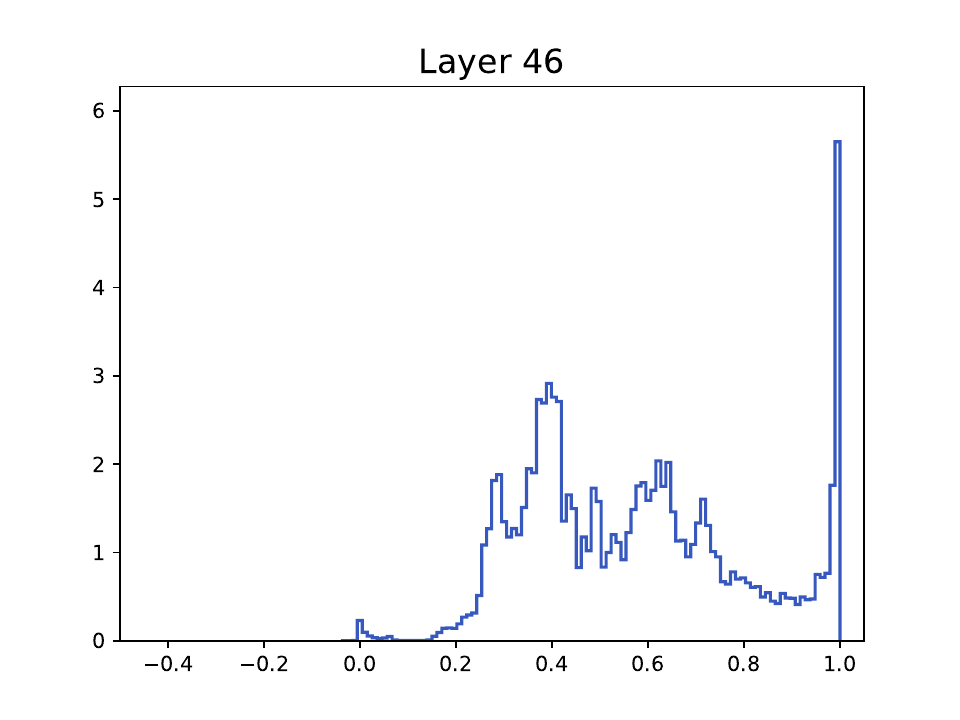}
    \includegraphics[scale=0.25]{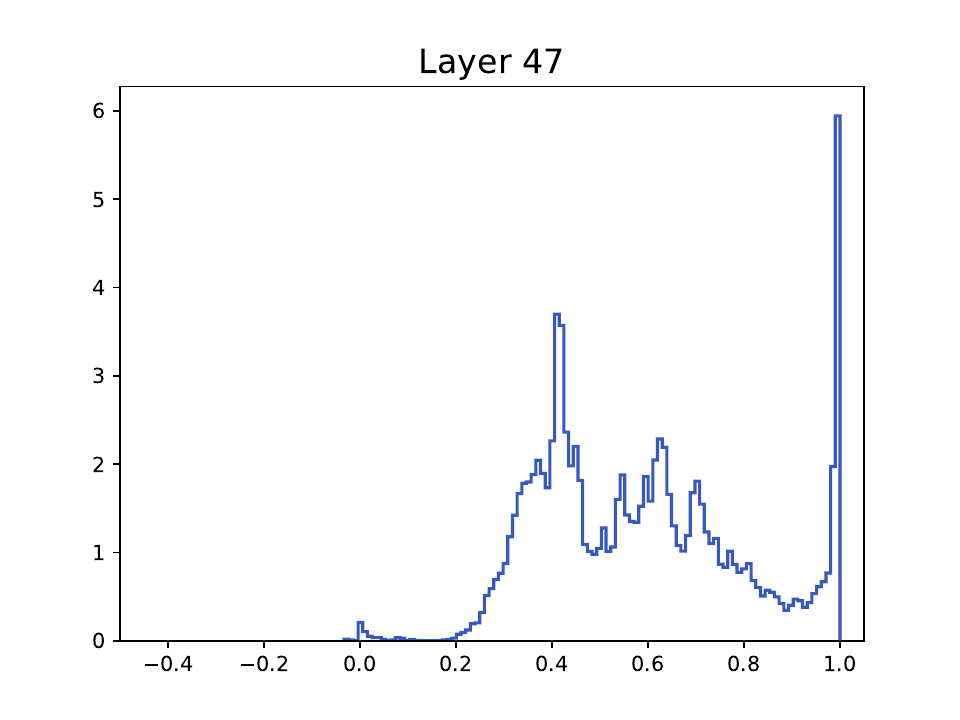}
    \includegraphics[scale=0.25]{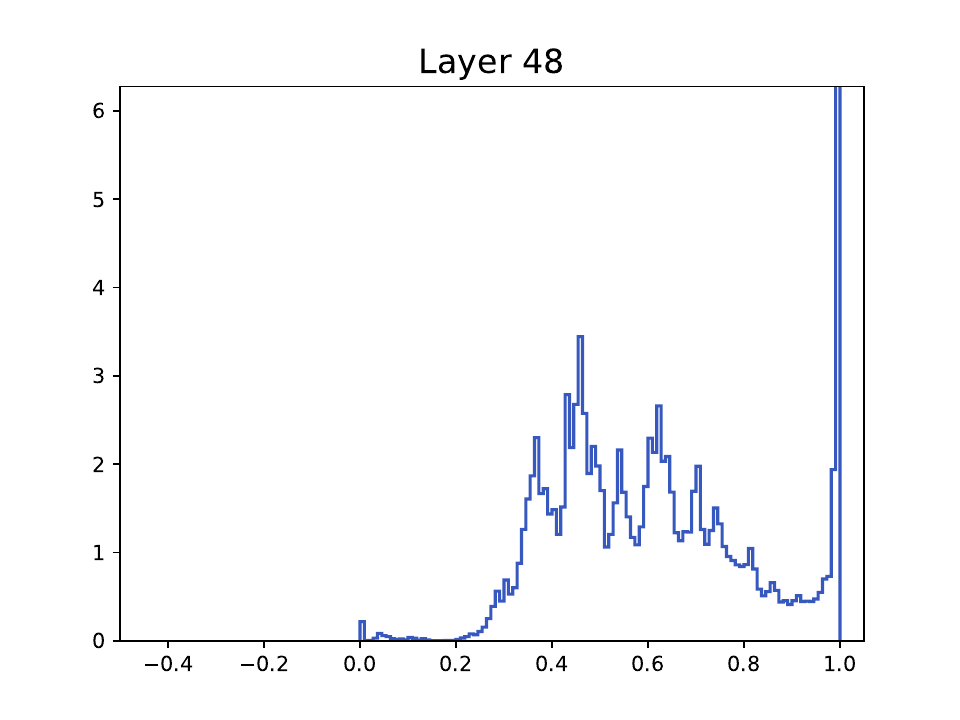}
    \label{fig: albert}
\end{figure}

\footnotetext{\textsf{ALBERT XLarge v2} contains all the mechanisms described in this text, namely, is a system of the form~\eqref{eq: albert} {(or rather the discretization thereof)} with $12$ or $24$ layers. The sequence length $n$ is of the order of $512$ or $1024$, and the tokens evolve in $\R^{4096}$. The dynamics are therefore high-dimensional, lending weight to assumptions made later on (Section~\ref{sec: high.d}).
}

\begin{remark}[Collective behavior]
The dynamics~\eqref{SA} have a strong resemblance to the vast literature on nonlinear systems arising in the modeling of collective behavior. In addition to the connection to the classical Kuramoto model describing synchronization of oscillators \cite{kuramoto1975self, acebron2005kuramoto} (made evident
in Section~\ref{s:kuramoto}), Transformers are perhaps most similar to the Krause model \cite{krause2000discrete}
$$
\dot{x}_i(t) = \sum_{j=1}^n a_{ij}(x_j(t)-x_i(t)), \hspace{1cm} a_{ij} = \frac{\phi(\|x_i-x_j\|^2)}{\sum_{k=1}^n \phi(\|x_i-x_k\|^2)}.
$$
which is non-symmetric in general ($a_{ij}\neq a_{ji}$), much like~\eqref{eq: transformerSd.QKV}. When $\phi$ is compactly supported, it has been shown in \cite{jabin2014clustering} that the particles $x_i(t)$ assemble in several clusters as $t\rightarrow +\infty$. Other related models include those of Vicsek \cite{vicsek1995novel}, Hegselmann-Krause \cite{rainer2002opinion} and Cucker-Smale \cite{cucker2007emergent}.  All these models exhibit a clustering behavior under various assumptions (see \cite{motsch2014heterophilious, tadmor2023swarming} and the references therein).  Yet, none of the opinion dynamics models discussed above contain parameters appearing within the nonlinearity as in~\eqref{SA}, {whilst set on the sphere}. 
\end{remark}

\begin{remark}[Permutation equivariance]
A function $f:(\S)^n\rightarrow (\S)^n$ is permutation equivariant if $f(\pi X) = \pi(f_1(X),\ldots,f_n(X))$ for any $X\in(\R^d)^n$ and for any permutation $\pi\in\mathbf{S}_n$ of $n$ elements. Otherwise put, if we permute the input $X$, then the output $f(X)$ is permuted in the same way.
Given $t>0$, the Transformer~\eqref{SA}, mapping $(x_i(0))_{i\in[n]} \mapsto (x_i(t))_{i\in[n]}$, is  permutation-equivariant on $(\S)^n$. 
\end{remark}

\subsection{Toward the complete Transformer} \label{sec: toward.complete}
There are a couple of additional mechanisms used in practical implementations that we do not explicitly address or use in this study.  The mathematical analysis of these mechanisms remains open.

\subsubsection{Multi-headed attention} 
Practical implementations spread out the computation of the self-attention mechanism at every $t$ through a sequence of \emph{heads}, leading to the so-called \emph{multi-headed self attention}. This consists in considering the following modification to~\eqref{SA}:
\begin{equation} \label{eq: multihead}
\dot{x}_i(t) = \proj_{x_i(t)}\left(\sum_{h=1}^H\sum_{j=1}^n \frac{e^{\beta\langle Q_h(t) x_i(t), K_h(t)x_j(t)\rangle}}{Z_{\beta, i, h}(t)} V_h(t) x_j(t)\right)
\end{equation}
where $Z_{\beta, i, h}(t)$ is defined as in~\eqref{eq: SA.QKV} for the matrices $Q_h(t)$ and $K_h(t)$. The integer $H\geq1$ is called the number of heads\footnote{In practical implementations, $H$ is a divisor of $d$, and the query and key matrices $Q_h(t)$ and $K_h(t)$  are $\frac{d}{H}\times d$ rectangular. This allows for further parallelization of computations and increased expressiveness. For mathematical purposes, we focus on working with arbitrary integers $H$, and square weight matrices $Q_h$ and $K_h$.}.

The introduction of multiple heads also allows for drawing some interesting parallels with the literature on feed-forward neural networks, such as ResNets~\eqref{eq: resnet}. Considerable effort has been expended to understand $2$-layer neural networks with width tending to $+\infty$; more precisely, consider~\eqref{eq: resnet} with $L=1$, $w\in\mathbb{R}^{d\times \ell}$, $a\in\mathbb{R}^{\ell\times d}$, and $\ell\to+\infty$.
The infinite-width limit for Transformers is in fact very natural, as it is realized by stacking an arbitrary large number of heads: $H\to +\infty$.
Hence, the same questions as for $1$-hidden layer neural networks may be asked: for instance {the question of universal approximation}, in the vein of  \cite{cybenko1989approximation, barron1993universal}.

\subsubsection{Feed-forward layers}\label{sec:feedforward} The complete Transformer dynamics combines all of the above mechanisms with a feed-forward layer; in the discrete-time context, this is actually done by using a Lie-Trotter splitting scheme for  
\begin{equation} \label{eq: albert}
    \dot{x}_i(t) = \proj_{x_i(t)}\left(\sum_{h=1}^H \sum_{j=1}^n \frac{e^{\beta\langle Q_h(t)x_i(t), K_h(t)x_j(t)\rangle}}{Z_{\beta, i, h}(t)}V_h(t)x_j(t)+w(t)\sigma(a(t)x_i(t)+b(t))\right),
\end{equation}
where $w(t), a(t), b(t)$ and $\sigma$ are all as in~\eqref{eq: neural.ode}. The interested reader is referred to \cite{lu2019understanding, phuong2022formal}
for all the details\footnote{and lines 123--130 in \cite{gpt2} for some relevant source code.}. 
The feed-forward layers (convolutional layers can alternatively be considered) are of critical importance in applications and drive the existing results on approximation properties of Transformers~\cite{yun2019transformers}. 
Nevertheless, the analysis of this model is beyond the scope of our current methods.

\section{Measure to measure flow map} \label{s:WGF}

An important aspect of Transformers is that they are not hard-wired to take into account the order of the input sequence, contrary to other architectures used for natural language processing such as recurrent neural networks. In these applications, each token $x_i(0)\in\R^d$ contains not only a word embedding $w_i\in\R^d$, but also an additional \emph{positional encoding} (we postpone a discussion to Remark~\ref{rem: pos.enc}) which allows tokens to also carry their position in the input sequence. Therefore, an input sequence is perfectly encoded as a \emph{set} of tokens $\{x_1(0), \ldots, x_n(0)\}$, or equivalently as the empirical measure of its constituent tokens $\frac{1}{n} \sum_{i=1}^n \delta_{x_i(0)}$. Recall that the output of a Transformer is also a probability measure, namely $\frac{1}{n} \sum_{i=1}^n \delta_{x_i(t)}$, albeit one that captures the likelihood of the next token. As a result, one can view Transformers as flow maps between probability measures\footnote{See \cite{de2019stochastic, vuckovic2020mathematical, zweig2021functional} for further related work on neural networks acting on probability measures.} on $\S$.
To describe this flow map, we appeal to the continuity equation, which governs precisely the evolution of the empirical measure of particles subject to dynamics. This perspective is already present in~\cite{sander2022sinkformers}, the only modification here being that we add the projection on the sphere arising from layer normalization. 

After introducing the continuity equation in Section \ref{sec: cont.eq}, we show that a particular interaction energy functional, which is maximized at any point mass, increases along solutions thereof in Section \ref{sec: interaction.energy}. Motivated by this monotonicity property, in Section \ref{sec: usa} we propose an illustrative modified model which has the nice property of being a Wasserstein gradient flow for this energy. 
Finally, in Section \ref{sec: new.gradient.flow}, we demonstrate that the original equation presented in Section \ref{sec: cont.eq} is itself a gradient flow for the same energy, upon changing the metric underlying the definition of the gradient.

\subsection{The continuity equation} \label{sec: cont.eq}

The vector field driving the evolution of a single particle in~\eqref{SA} clearly depends on all $n$ particles. In fact, one can equivalently rewrite the dynamics as 
\begin{equation} \label{eq: mean.field.ips}
    \dot{x}_i(t) = \mathcal{X}[\mu(t)](x_i(t))
\end{equation}
for all $i\in[n]$ and $t\geq0$, where 
$$
\mu(t,\cdot)=\frac{1}{n}\sum_{i=1}^n \delta_{x_i(t)}(\cdot)
$$ 
is the empirical measure, while the vector field $\mathcal{X}[\mu]:\S \to \Tan\S$ reads
\begin{equation} \label{eq: vfSd}
    \mathcal{X}[\mu](x) = \proj_{x}\left(\frac{1}{Z_{\beta,\mu}(x)}\int e^{\beta\langle x, y\rangle}y\diff\mu(y)\right)
\end{equation}
with
\begin{equation} \label{eq: partition.function}
 Z_{\beta,\mu}(x)=\int e^{\beta\langle x,y\rangle}\diff\mu(y).   
\end{equation} 
In other words,~\eqref{SA} is a \emph{mean-field interacting particle system}. The evolution of $\mu(t)$ is governed by the continuity equation\footnote{Unless stated otherwise, $\nabla$ and $\dive$ henceforth stand for the spherical gradient and divergence respectively, and all integrals are taken over $\S$.} 
\begin{equation} \label{eq: conteqSd}
    \begin{cases}
        \partial_t \mu + \dive(\mathcal{X}[\mu]\mu) = 0 &\text{ on } \R_{\geq0}\times\S\\
        \mu_{|t=0} = \mu(0) &\text{ on } \S
    \end{cases}
\end{equation}
satisfied in the sense of distributions.

\begin{remark}[Well-posedness]
    Global existence of weak, measure-valued solutions to~\eqref{eq: conteqSd} for arbitrary initial conditions $\mu(0)\in\mathcal{P}(\S)$ follows by arguing exactly as in \cite[Lemma A.3]{geshkovski2023emergence}. Here and henceforth, $\mathcal{P}(\S)$ stands for the set of Borel probability measures on $\S$.
\end{remark}

\begin{remark}[Positional encoding] \label{rem: pos.enc}
For the sake of completeness, in this brief segue we discuss a few ways to perform positional encoding. The original one, proposed in \cite{vaswani2017attention}, proceeds as follows. Consider a sequence $(w_i)_{i\in[n]}\in(\R^d)^n$ of word embeddings. Then the positional encoding $p_i\in\R^d$ of the $i$-th word embedding is defined as $(p_i)_{2k} = \sin(\frac{i}{M^{2k/d}})$ and $(p_i)_{2k+1} = \cos(\frac{i}{M^{2k/d}})$ for $k\in[d/2-1]$, and $M>0$ is a user-defined scalar equal to $10^4$ in \cite{vaswani2017attention}. The $i$-th token is then defined as the addition: $x_i(0) = w_i+p_i$. Subsequent works simply use either a random\footnote{This rationale supports the assumption that initial tokens are drawn at random, which we make use of later on.} positional encoding (i.e., $p_i$ is just some random vector) or a trained transformation.
The addition can also be replaced with a concatenation $x_i(0) = [w_i; p_i]$. (See \cite{lin2022survey, xiao2023introduction} for  details.)
\end{remark}

\begin{remark}[Mean field limit]
Although the analysis in this paper is focused on the flow of the empirical measure, one can also consider~\eqref{eq: conteqSd} for arbitrary initial probability measures $\mu(0)\in\mathcal{P}(\S)$.
Both views can be linked through a mean-field limit-type result, which can be shown by making use of the Lipschitz nature of the vector field $\mathcal{X}[\mu]$. 
The argument is classical and dates back at least to the work of Dobrushin~\cite{dobrushin1979vlasov}. 
Consider an initial empirical measure $\mu_n(0)=\frac1n\sum_{i=1}^n \delta_{x_i(0)}$, and suppose that the points $x_i(0)$ are such that $\lim_{n\to+\infty} W_1(\mu_n(0),\mu(0))=0$ for some probability measure $\mu(0)\in\mathcal{P}(\S)$. (Here $W_1$ denotes 1-Wasserstein distance---see \cite{villani2009optimal} for definitions---.)
Consider the solutions $\mu_n(t)$ and $\mu(t)$ to~\eqref{eq: conteqSd} with initial data $\mu_n(0)$ and $\mu(0)$ respectively. Dobrushin's argument is then centered around the estimate
\begin{equation*} 
     W_1(\mu_n(t),\mu(t))\leq e^{O(1)|t|}W_1(\mu_n(0),\mu(0))
\end{equation*}
for any $t\in\R$, which in the case of~\eqref{eq: conteqSd} can be shown without much difficulty (see \cite[Chapter 4, Section~1]{villani2001limite} or \cite[Section~1.4.2]{golse2016dynamics}). This elementary mean-field limit result has a couple of caveats. First, the time-dependence is exponential. Second, if one assumes that the points $x_i(0)$ are sampled i.i.d. according to $\mu_0$, then $W_1(\mu_n(0),\mu(0))$  converges to zero at rate $n^{-\frac1{d-1}}$ \cite{Dud69,boissard2014mean}, which deteriorates quickly when $d$ grows.
Dimension-free convergence has been established in some cases, for instance by replacing the Wasserstein distance with a more careful choice of metric as in \cite{han2023class, lacker2023hierarchies} or more generally in \cite{SriFukGre12}. 
Similarly, the exponential time-dependence might also be improved, as recent works in the context of flows governed by Riesz/Coulomb singular kernels, with diffusion, can attest \cite{rosenzweig2023global, guillin2021uniform} (see \cite{lacker2023sharp} for a result in the smooth kernel case). 
We do not address this question in further detail here. For more references on this well-established topic, the reader is referred to \cite{villani2001limite, golse2016dynamics, serfaty2020mean} and the references therein.
\end{remark}

\subsection{The interaction energy} \label{sec: interaction.energy}
One can naturally ask whether the evolution in~\eqref{eq: conteqSd} admits some quantities which are monotonic when evaluated along the flow. As it turns out, the \emph{interaction} \emph{energy}
\begin{equation} \label{eq: interaction.energy}
\mathsf{E}_\beta[\mu] = \frac{1}{2\beta}\iint e^{\beta\langle x, x'\rangle}\diff\mu(x)\diff\mu(x')
\end{equation}
is one such quantity. Indeed,
\begin{align} \label{eq: dissipation.softmax}
    \frac{\diff}{\diff t}\mathsf{E}_\beta[\mu(t)] &= \iint\beta^{-1}e^{\beta\langle x,x'\rangle}\diff\partial_t\mu(t,x)\diff\mu(t,x')\nonumber\\
    &=\int \mathcal{X}[\mu(t)](x)\cdot\int \nabla \left(\beta^{-1}e^{\beta\langle x,x'\rangle}\right)\diff\mu(t,x')\diff\mu(t,x)\nonumber\\
    &=\int \Big\|\mathcal{X}[\mu(t)](x)\Big\|^2 Z_{\beta,\mu(t)}(x)\diff\mu(t,x)
\end{align}
for any $t\geq0$ by using integration by parts. Recalling the definition of $Z_{\beta,\mu}(x)$ in~\eqref{eq: partition.function}, we see that $e^{-\beta}\leq Z_{\beta,\mu}(x)\leq e^{\beta}$ for all $x\in\S$. The identity~\eqref{eq: dissipation.softmax} therefore indicates that $\mathsf{E}_\beta$ increases along trajectories of~\eqref{eq: conteqSd}. (Similarly, should $V=-I_d$, the energy $\mathsf{E}_\beta$ would decrease along trajectories.) This begs the question of characterizing the global minima and maxima of $\mathsf{E}_\beta$, which is the goal of the following result.

\begin{proposition} \label{prop: existence.uniqueness.energy} 
    Let $\beta>0$ and $d\geq 2$. The unique global minimizer of $\mathsf{E}_\beta$ over $\mathcal{P}(\S)$ is the uniform measure\footnote{That is, the Lebesgue measure on $\S$, normalized to be a probability measure.} $\sigma_d$. Any global maximizer of $\mathsf{E}_\beta$ over $\mathcal{P}(\S)$ is a Dirac mass $\delta_{x^*}$ centered at some point $x^*\in\S$. 
\end{proposition}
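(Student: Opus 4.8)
The plan is to treat the two assertions separately: the statement about maximizers is elementary, while the statement about the unique minimizer rests on the positive-definiteness of the kernel $(x,x')\mapsto e^{\beta\langle x,x'\rangle}$ on $\S$.

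\textbf{Maximizers.} Since $\langle x,x'\rangle\leq 1$ on $\S\times\S$, with equality if and only if $x=x'$, one has $e^{\beta\langle x,x'\rangle}\leq e^{\beta}$ pointwise, hence $\mathsf{E}_\beta[\mu]\leq \tfrac{e^{\beta}}{2\beta}$ for every $\mu\in\cP(\S)$, and this value is attained by every Dirac mass $\delta_{x^*}$. Conversely, if $\mathsf{E}_\beta[\mu]=\tfrac{e^{\beta}}{2\beta}$, then $\langle x,x'\rangle = 1$ for $(\mu\otimes\mu)$-a.e.\ $(x,x')$, i.e.\ $\mu\otimes\mu$ is concentrated on the diagonal of $\S\times\S$; but if $0<\mu(A)<1$ for some Borel set $A$ then $(\mu\otimes\mu)(A\times A^c)=\mu(A)\mu(A^c)>0$ while $A\times A^c$ misses the diagonal, a contradiction. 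Hence $\mu$ is a Dirac mass, and the global maximizers are exactly the Dirac masses.

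\textbf{Minimizer.} The idea is to diagonalize $\mathsf{E}_\beta$ in the basis of (hyper)spherical harmonics. Expand the zonal kernel as $e^{\beta\langle x,x'\rangle}=\sum_{k\geq 0}a_k\sum_{m}Y_{k,m}(x)Y_{k,m}(x')$, where $\{Y_{k,m}\}_m$ is an orthonormal basis of the (finite-dimensional) space of degree-$k$ spherical harmonics; the series converges uniformly on $\S\times\S$ since $t\mapsto e^{\beta t}$ is entire. Inserting this into $\mathsf{E}_\beta$ and applying Fubini gives
\[
\mathsf{E}_\beta[\mu]=\frac{1}{2\beta}\sum_{k\geq 0}a_k\sum_{m}\Bigl(\int Y_{k,m}\diff\mu\Bigr)^{\!2}.
\]
The $k=0$ term is a constant independent of $\mu$ (the degree-$0$ harmonic is constant and $\mu$ is a probability measure), and all the remaining terms are nonnegative provided $a_k\geq 0$ for $k\geq 1$. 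Since $\sigma_d$ annihilates all harmonics of degree $\geq 1$, it follows that $\mathsf{E}_\beta[\mu]\geq \mathsf{E}_\beta[\sigma_d]$, with equality if and only if $\int Y_{k,m}\diff\mu=0$ for all $k\geq 1$ and all $m$; by completeness of the spherical harmonics in $C(\S)$ this forces $\mu=\sigma_d$, which gives existence and uniqueness of the minimizer at once --- \emph{once we know $a_k>0$ strictly for all $k$}.

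\textbf{Positivity of the coefficients, and the main obstacle.} The one genuinely non-soft ingredient is that $a_k>0$ for every $k\geq 0$. Writing $e^{\beta t}=\sum_{m\geq 0}\tfrac{\beta^m}{m!}t^m$, each monomial kernel $\langle x,x'\rangle^m=\langle x^{\otimes m},x'^{\otimes m}\rangle$ is positive semidefinite on $\S$, and its harmonic expansion has coefficients that are strictly positive for precisely those $k\leq m$ with $k\equiv m\pmod 2$; this amounts to the classical positivity of the Gegenbauer moments $\int_{-1}^{1}t^m C_k^{(d-2)/2}(t)(1-t^2)^{(d-3)/2}\diff t$ (for $d=2$ one instead uses the Fourier expansion $e^{\beta\cos\theta}=I_0(\beta)+2\sum_{k\geq 1}I_k(\beta)\cos k\theta$ with $I_k(\beta)>0$). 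Summing over $m$ of a fixed parity, every $a_k$ is a sum of strictly positive contributions, so $a_k>0$. I expect this special-function positivity to be the only point requiring real input; everything else is soft functional analysis. (Alternatively, one may package the same fact as: the kernel is strictly positive definite, so $\mathsf{E}_\beta$ is strictly convex along linear interpolations of distinct probability measures and admits a unique minimizer by weak-$*$ compactness of $\cP(\S)$ and weak-$*$ continuity of $\mathsf{E}_\beta$; since $\mathsf{E}_\beta$ is invariant under $\mu\mapsto R_\#\mu$ for $R\in O(d)$, that unique minimizer is rotation-invariant, hence equals $\sigma_d$. This route still needs $a_k>0$ as its input.)
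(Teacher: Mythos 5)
Your proposal is correct and follows essentially the same route as the paper: the paper also expands $e^{\beta t}$ in Gegenbauer polynomials, reduces the claim to strict positivity of the coefficients $\widehat f(k;\lambda)$ (citing Bilyk--Dai for the equivalence you prove by hand via the addition formula and completeness of spherical harmonics), and establishes that positivity from the power series of $e^{\beta t}$ together with the sign of the moments $\int_{-1}^1 t^m C_k^\lambda(t)(1-t^2)^{\lambda-\frac12}\diff t$. Your write-up merely unpacks the cited equivalence and supplies the (omitted) elementary argument for the maximizers, so there is nothing substantively different to flag.
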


This result lends credence to our nomenclature of the case $V=I_d$ as \emph{attractive}, and $V=-I_d$ as \emph{repulsive}. 
The reader should be wary however that in this result we are minimizing or maximizing $\mathsf{E}_\beta$ among \emph{all} probability measures on $\S$. Should one focus solely on discrete measures, many global minima appear---these are discussed in Section~\ref{sec:repulsive}---. This is one point where the particle dynamics and the mean-field flow deviate. 
We now provide a brief proof of Proposition~\ref{prop: existence.uniqueness.energy} (see \cite{shuotan2017} for a different approach).

\begin{proof}[Proof of \Cref{prop: existence.uniqueness.energy}]
    The fact that any global maximizer is a Dirac mass is easy to see. We proceed with proving the rest of the statement.
    Let $f(t)=e^{\beta t}$.
    The interaction energy then reads
    \begin{equation*}
        \mathsf{E}_\beta[\mu] = \frac12\iint f(\langle x,x'\rangle)\diff\mu(x)\diff \mu(x').
    \end{equation*}
    The proof relies on an ultraspherical (or Gegenbauer) polynomial expansion of $f(t)$:
    \begin{equation*}
        f(t)=\sum_{k=0}^{+\infty}\widehat{f}(k;\lambda)\frac{k+\lambda}{\lambda}C_k^\lambda(t)
    \end{equation*}
    for $t\in[-1,1]$, where $\lambda=\frac{d-2}{2}$, $C_k^\lambda$ are Gegenbauer polynomials, and 
    \begin{equation*}
        \widehat{f}(k;\lambda)=\frac{\Gamma(\lambda+1)}{\Gamma(\lambda+\frac12)\Gamma(\frac12)}\frac{1}{C_k^\lambda(1)}\int_{-1}^1 f(t)C^\lambda_k(t)(1-t^2)^{\lambda-\frac12}\diff t
    \end{equation*}
    where $C_k^\lambda(1)>0$ (see \cite[Section 1.2]{dai2013approximation}). 
    According to \cite[Proposition 2.2]{bilyk2019geodesic}, a necessary and sufficient condition for \Cref{prop: existence.uniqueness.energy} to hold is to ensure that $\widehat{f}(k;\lambda)>0$ for all $k\geq1$.
    To show this, we use the Rodrigues formula \cite[4.1.72]{szeg1939orthogonal}
    \begin{equation*}
        C_k^\lambda(t)= \frac{(-1)^k2^k}{k!}\frac{\Gamma(k+\lambda)\Gamma(k+2\lambda)}{\Gamma(\lambda)\Gamma(2k+2\lambda)}(1-t^2)^{-(\lambda-\frac12)}\left(\frac{\diff}{\diff t}\right)^k(1-t^2)^{k+\lambda-\frac12},
    \end{equation*}
    and the fact that $C_k^\lambda(-t)=(-1)^kC_k^\lambda(t)$ for $t\in[-1,1]$, which in combination with integration by parts yield
    \begin{equation*}
        \int_{-1}^1 t^\ell C_k^\lambda(t)(1-t^2)^{\lambda-\frac12}\diff t\begin{cases}
            >0 &\text{ if } \ell\geq k \text{ and } \ell-k \text{ is even}\\
            =0 &\text{ otherwise }.
        \end{cases}
    \end{equation*}
    We conclude by using the power series expansion of $f$.
\end{proof}

\subsection{A Wasserstein gradient flow proxy} \label{sec: usa}

In view of~\eqref{eq: dissipation.softmax}, one could hope to see the continuity equation~\eqref{eq: conteqSd} as the \emph{Wasserstein gradient flow} of $\mathsf{E}_\beta$, or possibly some other functional (see the seminal papers \cite{otto2001geometry, jordan1998variational}, and \cite{ambrosio2005gradient, villani2009optimal} for a complete treatment). The long-time asymptotics of the PDE can then be analyzed by studying convexity properties of the underlying functional, by analogy with gradient flows in the Euclidean case.

For~\eqref{eq: conteqSd} to be the Wasserstein gradient flow of $\mathsf{E}_\beta$, the vector field $\mathcal{X}[\mu]$ defined in ~\eqref{eq: vfSd} ought to be
the gradient of the first variation $\delta\mathsf{E}_\beta$ of $\mathsf{E}_\beta$. However, notice that $\mathcal{X}[\mu]$ is a logarithmic derivative: 
\begin{equation} \label{eq: logder}
    \mathcal{X}[\mu](x) = \nabla\log\int\beta^{-1}e^{\beta\langle x, y\rangle}\diff\mu(y).
\end{equation}
(This observation goes beyond $Q=K=I_d$ and $V=\pm I_d$ insofar as $Q^\top K=K^\top Q = \pm V$; see~\cite[Assumption~1]{sander2022sinkformers}.  Because of the lack of symmetry, it has been shown in \cite{sander2022sinkformers} that~\eqref{eq: logder} is not 
the gradient of the first variation of a functional.

To overcome this limitation on $\R^d$, thus without layer normalization,~\cite{sander2022sinkformers} propose two ways to "symmetrize" \eqref{eq: conteqSd} that both lead to a Wasserstein gradient flow;~see~\cite[Proposition~2]{sander2022sinkformers}. We focus here on the simplest one which consists in removing the logarithm in~\eqref{eq: logder}, or equivalently to removing the denominator in~\eqref{eq: vfSd}. 
This is one point where working on the unit sphere is useful: otherwise, the equation on $\R^d$ without layer normalization (as considered in \cite{sander2022sinkformers}) is ill-posed for general choices of matrices $V$, due to the fact that the magnitude of the vector field $\mathcal{X}[\mu]$ grows exponentially with the size of the support of $\mu$.
On the contrary, on $\S$ the resulting equation is perfectly well-posed. 

\begin{remark}[Doubly stochastic kernel]
    Considering the Transformer dynamics on $\R^d$, thus without layer normalization, the authors in \cite{sander2022sinkformers} propose an alternative symmetric model: they replace  the self-attention (stochastic) matrix by a doubly stochastic one, generated from the Sinkhorn iteration. This leads to a Wasserstein gradient flow, whereby the resulting attention mechanism is implicitly expressed as a limit of Sinkhorn iterations. Understanding the emergence of clusters for this model is an interesting but possibly challenging question.
\end{remark}

In view of the above discussion, we are inclined to propose the surrogate model
\begin{equation} \label{USA}
\dot{x}_i(t) =  \proj_{x_i(t)} \left(\frac1n \sum_{j=1}^n e^{\beta\langle x_i(t), x_j(t)\rangle}x_j(t)\right), \tag{USA}
\end{equation}
which is obtained by replacing the partition function $Z_{\beta, i}(t)$ by $n$. 
As a matter of fact,~\eqref{USA} presents a remarkably similar qualitative behavior---all of the results we show in this paper are essentially the same for both dynamics---.

The continuity equation corresponding to~\eqref{USA}, namely
\begin{equation} \label{eq: pde.nosoftmaxZ}
\begin{dcases}
    \partial_t \mu(t,x) + \dive\left(\proj_{x}\left(\int e^{\beta\langle x, x'\rangle}x'\diff\mu(t,x')\right)\mu(t, x)\right) = 0 &\\
    \mu_{|t=0}=\mu_0
    \end{dcases}
\end{equation}
for $(t,x)\in\R_{\geq0}\times\S$, can now be seen as a Wasserstein gradient flow for the interaction energy $\mathsf{E}_\beta$ defined in~\eqref{eq: interaction.energy}.

\begin{lemma} 
Consider the interaction energy $\mathsf{E}_\beta:\mathcal{P}(\S)\to\R_{\geq0}$ 
defined in~\eqref{eq: interaction.energy}.
Then the vector field 
\begin{equation*} 
\mathcal{X}[\mu](x) = \proj_{x}\left(\int e^{\beta\langle x, x'\rangle}x'\diff\mu(x')\right)
\end{equation*}
satisfies
\begin{equation}\label{e:XmuE}
\mathcal{X}[\mu](x) = \nabla\delta\mathsf{E}_\beta[\mu](x)
\end{equation}
for any $\mu\in\mathcal{P}(\S)$ and $x\in\S$, where $\delta\mathsf{E}_\beta[\mu]$ denotes the first variation of $\mathsf{E}_\beta$. 
\end{lemma}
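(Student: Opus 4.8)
The plan is to verify~\eqref{e:XmuE} by directly computing the first variation $\delta\mathsf{E}_\beta[\mu]$ and then taking its spherical gradient. Recall that for a functional of the form $\mathsf{E}_\beta[\mu] = \frac{1}{2\beta}\iint e^{\beta\langle x,x'\rangle}\diff\mu(x)\diff\mu(x')$, which is quadratic in $\mu$ with symmetric kernel $W(x,x') = \beta^{-1}e^{\beta\langle x,x'\rangle}$, the first variation at $\mu$ is obtained by the standard computation $\frac{\diff}{\diff\varepsilon}\big|_{\varepsilon=0}\mathsf{E}_\beta[\mu+\varepsilon(\nu-\mu)]$ for $\nu\in\mathcal{P}(\S)$. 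The symmetry $W(x,x')=W(x',x)$ makes the two cross terms coincide, so one gets $\delta\mathsf{E}_\beta[\mu](x) = \int W(x,x')\diff\mu(x') = \beta^{-1}\int e^{\beta\langle x,x'\rangle}\diff\mu(x')$, up to an additive constant which is irrelevant for the gradient.

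Next I would compute the spherical gradient of $x\mapsto \beta^{-1}\int e^{\beta\langle x,x'\rangle}\diff\mu(x')$. The ambient (Euclidean) gradient of $x\mapsto\beta^{-1}e^{\beta\langle x,x'\rangle}$ is $e^{\beta\langle x,x'\rangle}x'$, so the ambient gradient of the integral is $\int e^{\beta\langle x,x'\rangle}x'\diff\mu(x')$. The spherical gradient is then the projection of the ambient gradient onto the tangent space $\Tan_x\S$, i.e. applying $\proj_x$, which yields exactly $\proj_x\!\left(\int e^{\beta\langle x,x'\rangle}x'\diff\mu(x')\right) = \mathcal{X}[\mu](x)$. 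This is the desired identity. I should briefly justify the interchange of differentiation and integration (dominated convergence, using that $\S$ is compact and the integrand and its derivatives are bounded, so everything is uniformly controlled), and recall the elementary fact that for a smooth function $F$ on a neighborhood of $\S$ in $\R^d$, the intrinsic gradient of $F|_\S$ at $x$ equals $\proj_x(\nabla F(x))$ where $\nabla F$ is the Euclidean gradient.

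Concretely the steps are: (i) state that $\mathsf{E}_\beta$ is quadratic with symmetric interaction kernel and recall the definition of first variation; (ii) compute $\delta\mathsf{E}_\beta[\mu](x) = \beta^{-1}\int e^{\beta\langle x,x'\rangle}\diff\mu(x')$ using symmetry; (iii) recall $\nabla_{\S} (F|_\S) = \proj_x \nabla F$ for the Euclidean gradient $\nabla$, justify differentiating under the integral sign, and compute $\nabla_x\big(\beta^{-1}e^{\beta\langle x,x'\rangle}\big) = e^{\beta\langle x,x'\rangle}x'$; (iv) conclude $\nabla\delta\mathsf{E}_\beta[\mu](x) = \proj_x\!\left(\int e^{\beta\langle x,x'\rangle}x'\diff\mu(x')\right) = \mathcal{X}[\mu](x)$.

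There is no serious obstacle here — the result is essentially a bookkeeping exercise, and the only point requiring any care is the definition of the first variation on $\mathcal{P}(\S)$ (it is only defined up to an additive constant, which is exactly why the absence of the logarithm/partition function is what makes the vector field a gradient, in contrast to~\eqref{eq: logder}) and the routine justification for differentiating under the integral sign, which is immediate by compactness of $\S$ and smoothness and boundedness of the exponential kernel. The conceptual content — that removing the denominator symmetrizes the dynamics into a gradient flow — has already been explained in the surrounding text, so the proof itself is short.
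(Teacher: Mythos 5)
Your proof is correct and follows essentially the same route as the paper's (omitted) argument: compute the first variation $\delta\mathsf{E}_\beta[\mu](x)=\beta^{-1}\int e^{\beta\langle x,x'\rangle}\diff\mu(x')$ using the symmetry of the kernel, then take the Euclidean gradient under the integral sign and project onto $\Tan_x\S$ to obtain $\mathcal{X}[\mu]$. Nothing further is needed.
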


We omit the proof which follows from standard Otto calculus~\cite{otto2001geometry},~\cite[Chapter~15]{villani2009optimal},~\cite[Chapter~5]{chewi2024statistical}. 
We can actually write~\eqref{e:XmuE} more succinctly by recalling the definition of the convolution of two functions on $\S$ \cite[Chapter 2]{dai2013approximation}: for any $g\in L^1(\S)$ and $f:[-1,1]\to\R$ such that $t\mapsto(1-t^2)^{\frac{d-3}{2}}f(t)$ is integrable,
\begin{equation*}
(f\ast g)(x) = \int f(\langle x, y\rangle)g(y)\diff\sigma_d(y).
\end{equation*}
This definition has a natural extension to the convolution of a function $f$ (with the above integrability) and a measure $\mu\in\mathcal{P}(\S)$. We can hence rewrite
\begin{equation*}
\mathsf{E}_\beta[\mu]=\frac12\int(\mathsf{G}_\beta\ast\mu)(x)\diff\mu(x)
\end{equation*}
where $[-1,1]\ni\mathsf{G}_\beta(t)=\beta^{-1} e^{\beta t}$, and so
\begin{equation*}
\mathcal{X}[\mu](x) = \nabla(\mathsf{G}_\beta\ast\mu)(x).
\end{equation*}
Thus,~\eqref{eq: pde.nosoftmaxZ} takes the equivalent form 
\begin{equation} \label{eq: aggregation.eq}
\begin{dcases}
    \partial_t \mu(t, x)+\dive\Big(\nabla\big(\mathsf{G}_\beta\ast\mu(t,\cdot)\big)(x)\mu(t,x)\Big)=0 &\text{ for } (t,x)\in\R_{\geq0}\times\S \\
    \mu_{|t=0}=\mu_0 &\text{ for } x\in\S.
\end{dcases}
\end{equation}
The considerations above lead us to the following Lyapunov identity.

\begin{lemma} \label{lem: dissipation}
    
    The solution $\mu\in C^0(\R_{\geq0};\mathcal{P}(\S))$ 
    to~\eqref{eq: pde.nosoftmaxZ} satisfies 
    \begin{equation*} 
        \frac{\diff}{\diff t}\mathsf{E}_\beta[\mu(t)] =    \int\left\|\nabla\Big(\mathsf{G}_\beta\ast \mu(t,\cdot)\Big)(x)\right\|^2\diff\mu(t,x)
    \end{equation*}
    for $t\geq0$. 
\end{lemma}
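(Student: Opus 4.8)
The plan is to compute $\frac{\diff}{\diff t}\mathsf{E}_\beta[\mu(t)]$ directly by differentiating under the double integral and using the continuity equation~\eqref{eq: pde.nosoftmaxZ}, exactly in the spirit of the computation~\eqref{eq: dissipation.softmax} already carried out for the softmax dynamics. First I would write
\[
\frac{\diff}{\diff t}\mathsf{E}_\beta[\mu(t)] = \frac{1}{\beta}\iint e^{\beta\langle x,x'\rangle}\diff\partial_t\mu(t,x)\diff\mu(t,x'),
\]
using the symmetry of the kernel $e^{\beta\langle x,x'\rangle}$ to collapse the two terms coming from the product rule into a single one with a factor absorbing the $\tfrac12$. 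Then I would substitute $\partial_t\mu(t,x) = -\dive\big(\nabla(\mathsf{G}_\beta\ast\mu(t,\cdot))(x)\,\mu(t,x)\big)$ from the aggregation form~\eqref{eq: aggregation.eq} of the equation.

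The second step is integration by parts on the sphere in the $x$ variable. Recognizing that $\int \beta^{-1}e^{\beta\langle x,x'\rangle}\diff\mu(t,x') = (\mathsf{G}_\beta\ast\mu(t,\cdot))(x)$ and that $\nabla_x\big(\int\beta^{-1}e^{\beta\langle x,x'\rangle}\diff\mu(t,x')\big) = \nabla(\mathsf{G}_\beta\ast\mu(t,\cdot))(x)$, the integration by parts turns
\[
-\int (\mathsf{G}_\beta\ast\mu(t,\cdot))(x)\,\dive\Big(\nabla(\mathsf{G}_\beta\ast\mu(t,\cdot))(x)\,\mu(t,x)\Big)\diff x
\]
into
\[
\int \big\langle \nabla(\mathsf{G}_\beta\ast\mu(t,\cdot))(x),\ \nabla(\mathsf{G}_\beta\ast\mu(t,\cdot))(x)\big\rangle \diff\mu(t,x) = \int \big\|\nabla(\mathsf{G}_\beta\ast\mu(t,\cdot))(x)\big\|^2 \diff\mu(t,x),
\]
which is the claimed identity. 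One should note that $\mathcal{X}[\mu](x) = \proj_x(\ldots) = \nabla(\mathsf{G}_\beta\ast\mu)(x)$ is already tangent to $\S$, so the spherical divergence and spherical gradient are the natural objects and no boundary terms arise on the closed manifold $\S$.

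I expect the only genuine obstacle to be justifying the manipulations rigorously for a merely weak, measure-valued solution $\mu\in C^0(\R_{\geq0};\mathcal{P}(\S))$: differentiation under the integral sign, the validity of the integration-by-parts identity when $\mu(t,\cdot)$ need not have a density, and the interchange of $\frac{\diff}{\diff t}$ with the spatial integration. The clean way around this is to observe that $\mathsf{G}_\beta$ is smooth, so $x\mapsto(\mathsf{G}_\beta\ast\mu(t,\cdot))(x)$ is a smooth ($C^\infty$) function on $\S$ with all derivatives bounded uniformly in $t$ (since $\mu(t)\in\mathcal P(\S)$ and $\S$ is compact); hence it is a legitimate test function, and the distributional formulation of~\eqref{eq: pde.nosoftmaxZ} applies verbatim with test function $\varphi(x) = (\mathsf{G}_\beta\ast\mu(t,\cdot))(x)$ — after which the symmetry of the kernel recovers the full time derivative of $\mathsf{E}_\beta$. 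I would either spell this out in a line or, consistently with the surrounding text (e.g. the omitted proof of the preceding lemma invoking standard Otto calculus), simply state that the identity follows from the weak formulation together with the smoothness of $\mathsf{G}_\beta$.
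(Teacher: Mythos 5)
Your proposal is correct and follows essentially the same route the paper intends: the lemma is stated as an immediate consequence of the computation~\eqref{eq: dissipation.softmax} (differentiate under the integral using the symmetry of the kernel, substitute the continuity equation, integrate by parts on $\S$, and identify $\mathcal{X}[\mu]=\nabla(\mathsf{G}_\beta\ast\mu)$), which is exactly what you do. Your additional remarks on justifying the manipulations for measure-valued solutions via the smoothness of $\mathsf{G}_\beta\ast\mu(t,\cdot)$ as a test function are a welcome clarification of a point the paper leaves implicit.
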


Interestingly,~\eqref{eq: aggregation.eq} is an \emph{aggregation} equation, versions of which have been studied in great depth in the literature. For instance, clustering in the spirit of an asymptotic collapse to a single Dirac measure located at the center of mass of the initial density $\mu(0,\cdot)$ has been shown for aggregation equations with singular kernels in \cite{blanchet2008infinite, bertozzi2011lp, carrillo2011global}, motivated by the Patlak-Keller-Segel model of chemotaxis. Here, one caveat (and subsequently, novelty) is that~\eqref{eq: aggregation.eq} is set on $\S$ which makes the analysis developed in these references difficult to adapt or replicate. 

\begin{remark}[Particle version] 
\label{rem: particle.gf}
Let us briefly sketch the particle version of the Wasserstein gradient flow~\eqref{eq: pde.nosoftmaxZ}. When $\mu(t)=\frac1n \sum_{i=1}^n \delta_{x_i(t)}$, the interaction energy~\eqref{eq: interaction.energy} takes the form 
$$
\mathsf{E}_\beta(X)=\frac{1}{2\beta n^2} \sum_{i=1}^n\sum_{j=1}^n e^{\beta\langle x_i,x_j\rangle}
$$
where $X=(x_1,\ldots,x_n)\in (\mathbb{S}^{d-1})^n$. Denoting by $\nabla_{X}$ the gradient associated to the standard 
Riemannian metric on $(\mathbb{S}^{d-1})^n$, we get the dynamics
\begin{equation}\label{e:dynonX}
\dot{X}(t)=n \nabla_X \mathsf{E}_\beta(X(t)).
\end{equation}
Indeed, the gradient on $(\mathbb{S}^{d-1})^n$ is simply $\nabla=(\partial_1,\ldots,\partial_n)$ where $\partial_i$ is the gradient in $\mathbb{S}^{d-1}$ acting on the $i$-th copy in $(\mathbb{S}^{d-1})^n$. Therefore
\begin{equation*}
{\partial_i \mathsf{E}_\beta}(X(t))=\frac{1}{\beta n^2} \sum_{j=1}^n {\proj_{x_i(t)}}\left(e^{\beta\langle x_i(t),x_j(t)\rangle}\beta x_j(t)\right)=\frac1n \dot{x}_i(t)    
\end{equation*}
which yields~\eqref{e:dynonX}. 

Note that~\eqref{SA} also corresponds to a gradient flow of the same interaction energy  albeit
with respect to a Riemannian metric on the sphere different from the standard one
(for $n=2$ the two are conformally equivalent). We provide more detail in the following section.
\end{remark}

\subsection{\eqref{SA} is a gradient flow for a modified metric} \label{sec: new.gradient.flow}

We will now briefly demonstrate that for a particular choice of parameters $(Q, K, V)$, the true dynamics \eqref{SA} can be seen as a gradient flow for $\mathsf{E}_\beta$ upon a modification of the metric on the tangent space of $(\S)^n$. This will facilitate qualitative analysis later on by using standard tools from dynamical systems. This insight can then be extrapolated to the corresponding continuity equation \eqref{eq: conteqSd} as well, as seen in Section \ref{sec: case.of.measures}.

\subsubsection{The case of particles}
We suppose that
\begin{equation*}
    Q^\top K \text{ is symmetric}, \hspace{1cm} V=Q^\top K.
\end{equation*}
We define a new metric on $(\S)^n$ as follows. Let $X=(x_1,\ldots,x_n)\in(\S)^n$. Consider the inner product on $\Tan_X(\S)^n$ given by
\begin{equation} \label{e:scalarproduct}
\left\langle(a_1,\ldots,a_n), (b_1,\ldots,b_n)\right\rangle_X = \sum_{i=1}^n Z_{\beta,i}(X) \langle a_i,b_i \rangle\,,
\end{equation}
where $a_i,b_i\in\Tan_{x_i}\S$, and 
$$ Z_{\beta,i}(X) = \sum_{j=1}^n e^{\beta \langle Vx_i, x_j\rangle}.$$
Set 
$$
\mathsf{E}_\beta(X)=\frac{1}{2\beta}\sum_{i=1}^n\sum_{j=1}^n e^{\beta\langle Vx_i,x_j\rangle}.
$$
We now show that the dynamics~\eqref{eq: transformerSd.QKV} can be equivalently written as
$$ \dot{X}(t) = \nabla \mathsf{E}_\beta(X(t)),$$
where the gradient $\nabla$ is computed with respect to the metric \eqref{e:scalarproduct} on $(\S)^n$. To this end, we ought to show that for all vector fields $Y$ on $(\S)^n$ and for all $X\in(\S)^n$, 
\begin{equation}\label{eq:r3}
	\frac{\diff}{\diff t}\Big|_{t=0} \mathsf{E}_\beta\left(\Phi^t_Y(X)\right) = \langle Y(X), B(X)\rangle_X
\end{equation}
holds, where $\Phi^t_Y$ is the flow associated to the vector field $Y$, whereas $B=(B_1,\ldots,B_n)$ with 
$$ B_i = \proj_{x_i} \left(\frac{1}{Z_{\beta,i}(X)}\sum_{j=1}^n e^{\beta\langle Vx_i,x_j\rangle }Vx_j\right) \in \Tan_{x_i}
\S.$$
By linearity, it is sufficient to show \eqref{eq:r3} for vector fields $Y$ of the form
$$ Y(X) = (Ax_1,0,\ldots,0)\in \Tan_X(\S)^n\,$$
where $A$ is an arbitrary non-zero skew-symmetric matrix. 
Clearly
\begin{equation}\label{eq:r4}
\Phi_Y^t(X) = (e^{tA} x_1,x_2,\ldots,x_n).
\end{equation}
One first computes
\begin{equation*}
\frac{\diff}{\diff t}\Big|_{t=0} \mathsf{E}_\beta\left(\Phi^t_Y(X)\right)  = \sum_{j=1}^n e^{\beta\langle Vx_1,x_j\rangle} \langle Ax_1, Vx_j\rangle.
\end{equation*}
Now observe that $\langle Ax_1, y\rangle = \langle Ax_1, z\rangle$ for all skew-symmetric matrices $A$ if and only if $x_1(y^\top -z^\top)$ is a symmetric matrix. Since $\proj_{x_1} = I_d-x_1x_1^\top$, we see that
\begin{equation*}
    \sum_{j=1}^n e^{\beta\langle Vx_1,x_j\rangle} \langle Ax_1,Vx_j\rangle=\langle Y(X), B(X)\rangle_X,
\end{equation*}
as desired.

\subsubsection{The case of measures} \label{sec: case.of.measures}
    The above insight, which consists in reweighing the metric with respect to which the gradient is being taken, can be formally adapted to Wasserstein setting for \eqref{eq: conteqSd}---which we recall, is not a gradient flow for the standard Wasserstein gradient of $\mathsf{E}_\beta$---. But note that \eqref{eq: conteqSd} writes as 
    \begin{equation} \label{eq: first.rewriting}
        \partial_t \mu(t) + \dive\left(\frac{\nabla \delta\mathsf{E}_\beta[\mu(t)]}{\delta\mathsf{E}_\beta[\mu(t)]}\mu(t)\right)=0,
    \end{equation}
    with $\delta\mathsf{E}_\beta[\mu](x) = \int e^{\beta\langle x, y\rangle}\diff \mu(y)$.
    To avoid technicalities we henceforth only focus on the absolutely continuous case. 
    For a fixed $\mu\in\mathcal{P}_{\mathrm{ac}}(\S)$, as in the well-known formal Riemannian reinterpretation of the Wasserstein space using Otto calculus \cite{otto2001geometry}, \cite[Chapter 5]{chewi2024statistical}, we consider
    \begin{equation*}
        \mathrm{T}_{\mu}\mathcal{P}_{\mathrm{ac}}(\S) = \overline{\left\{ \nabla \psi\colon \psi\in C^\infty(\S)\right\}}^{L^2(\mu)},
    \end{equation*}
    which, rather than endowing with the standard formal metric tensor given by the $\dot{H}^1$--inner product 
    \begin{equation*}
        \langle \nabla \psi_1, \nabla \psi_2\rangle_{\mu} := \int \langle \nabla \psi_1(x), \nabla\psi_2(x) \rangle\diff \mu(x),
    \end{equation*}
    we instead endow with
    \begin{equation*}
        \langle \nabla \psi_1, \nabla \psi_2\rangle_{\mu, \mathsf{E}_\beta} := \int \langle \nabla \psi_1(x), \nabla\psi_2(x) \rangle\,\delta\mathsf{E}_\beta[\mu](x)\diff \mu(x).
    \end{equation*}
    Continuing the Riemannian geometry analogy, through this metric tensor we can define a distance between $\mu_0,\mu_1\in\mathcal{P}_{\mathrm{ac}}(\S)$ by solving the variational problem 
    \begin{align*}
    \inf_{(\mu(t),v(t))_{t\in[0,1]}}\Bigg\{\int_0^1 \|v(t)\|^2_{\mu(t),\mathsf{E}_\beta}\diff t&\colon (\mu,v) \text{ satisfy } \eqref{eq: CE}, \mu(0)=\mu_0, \mu(1)=\mu_1
        \Bigg\},
    \end{align*}
    where
    \begin{equation} \label{eq: CE}
    \partial_t\mu(t, x) + \dive(v(t, x)\mu(t, x))=0 \hspace{1cm} \text{ on } [0,1]\times\S.    
    \end{equation}
    This variational problem is a generalization of the celebrated Benamou-Brenier formula \cite{benamou2000computational}, the value of which we dub $W_{2,\mathsf{E}_\beta}^2(\mu_0,\mu_1)$: it is a weighed Wasserstein distance. 
    For a curve of measures $(\mu(t))_{t\geq0}$ with tangent vectors $(v(t))_{t\geq0}$ (meaning they solve \eqref{eq: CE}), the Wasserstein gradient $\gradW_{\mathsf{E}_\beta}\mathsf{E}_\beta$ induced by this geometric setup is then the element of $\mathrm{T}_{\mu(t)}\mathcal{P}_{\mathrm{ac}}(\S)$ such that 
    $$\partial_t \mathsf{E}_\beta[\mu(t)] = \langle \gradW_{\mathsf{E}_\beta}\mathsf{E}_\beta[\mu(t)], v(t)\rangle_{\mu(t), \mathsf{E}_\beta}.$$
    We can now demonstrate that the vector field driving \eqref{eq: first.rewriting} is the Wasserstein gradient of $\mathsf{E}_\beta$ corresponding to this geometric setup. Indeed as in \cite[Definition 5.9]{chewi2024statistical} we first have 
    \begin{equation*}
    \partial_t \mathsf{E}_\beta[\mu(t)] = \int\delta\mathsf{E}_\beta[\mu(t)]\diff \partial_t\mu(t).    
    \end{equation*}
    We then find 
    \begin{equation*}
        \int\delta\mathsf{E}_\beta[\mu(t)]\diff \partial_t\mu(t) = \int \langle \nabla \delta\mathsf{E}_\beta[\mu(t)], v(t)\rangle \diff \mu(t) = \left\langle \frac{\nabla \delta\mathsf{E}_\beta[\mu(t)]}{\delta\mathsf{E}_\beta[\mu(t)]}, v(t)\right\rangle_{\mu(t),\mathsf{E}_\beta}, 
    \end{equation*}
    as desired. The literature studying weighed Wasserstein distances such as the one above is rather scarce, but a relevant reference is \cite{Li21}.

\part{Clustering} \label{part: clustering}

\begin{figure}[h!]
    \centering
    \begin{tikzpicture}[scale=1.2]
        \fill[fill=red, opacity=0.65] (2, 0.85) rectangle (2.5, 3.75);
        \fill[fill=green, opacity=0.4] (2, 0) rectangle (2.5, 0.85);
        \fill[fill=green, opacity=0.4] (2, 3.75) rectangle (2.5, 4.5);
        
        \fill[fill=green, opacity=0.4] (2.5, 0) -- (10, 0) -- (10, 4.5) -- (2.5, 4.5) -- cycle;
        
        \draw (2, 0.1) -- (2, -0.1) node[below] {$2$};
        \draw (2.5, 0.1) -- (2.5, -0.1) node[below] {$3$};
        \draw (5, 0.1) -- (5, -0.1) node[below] {$n$};
        \draw (9, 0.1) -- (9, -0.1) node[below] {$d^*$};
        \node at (6, -0.5) {$d$};
        
        \draw (1.95, 0.85) -- (2.1, 0.85) node[left] {$1\hspace{0.15cm}$};
        \draw (1.95, 3.75) -- (2.1, 3.75) node[left] {$\frac{n^2}{\pi^2}\hspace{0.15cm}$};
        \node at (1.45, 2) {$\beta$};

        \node[rotate=90] at (1, 0.15) {fast};
        \node[rotate=90] at (1.05, 0.6) {$\leftarrow$};
        
        \node[rotate=90] at (1.05, 1.2) {$\rightarrow$};
        \node[rotate=90] at (1, 1.7) {slow};



        \node[rotate=90] at (2.215, 2.25) {\footnotesize no result when $d=2$};




        
        \draw[dashed, thick] (2, 0) -- (2, 4.5);
        \draw[dashed, thick] (2.5, 0) -- (2.5, 4.5);
        \draw[dashed, thick] (5, 0) -- (5, 4.5);
        \draw[dashed, thick] (9, 0) -- (9, 4.5);
        
        \draw[thick, ->] (2, 0) -- (10,0) node[right] {};
        \draw[thick, ->] (2, -0.1) -- (2, 4.5) node[above] {};
    \end{tikzpicture}
    \caption{Green zones indicate regimes where convergence to a single cluster as $t\to+\infty$ can be proven. Here $n\geq 2$ is fixed. When $d$ is larger than specific thresholds, the long-time asymptotics can be chiseled out in finer detail. Convergence is slow when $\beta\gg1$ (relative to the size of $d, n$), as even the exponential decay constant when $d\geq n$ is of the form $\lambda = O(e^{-\beta})$ and thus degenerates. 
    One rather expects dynamic metastability. 
    Section \ref{sec: temperature} addresses the case where $\beta$ is small and Section \ref{sec: large.beta} where it's large, whereas Section \ref{sec: high.d} covers the high-dimensional case at arbitrary $\beta$.}
    \label{fig: clustering.graph}
\end{figure}
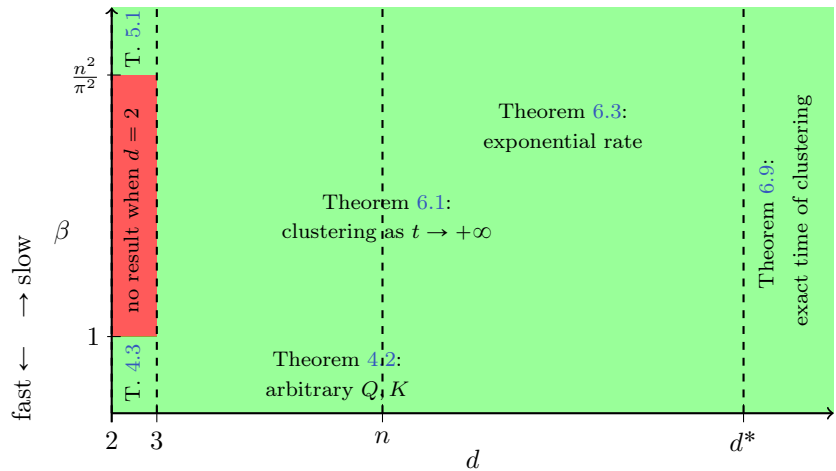

As alluded to in the introductory discussion, clustering is of particular relevance in tasks such as sentiment analysis, masked language modeling, summarization, and so on.  
Therein, the output measure encodes the probability distribution of the missing tokens for instance, and its clustering indicates a small number of possible outcomes.
In Sections~\ref{sec: temperature}, \ref{sec: large.beta}, and \ref{sec: high.d}, we show several results (mostly summarized in Figure \ref{fig: clustering.graph}) which indicate that the limiting distribution is a point mass. 
While it may appear that this leaves no room for diversity or randomness, which is at odds with practical observations, 
these results hold for the specific choice of parameter matrices, and apply in possibly very long-time horizons. Numerical experiments indicate a more subtle picture for different parameters---for instance, there is an appearance of a long metastable phase during which the particles coalesce in a small number of clusters, which seems consistent with behavior in trained models (Figure~\ref{fig: albert} in Section~\ref{sec: metastability})---. We are not able to theoretically explain this behavior as of now.

Ultimately, the appearance of clusters is somewhat natural 
 since the Transformer dynamics are a weighted average of all particles, with the weights being hard-wired to perform a fast selection of particles most similar to the $i$-th particle being queried. This causes the emergence of leaders which attract all particles in their vicinity. In the natural language processing interpretation, where particles represent tokens, this further elucidates the wording \emph{attention} as the mechanism of inter-token attraction, and the amplitude of the inner product between tokens can be seen as a measure of their \emph{semantic similarity}.

\section{A single cluster for small $\beta$} \label{sec: temperature}

As seen in Figure \ref{fig: clustering.graph}, the dimension $d$ and inverse temperature $\beta$ appear to play a key role in the clustering results we obtain. In this section and in Section \ref{sec: large.beta}, we begin by focusing on extreme choices of $\beta$ whilst $d, n$ are fixed. We first focus on the case $\beta=0$ (Section \ref{sec: beta.nul}), before moving to the case $\beta\ll 1$ by a perturbation argument (Section \ref{sec: beta.presque.nul}). We cover the case where $\beta$ is sufficiently large, but finite, in Section \ref{sec: large.beta}. The case $\beta=+\infty$ is of little interest since all particles are fixed by the evolution.

\subsection{The case $\beta=0$} \label{sec: beta.nul}
For $\beta=0$, both~\eqref{SA} and~\eqref{USA} read as
  \begin{equation}\label{e:Snonres0}
     \dot{x}_i(t) = \proj_{x_i(t)}\left(\frac1n\sum_{j=1}^n x_j(t)\right), \hspace{1cm} t\geqslant0.
 \end{equation}
The following result shows that generically over the initial points, a single cluster emerges. It complements a known convergence result (\cite[Theorem 2]{frouvelle2019long}) for \eqref{e:Snonres0}. 
In \cite[Theorem 2]{frouvelle2019long}, the authors show convergence to an antipodal configuration, in the sense that $n-1$ particles converge to some $x^*\in\S$, with the last particle converging to $-x^*$. Moreover, once convergence is shown to hold, it holds with an exponential rate.
Mimicking the proof strategy of \cite[Theorem 2.2]{benedetto2015complete} and \cite[Theorem 3.2]{ha2018relaxation}, 
we sharpen this result by showing that the appearance of an antipodal particle is non-generic over the choice of initial conditions.

\begin{theorem} \label{p:beta0} 
Let $d, n\geq 2$. For Lebesgue almost any initial sequence $(x_i(0))_{i\in[n]}\in (\S)^n$, there exists some point $x^*\in\mathbb{S}^{d-1}$ such that the unique solution $(x_i(\cdot))_{i\in[n]}\in C^0(\mathbb{R}_{\geq0};(\mathbb{S}^{d-1})^n)$ to the corresponding Cauchy problem for~\eqref{e:Snonres0} satisfies 
\begin{equation*}
    \lim_{t\to+\infty} x_i(t)=x^*
\end{equation*}
for any $i\in[n]$.
\end{theorem}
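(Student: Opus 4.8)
The plan is to analyze the dynamics \eqref{e:Snonres0} for the mean $m(t) := \frac1n\sum_{j=1}^n x_j(t) \in \R^d$. Writing $v_i(t) := \proj_{x_i(t)}(m(t)) = m(t) - \langle x_i(t), m(t)\rangle x_i(t)$, one computes
\begin{equation*}
\frac{\diff}{\diff t}\|m(t)\|^2 = \frac2n\sum_{i=1}^n \langle \dot x_i(t), m(t)\rangle = \frac2n\sum_{i=1}^n \big(\|m(t)\|^2 - \langle x_i(t), m(t)\rangle^2\big) \geq 0,
\end{equation*}
so $\|m(t)\|$ is non-decreasing and bounded by $1$; let $\rho_\infty := \lim_{t\to\infty}\|m(t)\| \in [0,1]$. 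The first step is to rule out $\rho_\infty = 0$ for generic initial data, and more precisely to show that generically $\rho_\infty = 1$, which forces all $x_i(t) \to m_\infty/\|m_\infty\|$ since $\|m(t)\| \to 1$ with $\|x_i(t)\| = 1$ means each $x_i(t)$ aligns with the (eventually fixed direction of the) mean. The intermediate configurations with $\rho_\infty \in \{0\} \cup (0,1)$ correspond exactly to the non-consensus equilibria — in particular the antipodal configuration of \cite[Theorem 2]{frouvelle2019long} gives $\rho_\infty = (n-2)/n$ — and the goal is to show the basin of attraction of each such non-consensus $\omega$-limit set has Lebesgue measure zero.

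The key tool is that \eqref{e:Snonres0} is, up to the factor $n$, the gradient ascent of $W(X) := \frac12\|m\|^2 = \frac{1}{2n^2}\sum_{i,j}\langle x_i, x_j\rangle$ on $(\S)^n$ (this is the $\beta \to 0$ limit of Remark~\ref{rem: particle.gf}, since $\mathsf{E}_\beta$ reduces to $\frac{n^2}{2\beta}W$ plus a constant up to first order). Thus $X(t)$ converges to a connected set of critical points of $W$; by the {\L}ojasiewicz inequality (as in \cite[Theorem 2.2]{benedetto2015complete}, \cite[Theorem 3.2]{ha2018relaxation}) combined with the already-known exponential convergence from \cite{frouvelle2019long}, it in fact converges to a single critical point. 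The critical points are classified: $\nabla_{x_i} W = 0$ means $\proj_{x_i}(m) = 0$, i.e. each $x_i$ is $\pm$ the unit vector $u := m/\|m\|$ (when $m \neq 0$), or $m = 0$. So the non-consensus critical points are exactly: (i) $m=0$ configurations, and (ii) configurations where $k \geq 1$ of the particles equal $-u$ and $n-k$ equal $+u$ with $1 \le k \le n/2$. The main step is then a center-stable manifold / instability argument: at each such non-consensus critical point, the Hessian of $W$ on $(\S)^n$ has a strictly positive eigenvalue (an unstable direction for the ascent flow), so by the stable manifold theorem the set of initial conditions converging to it lies on a submanifold of positive codimension, hence is Lebesgue-null; taking the countable (in fact finite, modulo the continuum of choices of $u$ — one handles this by a covering/Fubini argument over the compact critical manifold, as in the cited references) union over all non-consensus critical manifolds still gives a null set.

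Concretely, to exhibit the unstable direction: at a configuration with particles split between $+u$ and $-u$, perturb one of the $-u$ particles, say $x_1$, toward $+u$ along a geodesic; a second-order expansion of $W$ shows $W$ strictly increases because moving $x_1$ from $-u$ toward $u$ increases $\langle x_1, x_j\rangle$ for the majority bloc at $+u$. More carefully, the second variation of $\|m\|^2$ in the direction of moving $x_1$ by $\epsilon a$ with $a \perp x_1$ contains the term $\frac{2}{n^2}\langle a, a\rangle \sum_{j} \langle x_j, -x_1 \rangle$ coming from the curvature of the sphere, and at the split configuration $\sum_j \langle x_j, -x_1\rangle = \sum_j \langle x_j, u \rangle = (n-k) - (k-1) \ge 1 > 0$ when $x_1 = -u$ — wait, one must be careful with signs, but the upshot is that the relevant quadratic form is positive definite in at least one direction at every non-consensus critical point (this is essentially the content of the antipodal configuration being a saddle, already implicit in \cite{frouvelle2019long}). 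For the $m = 0$ critical points one argues similarly that a generic perturbation increases $\|m\|$.

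\textbf{Main obstacle.} The hardest part is the measure-zero conclusion for the union of basins over the \emph{continuum} of non-consensus critical points (parametrized by the choice of axis $u \in \S$ and the partition into blocs): the stable-manifold theorem gives a local codimension-$\geq 1$ stable manifold at each fixed critical point, and one must patch these together over the compact critical manifold and use that exponential convergence localizes the analysis, then invoke Fubini/Sard-type reasoning to conclude the total basin is null — this is exactly the technical heart of \cite[Theorem 2.2]{benedetto2015complete} and \cite[Theorem 3.2]{ha2018relaxation}, and adapting their argument to the $(\S)^n$ geometry with the specific critical-point structure above is where the real work lies. A secondary point requiring care is verifying the {\L}ojasiewicz inequality applies (it does, $W$ being real-analytic on the real-analytic manifold $(\S)^n$) and checking the Hessian transversality (no zero eigenvalue in the claimed unstable direction) at \emph{every} non-consensus critical point, including degenerate ones with several particles coinciding.
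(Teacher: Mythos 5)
Your proposal follows essentially the same route as the paper's Appendix~\ref{app: proof.2}: the flow is the gradient ascent of $\mathsf{E}_0(X)=\frac1n\sum_{i,j}\langle x_i,x_j\rangle$ on $(\S)^n$, the \L{}ojasiewicz theorem gives convergence to a single critical point, every non-consensus critical point is shown to be a strict saddle, and the center-stable manifold theorem makes the union of their basins Lebesgue-null. Two remarks. First, the ``main obstacle'' you flag --- patching stable manifolds over the continuum of non-consensus critical points --- is not where the difficulty lies: in Lemma~\ref{l:nosaddleconv} the paper covers the set of strict saddles by countably many balls carrying local center-stable manifolds, notes that any trajectory converging to a saddle eventually remains in one such ball and hence lies in the corresponding local center-stable manifold, and pulls back by the flow at integer times; a countable union of null sets is null, and no Fubini/Sard argument over the critical manifold is needed. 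Second, your second-variation formula is off by the self-interaction term: with $x_1(\epsilon)=\cos(\epsilon)x_1+\sin(\epsilon)a$, $a\perp x_1$, one gets $\frac{\diff^2}{\diff\epsilon^2}\big|_{\epsilon=0}\sum_{j\neq 1}\langle x_1(\epsilon),x_j\rangle = 1-n\langle x_1,m\rangle$, and this \emph{single}-particle geodesic perturbation already yields a positive second variation at every non-consensus critical point (take $x_1$ with $\langle x_1,m\rangle\leq 0$, which exists both when $m=0$ and when some particle equals $-m/\|m\|$); your version $\sum_j\langle x_j,-x_1\rangle=-n\langle m,x_1\rangle$ drops the $+1$ and would wrongly give $0$ at the $m=0$ critical points. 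The paper instead rotates a whole subset $\mathscr{S}$ with $\sum_{i\in\mathscr{S}}\sum_{j\in\mathscr{S}^c}\langle x_i,x_j\rangle<0$ by $e^{tB}$ for skew-symmetric $B$ with $B^2=-I_d$ (plus an averaging trick for odd $d$), which treats the $m\neq0$ and $m=0$ cases uniformly; either computation, done carefully, closes the argument.
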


This is also referred to as convergence toward \emph{consensus} in collective behavior models.
We refer the interested reader to Appendix~\ref{app: proof.2} for the proof, which relies on a gradient flow reinterpretation of \eqref{e:Snonres0}, much like the proofs of several subsequent theorems. We provide some comments in Section \ref{sec: large.beta}.

\subsection{The case $\beta\ll 1$} \label{sec: beta.presque.nul}
Theorem~\ref{p:beta0} has some implications for small but positive $\beta$, something which is already seen in \Cref{fig: phase.diag.Id} and \Cref{fig: metastability}. 
This is essentially due to the fact that, formally,
\begin{equation*} 
\dot{x}_i(t)=\proj_{x_i(t)}\left(\frac1n\sum_{j=1}^nx_j(t)\right)+O(\beta)
\end{equation*}
for $\beta\ll1$.
So, during a time $\ll\beta^{-1}$, the particles do not feel the influence of the remainder $O(\beta)$ and behave as in the regime $\beta=0$. This motivates

\begin{theorem}\label{th:beta_small}
    Fix $d, n\geq 2$. For $\beta\geq0$, let $\mathscr{S}_\beta\subset(\S)^n$ be the subset consisting of all initial sequences for which the associated solution to the Cauchy problem for~\eqref{SA} (or~\eqref{USA}) converges to one cluster as $t\rightarrow+\infty$. 
    Then
    \begin{equation*}
     \lim_{\beta\to 0}\mathbb{P}(\mathscr{S}_\beta)=1.   
    \end{equation*}
    
    More generally, if $Q$ and $K$ are arbitrary $d\times d$ matrices, then the same result also holds for the Cauchy problem for \eqref{eq: transformerSd.QKV} with $V=I_d$ (or the natural analogue of \eqref{USA} with these parameters).
\end{theorem}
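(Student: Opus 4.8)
## Proof Proposal for Theorem~\ref{th:beta_small}

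The plan is to exploit the genericity statement of Theorem~\ref{p:beta0} together with a stability-of-the-flow argument. Fix $d,n\geq 2$. By Theorem~\ref{p:beta0}, the set $\mathscr{S}_0$ of initial sequences in $(\S)^n$ for which the $\beta=0$ dynamics~\eqref{e:Snonres0} converges to a single cluster has full Lebesgue measure, hence $\mathbb{P}(\mathscr{S}_0)=1$ when initial points are drawn i.i.d.\ uniformly on $\S$. Moreover, the proof of Theorem~\ref{p:beta0} (in Appendix~\ref{app: proof.2}) establishes that on $\mathscr{S}_0$ the convergence is eventually \emph{exponential}; we will use the slightly stronger fact that for each such initial sequence there is a finite time $T$ at which all particles lie in a common open hemisphere (this is automatic once they are close to a common limit $x^*$, and in fact once $\langle x_i(T),x_j(T)\rangle > 0$ for all $i,j$). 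Wrapping these two facts together: given $\varepsilon>0$, there exist a time $T=T(\varepsilon)$ and a compact set $\mathscr{K}_\varepsilon\subset(\S)^n$ with $\mathbb{P}(\mathscr{K}_\varepsilon)\geq 1-\varepsilon$ such that for every $X\in\mathscr{K}_\varepsilon$, the $\beta=0$ flow starting at $X$ has all $n$ particles strictly inside a common open hemisphere at time $T$, with a uniform margin: $\min_{i\in[n]}\langle x_i(T),w(X)\rangle \geq \rho > 0$ for some $\rho=\rho(\varepsilon)$ and some $w(X)\in\S$. (Extracting such a uniform $\rho$ and $T$ over a compact set of ``good'' initial data is where one must be slightly careful, using that the set of initial data reaching a common open hemisphere by some finite time is open and covers $\mathscr{S}_0$ up to null sets.)

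Next I would invoke the Lipschitz/stability structure already noted in the excerpt (the Dobrushin-type estimate, and the smooth dependence of solutions of~\eqref{SA}, \eqref{USA}, \eqref{eq: transformerSd.QKV} on the parameter $\beta$ and on initial data). Concretely, the right-hand side of~\eqref{eq: transformerSd.QKV} depends smoothly on $\beta$, and on the fixed time interval $[0,T]$ the solution map $(\beta, X)\mapsto (x_i(t))_{i\in[n]}$ is continuous (indeed Lipschitz in $\beta$ uniformly for $X\in(\S)^n$, since all vector fields and their $\beta$-derivatives are bounded on the compact $(\S)^n$). Hence there is $\beta_0=\beta_0(\varepsilon, T,\rho)>0$ such that for all $0\leq\beta\leq\beta_0$ and all $X\in\mathscr{K}_\varepsilon$, the solution of~\eqref{SA} (resp.\ \eqref{USA}, resp.\ \eqref{eq: transformerSd.QKV} with $V=I_d$) starting at $X$ satisfies $\min_{i\in[n]}\langle x_i(T), w(X)\rangle \geq \rho/2 > 0$. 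In other words, at time $T$ all particles of the $\beta$-dynamics still lie in a common open hemisphere.

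At this point I would apply Lemma~\ref{lem: hemisphere.clustering} (Cone collapse), using time $T$ as the new initial time: since at time $T$ the $n$ particles satisfy the hemisphere hypothesis, the lemma guarantees that they converge exponentially to a single point $x^*\in\S$ as $t\to+\infty$. Therefore every $X\in\mathscr{K}_\varepsilon$ belongs to $\mathscr{S}_\beta$ whenever $\beta\leq\beta_0$, so $\mathbb{P}(\mathscr{S}_\beta)\geq \mathbb{P}(\mathscr{K}_\varepsilon)\geq 1-\varepsilon$ for all such $\beta$. Since $\varepsilon>0$ is arbitrary, $\liminf_{\beta\to 0}\mathbb{P}(\mathscr{S}_\beta)\geq 1$, and as probabilities are bounded by $1$ this gives $\lim_{\beta\to0}\mathbb{P}(\mathscr{S}_\beta)=1$. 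The argument for arbitrary $d\times d$ matrices $Q,K$ with $V=I_d$ is identical, because Lemma~\ref{lem: hemisphere.clustering} is stated (and proved) to hold verbatim in that generality, and the smooth $\beta$-dependence on $[0,T]$ is unaffected.

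The main obstacle I anticipate is the passage from ``almost every initial datum clusters under $\beta=0$'' (a purely asymptotic, measure-theoretic statement) to ``a set of initial data of measure $\geq 1-\varepsilon$ reaches a common open hemisphere by a \emph{single, uniform} finite time $T$ with a \emph{uniform} margin $\rho$''. This requires knowing that the target event ``$\exists\, t\geq 0$ such that all particles lie in a common open hemisphere at time $t$'' is (relatively) open in the initial data and has probability $1$ under the $\beta=0$ flow, so that one can exhaust it by compact subsets on which $T$ and $\rho$ are controlled; establishing the requisite finite-time hemisphere entry for a.e.\ initial datum is really a corollary of the exponential-convergence conclusion of Theorem~\ref{p:beta0}, but it must be stated carefully. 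Everything after that — $\beta$-continuity on a compact time interval, then Lemma~\ref{lem: hemisphere.clustering} — is routine.
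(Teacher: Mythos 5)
Your proposal is correct and follows essentially the same route as the paper: establish full-measure clustering at $\beta=0$ via Theorem~\ref{p:beta0}, exhaust $\mathscr{S}_0$ by the (open, nested) sets of initial data that are quantitatively clustered into a common hemisphere by a uniform finite time with a uniform margin, perturb in $\beta$ on that finite time interval via a Gr\"onwall/Duhamel estimate, and finish with Lemma~\ref{lem: hemisphere.clustering}. The uniformity issue you flag is exactly what the paper resolves with the monotone union $\bigcup_m \mathscr{S}_0^m=\mathscr{S}_0$ of open sets (using a $\tfrac34$-clustered condition at time $m$, degraded to $\tfrac12$-clustered after the $O(\beta)e^{3t}$ perturbation), so your argument is sound as sketched.
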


\begin{proof} We focus on the dynamics~\eqref{SA}, but the proof is in fact identical in the case of~\eqref{USA}. 

For $\alpha\in[0,1)$, we say that a set formed from $n$ points $z_1,\ldots,z_n\in(\S)^n$ is \emph{$\alpha$--clustered} if for any $i,j\in[n]$, $\langle z_i,z_j\rangle >\alpha$ holds. Observe that if $\{z_1,\ldots,z_n\}$ is $\alpha$--clustered for some $\alpha\geq 0$, then the solution to the Cauchy problem for~\eqref{SA} (for arbitrary $\beta\geq0$) with this sequence as initial condition converges to a single cluster, since $w=z_1$ satisfies the assumption in \Cref{lem: hemisphere.clustering}.

Now, for any integer $m\geq1$, we denote by $\mathscr{S}_0^m\subset \mathscr{S}_0$ the set of initial sequences $x_1(0),\ldots,x_n(0)$ in $(\S)^n$ for which the solution $(x_i^0(\cdot))_{i\in[n]}$ to the associated Cauchy problem for~\eqref{e:Snonres0} is $\frac34$--clustered at time $t=m$, namely 
\begin{equation} \label{eq: harry.potter}
  \langle x_i^0(m), x_j^0(m)\rangle>\frac34  
\end{equation}
holds for all $i, j\in[n]$. 
We see that $\mathscr{S}_0^m$ is an open set for any integer $m\geq1$. Moreover, $\mathscr{S}_0^m\subset \mathscr{S}_0^{m+1}$ according to the proof of \Cref{lem: hemisphere.clustering}, and $\bigcup_{m=1}^{+\infty} \mathscr{S}_0^m=\mathscr{S}_0$. This implies that 
\begin{equation}\label{e:Ps0n}
    \lim_{m\to+\infty}\mathbb{P}(\mathscr{S}_0^m)=1.
\end{equation}
We now show that the solution to~\eqref{SA} is near that of~\eqref{e:Snonres0}, starting from the same initial condition, when $\beta$ is small.  
Using the Duhamel formula, we find
\begin{align*}
    x_i^\beta(t)-x_i^0(t) &=\int_0^t \sum_{j=1}^n\left(\frac{e^{\beta\langle Qx_i^\beta(s),Kx_j^\beta(s)\rangle}}{\sum_{k=1}^n e^{\beta\langle Qx_i^\beta(s),Kx_k^\beta(s)\rangle}}\right)\proj_{x_i^\beta(s)}(x_j^\beta(s))\diff s\\
    &\quad-\int_0^t\frac{1}{n}\sum_{j=1}^n\proj_{x_i^0(s)}(x_j^0(s)) \diff s \nonumber\\
    &=\int_0^t \sum_{j=1}^n\left(\frac{1}{n}+O\left(\frac{\beta}{n}\right)\right)\proj_{x_i^\beta(s)}(x_j^\beta(s))\diff s\nonumber\\
    &\quad-\int_0^t\frac{1}{n}\sum_{j=1}^n\proj_{x_i^0(s)}(x_j^0(s)) \diff s, \nonumber
\end{align*}
where we used that all particles lie on $\S$ for all times. Employing Gr\"onwall, we deduce
\begin{align} \label{e:approxsphere}
    \left\|x_i^\beta(t)-x_i^0(t)\right\|\leq O(\beta)e^{3t}
\end{align}
for all $t\geq0$, $\beta\geq0$ and $i\in[n]$. Due to~\eqref{e:approxsphere}, there exists some $\beta_m>0$ such that for any $\beta\in[0,\beta_m]$, 
\begin{equation} \label{eq: youareawizardharry}
    \left\|x_i^\beta(m)-x_i^0(m)\right\|\leq\frac18.
\end{equation}
For this to hold, we clearly need $\beta_m\to0$ as $m\to+\infty$.
Combining~\eqref{eq: harry.potter} and~\eqref{eq: youareawizardharry}, we gather that for any initial condition in $\mathscr{S}_0^m$, the solution $(x_i^\beta(\cdot))_{i\in[n]}$ to the corresponding Cauchy problem for 
\eqref{SA} is $\frac12$--clustered at time $t=m$, namely satisfies
\begin{equation*}
    \langle x_i^\beta(m), x_j^\beta(m)\rangle>\frac12
\end{equation*}
for all $i, j\in[n]$ and $\beta\in[0,\beta_m]$.
Thus $\mathscr{S}_0^m\subset \mathscr{S}_\beta$ for any $\beta\in[0,\beta_m]$ by virtue of \Cref{lem: hemisphere.clustering}, which together with~\eqref{e:Ps0n} concludes the proof.
\end{proof}

In the specific case where $Q^\top K=I_d$, we can in fact significantly sharpen Theorem \ref{th:beta_small} by relying on the gradient flow structure evoked in Section \ref{sec: new.gradient.flow}. Namely, we can show the following.

\begin{theorem} \label{thm: beta.tiny}
    Fix $d, n\geq 2$. There exists a numerical constant $C>0$ such that whenever $$\beta\leq Cn^{-1}$$ 
    the following holds.
    For Lebesgue almost any $(x_i(0))_{i\in[n]}\in(\S)^n$, there exists $x^*\in\S$ such that the solution $(x_i(\cdot))_{i\in[n]}\in C^0(\R_{\geq0};(\S)^n)$ to the corresponding Cauchy problem for \eqref{SA} (resp. for \eqref{USA}) satisfies 
    \begin{equation*}
        \lim_{t\to+\infty} x_i(t) = x^*
    \end{equation*}
    for all $i\in[n]$. 
    
    Moreover, when $d=2$ one can take $\beta\leq 1$ and the same conclusion holds.
\end{theorem}

The proof can be found in Appendix \ref{app: beta.tiny}. As for Theorem \ref{p:beta0}, we briefly discuss the general strategy in Section \ref{sec: large.beta} just below.
The improvement to $\beta\leq1$ when $d=2$ is due to the recent paper \cite{criscitiello2024synchronization}. 

\section{A single cluster for large $\beta$} \label{sec: large.beta}

The chain of thought leading to the proof of Theorem \ref{thm: beta.tiny} also leads to

\begin{theorem} \label{thm: beta.interval}
    Fix $d, n\geq 2$. There exists a constant $C=C(d)>0$ depending only on $d$ such that whenever 
    $$\beta\geq Cn^2,$$
    the conclusion of Theorem \ref{thm: beta.tiny} holds for both \eqref{SA} and \eqref{USA}.
\end{theorem}

We refer the interested reader to Appendix \ref{apx:beta_high} for the proof, which, much like those of Theorem \ref{p:beta0} and \ref{thm: beta.tiny} relies on the gradient flow interpretation of the dynamics. We give a general outline thereof. Since all the intervening functions and metrics are real-analytic, the celebrated \emph{\L{}ojasiewicz theorem}~\cite{lojasiewicz1963propriete} implies that for any initial condition, the gradient flow converges to some critical point of $\mathsf{E}_\beta$ as $t\to+\infty$. The genericity over the initial configurations seen in the statements comes from the \emph{center-stable manifold theorem} (Lemma \ref{l:nosaddleconv}). The former ensures that for almost every initial configuration, the gradient flow does not converge to a strict saddle point of $\mathsf{E}_\beta$ (namely critical points where the Hessian has at least one positive eigenvalue). Whence, generically over the initial configurations, the gradient flow converges to a local maximum. One is then left analyzing the landscape of the underlying energy, with the goal of ensuring that all local maxima are necessarily global.

\section{The high-dimensional case} \label{sec: high.d}

We now elucidate the role that the dimension $d$ plays in clustering results. 
To start, it turns out that the restrictions on $\beta$ provided by Theorems \ref{thm: beta.tiny} and \ref{thm: beta.interval} are specific to the two-dimensional case. The following result is shown in \cite{markdahl2017almost, criscitiello2024synchronization}.

\begin{theorem}[\cite{markdahl2017almost, criscitiello2024synchronization}] 
\label{thm: boumal}
    Fix $n\geq 2$, $d\geq 3$ and $\beta\geq 0$. Then the conclusion of Theorem \ref{thm: beta.tiny} holds for both \eqref{SA} and \eqref{USA}.
\end{theorem}

In Section \ref{sec: exp.rate} we show that the convergence of Theorem \ref{thm: boumal} is exponentially fast when $d\geq n$ (although, with a decay constant that is exponentially small in $\beta$) and in Section \ref{sec: phase} we describe the full dynamics when $n$ is fixed and $d\to+\infty$. We discuss some numerical experiments and posit questions on the intermediate behavior of the dynamics when $\beta\gg1$ in Section \ref{sec: metastability}.

We were made aware of Theorem \ref{thm: boumal} after the first version of this manuscript. The downside of our original proof, which caused us to miss the full range of \(\beta\), is that we focused on \(d=2\), where the natural perturbations  to a critical point involve selecting which particles are moving and which are standing still. However, in \(d>2\), one can move all particles in the same direction (as is done in the proof in \cite{criscitiello2024synchronization}). Using that there is a continuum of directions and only finitely many points, one can find some direction that brings all of them closer. 

\begin{remark}[Invariant measures]
In the theory of dynamical systems, often the first question of interest is to find smooth invariant measures. It is clear that whenever conclusions of Theorem \ref{thm: beta.tiny} hold (in particular, for all $\beta,n$ when $d\ge 3$), neither \eqref{SA} nor \eqref{USA} may possess a smooth invariant measure.
\end{remark}

\subsection{Clustering at an exponential rate when $d\geq n$} \label{sec: exp.rate}

One can ask whether for almost every initial configuration, the convergence provided by all of the results above holds with some rate.
The answer is affirmative---and the rate is in fact exponential---when the initial configuration lies in an open hemisphere.

\begin{theorem} \label{thm: d.infty}
Let $n\geq1$ and $\beta>0$. Suppose $d\geqslant n$. Consider the unique solution $(x_i(\cdot))_{i\in[n]}\in C^0(\mathbb{R}_{\geq0}; (\S)^n)$ to the Cauchy problem  
 for~\eqref{SA} or~\eqref{USA}, corresponding to an initial sequence of points $(x_i(0))_{i\in[n]}\in(\S)^n$ distributed uniformly at random. Then almost surely there exists $x^*\in\S$ and constants $C,\lambda>0$ such that 
    \begin{equation} \label{eq: expconvtocons}
    \|x_i(t)-x^*\|\leq Ce^{-\lambda t}
    \end{equation} 
    holds for all $i\in[n]$ and $t\geq 0$. 

    In fact, let $Q$ and $K$ be arbitrary $d\times d$ matrices. Then the same result also holds for the solution to the corresponding Cauchy problem for~\eqref{eq: transformerSd.QKV} with $V=I_d$ (or the natural analogue of~\eqref{USA} with these parameters).
\end{theorem}

When $d\geq n$ and the points $(x_i(0))_{i\in[n]}\in(\S)^n$ are distributed uniformly at random, with probability one there exists\footnote{This weak version of Wendel's theorem (Theorem~\ref{r:wendel}) is easy to see directly.} 
$w\in\mathbb{S}^{d-1}$ such that $\langle w,x_i(0)\rangle>0$ for any $i\in[n]$. In other words, all of the initial points lie in an open hemisphere almost surely.
The proof of \Cref{thm: d.infty} thus follows as a direct corollary of the following result, which holds for any $n\geq 1$ and $d\geq2$:

\begin{lemma}[Cone collapse] \label{lem: hemisphere.clustering}
Let $\beta>0$ and let $(x_i(0))_{i\in[n]}\in(\S)^n$ be such that there exists $w\in\mathbb{S}^{d-1}$ for which $\langle x_i(0),w\rangle >0$ for any $i\in[n]$. Consider the unique solution $(x_i(\cdot))_{i\in[n]}\in C^0(\mathbb{R}_{\geq0};(\S)^n)$ to the corresponding Cauchy problem for~\eqref{SA} or~\eqref{USA}. Then there exists $x^*\in\S$ and constants $C,\lambda>0$ such that
\begin{equation*}
\|x_i(t)-x^*\|\leq Ce^{-\lambda t}
\end{equation*}
holds for all $i\in[n]$ and $t\geq0$.

In fact, let $Q$ and $K$ be arbitrary $d\times d$ matrices. Then the same result also holds for the solution to the corresponding Cauchy problem for~\eqref{eq: transformerSd.QKV} with $V=I_d$ (or the natural analogue of~\eqref{USA} with these parameters).
\end{lemma}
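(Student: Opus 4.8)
The plan is to find a spherical cap that is invariant under the dynamics and that contracts to a point. Concretely, for $w\in\mathbb{S}^{d-1}$ set $\mathcal{C}_\delta = \{x\in\S : \langle x, w\rangle \geq \delta\}$ and let $\delta_0 = \min_i \langle x_i(0), w\rangle > 0$ by hypothesis. The first step is to show that $\min_i \langle x_i(t), w\rangle$ is nondecreasing in $t$: differentiating, $\frac{\diff}{\diff t}\langle x_i(t), w\rangle = \langle \proj_{x_i}(v_i), w\rangle$ where $v_i$ is the (softmax- or $\tfrac1n$-weighted) average $\sum_j A_{ij} V x_j$ with $V=I_d$. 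Since $\proj_{x_i}(v_i) = v_i - \langle v_i, x_i\rangle x_i$, at an index $i$ achieving the current minimum $m(t) = \langle x_i(t), w\rangle$, one has $\langle v_i, w\rangle = \sum_j A_{ij}\langle x_j, w\rangle \geq m(t)$ (convex combination of values all $\geq m(t)$), so $\langle \proj_{x_i}(v_i), w\rangle \geq m(t) - \langle v_i, x_i\rangle m(t) = m(t)(1 - \langle v_i, x_i\rangle)$. One checks $\langle v_i, x_i\rangle \leq 1$ because $v_i$ is a convex combination of unit vectors (so $\|v_i\|\leq 1$) and $\|x_i\|=1$; hence this lower bound is $\geq 0$. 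By a standard Gronwall/comparison argument applied to the Lipschitz function $t\mapsto \min_i\langle x_i(t),w\rangle$ (differentiable a.e., with the derivative controlled as above), $m(t)\geq \delta_0$ for all $t\geq 0$, so all particles stay in $\mathcal{C}_{\delta_0}$ forever.

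The second step is to upgrade invariance to contraction by obtaining a strict exponential gain. Inside $\mathcal{C}_{\delta_0}$ all pairwise inner products satisfy $\langle x_i, x_j\rangle \geq \delta_0^2 - (1-\delta_0^2)$ or more simply are bounded away from $-1$, hence the weights $A_{ij}$ are uniformly bounded below by some $a_0 = a_0(\beta, \delta_0, n) > 0$ (for \eqref{SA} this uses $e^{-\beta}\leq Z_{\beta,i}\leq e^\beta$; for \eqref{USA} the weight is $\tfrac1n e^{\beta\langle x_i,x_j\rangle}\geq \tfrac1n e^{-\beta}$, with $\langle x_i, x_j\rangle$ in fact $\geq 2\delta_0^2 - 1$). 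Redo the computation at the minimizing index: the key quantity $\langle v_i, x_i \rangle$ is now strictly less than $1$, quantitatively, because $v_i$ puts mass $\geq a_0$ on vectors not colinear with $x_i$ unless all particles already coincide; and moreover $\langle v_i, w\rangle - m(t) \geq a_0\big(\max_j\langle x_j, w\rangle - m(t)\big)$. Combining these yields a differential inequality of the form $\frac{\diff}{\diff t} m(t) \geq c\,(1 - m(t))$ for a constant $c = c(\beta, \delta_0, n) > 0$, as long as $m(t) < 1$; indeed when $m(t)$ is already close to $1$ all particles are near $w$ and the term $1-\langle v_i, x_i\rangle$ is of order $1-m(t)$ with a favorable constant. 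This gives $1 - m(t) \leq (1-\delta_0)e^{-ct}$, i.e. every particle converges to $w$ at exponential rate. Then $\|x_i(t) - w\|^2 = 2(1-\langle x_i(t), w\rangle) \leq 2(1-m(t)) \leq 2(1-\delta_0)e^{-ct}$, and setting $x^* = w$, $\lambda = c/2$, $C = \sqrt{2(1-\delta_0)}$ finishes the proof. (Strictly, $x^*$ should be the common limit of the $x_i$, which exists since each is Cauchy by the exponential bound; it then automatically satisfies $\|x_i(t)-x^*\|\leq Ce^{-\lambda t}$ after adjusting constants.)

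For the final sentence of the lemma—general $d\times d$ matrices $Q, K$ with $V = I_d$—the only change is that the weights become $A_{ij} \propto e^{\beta\langle Q x_i, K x_j\rangle}$. Since the $x_i$ live on the compact set $\S$, the exponents $\beta\langle Q x_i, K x_j\rangle$ are bounded in absolute value by $\beta\|Q\|_{\op}\|K\|_{\op}$, so $A_{ij}$ is still uniformly bounded below (now by $a_0 = a_0(\beta, \|Q\|_{\op}, \|K\|_{\op}, n) > 0$) and bounded above; the monotonicity computation in step one used only that $\sum_j A_{ij} = 1$ and $A_{ij} \geq 0$ together with $V = I_d$, both of which still hold, and step two needs only the uniform lower bound on the $A_{ij}$. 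Hence the same two-step argument goes through verbatim.

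The main obstacle I anticipate is step two: making the gain $\frac{\diff}{\diff t}m(t) \geq c(1-m(t))$ genuinely uniform down to a constant $c$ independent of the configuration. The subtlety is that $1 - \langle v_i, x_i\rangle$ could a priori be small even when particles are spread out, if $x_i$ happens to nearly align with the weighted average $v_i$; one must use that $\langle x_i, w\rangle$ is the \emph{minimum}, so the other particles pull $v_i$ away from $x_i$ in a direction with positive $w$-component, and quantify this using the lower bound $a_0$ on the weights and the fact that all particles lie in $\mathcal{C}_{\delta_0}$. Handling the regime where $m(t)$ is already near $1$ (where the "spread" is small but so is $1 - m(t)$, and one needs the constant to stay bounded) requires a short separate Taylor-expansion argument. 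This is exactly the sort of estimate the paper presumably carries out carefully in the body of the proof.
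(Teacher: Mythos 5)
Your Step 1 (monotonicity of $m(t)=\min_i\langle x_i(t),w\rangle$, hence invariance of the cap) is essentially the paper's Step 1 and is fine, modulo a small slip: for \eqref{USA} the weights $\tfrac1n e^{\beta\langle x_i,x_j\rangle}$ do not sum to $1$, so $v_i$ is not a convex combination and $\langle v_i,x_i\rangle\le 1$ is not automatic; the correct route is the term-by-term bound $\sum_j a_{ij}\bigl(\langle x_j,w\rangle-\langle x_i,x_j\rangle\langle x_i,w\rangle\bigr)\ge m(t)\sum_j a_{ij}(1-\langle x_i,x_j\rangle)\ge 0$, which uses only $a_{ij}\ge0$ and $\langle x_i,x_j\rangle\le1$.

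Step 2, however, contains a genuine error, not just a technical obstacle. The differential inequality $\dot m(t)\ge c\,(1-m(t))$ with $m(t)=\min_i\langle x_i(t),w\rangle$ would force $m(t)\to1$, i.e.\ convergence of all particles to $w$ itself. That is false: the hypothesis admits an open set of valid directions $w$, and the limit cannot be all of them; more concretely, any configuration with $x_1=\dots=x_n=x\neq w$ inside the cap is a fixed point of the dynamics (since $\proj_x(x)=0$), and generic trajectories converge to such a point. Your own caveat ``unless all particles already coincide'' is precisely the generic terminal situation, and in that regime $1-\langle v_i,x_i\rangle\approx 0$ while $1-m(t)$ stays of order one, so no uniform constant $c>0$ exists. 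The paper circumvents this by splitting the argument: it first proves \emph{qualitative} convergence to some limit $x^*$ (compactness gives a subsequential limit $(\bar x_i)$; a Barbalat-type lemma applied to $\langle\dot x_i,w\rangle$ at the index attaining $r_\infty$ forces $\sum_j\bar a_{ij}(1-\langle\bar x_i,\bar x_j\rangle)=0$, hence $\bar x_1=\dots=\bar x_n=:x^*$, and rerunning the monotonicity argument with $w$ replaced by $x^*$ removes the subsequence). Only then does it run the quantitative estimate on $\alpha(t)=\min_i\langle x_i(t),x^*\rangle$, where the key extra input is that $x^*$ lies in the convex cone generated by the current particles, giving $\min_j\langle x_{i(t)}(t),x_j(t)\rangle\le\alpha(t)$ and hence $\dot\alpha\ge\frac{1}{2ne^{2\beta}}(1-\alpha)$. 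Your proposal is missing both the identification of the correct limit point and the convex-cone lever that converts spread into an exponential gain; as written, the target of your contraction argument is the wrong point, so the proof does not close. The final paragraph on general $Q,K$ is fine in spirit (the argument indeed only uses positivity and uniform bounds on the weights), but it inherits the Step 2 gap.
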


\begin{remark}
Lemma~\ref{lem: hemisphere.clustering} implies that 
$\left\{(\bar{x}_i)_{i\in[n]}\in(\S)^n \colon \bar{x}_1=\ldots=\bar{x}_n\right\}$ is Lyapunov asymptotically stable as a set. In fact, it is exponentially stable.
\end{remark}

Lemma~\ref{lem: hemisphere.clustering} is reminiscent of results on interacting particle systems on the sphere (see~\cite[Theorem 3.7]{caponigro2015nonlinear} for instance), and the literature on synchronization for the Kuramoto model on the circle (\cite[Lemma 2.8]{abdalla2022expander}, \cite[Theorem 3.1]{ha2020asymptotic} and Section~\ref{s:kuramoto}).
We often make use of the following elementary lemma.

\begin{lemma} \label{lem: ez.lemma}
    Let $f:\R_{\geq0}\to\R$ be a differentiable function such that 
    \begin{equation*}
        \int_0^{+\infty} |f(t)|\diff t + \sup_{t\in\R_{\geq0}}\left|\dot{f}(t)\right|<+\infty.
    \end{equation*}
    Then $\lim_{t\to+\infty} f(t)=0$.
\end{lemma}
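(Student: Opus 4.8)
The plan is to prove this by contradiction, following the standard argument behind Barbalat's lemma. Suppose $f(t)$ does not tend to $0$ as $t\to+\infty$. Then there exist $\varepsilon>0$ and an increasing sequence $t_k\to+\infty$ with $|f(t_k)|\geq\varepsilon$ for all $k$; after passing to a subsequence we may additionally assume $t_{k+1}-t_k\geq 1$. The idea is that a bounded derivative forces $f$ to stay away from $0$ on a definite-length interval around each $t_k$, and since these intervals can be taken disjoint, the integral $\int_0^{+\infty}|f|$ must diverge.

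Concretely, first set $M:=\sup_{t\geq0}|\dot f(t)|$. If $M=0$ then $f$ is constant, and $\int_0^{+\infty}|f|<+\infty$ forces $f\equiv0$, so we may assume $M>0$ and put $\delta:=\min\{\tfrac12,\tfrac{\varepsilon}{2M}\}$. By the mean value theorem, for every $t\in[t_k,t_k+\delta]$ one has $|f(t)-f(t_k)|\leq M\delta\leq\tfrac{\varepsilon}{2}$, hence $|f(t)|\geq\tfrac{\varepsilon}{2}$, and therefore
\begin{equation*}
\int_{t_k}^{t_k+\delta}|f(t)|\diff t\;\geq\;\frac{\delta\varepsilon}{2}.
\end{equation*}
The intervals $[t_k,t_k+\delta]$ are pairwise disjoint since $\delta\leq\tfrac12<1\leq t_{k+1}-t_k$, so summing over $k$ yields $\int_0^{+\infty}|f(t)|\diff t\geq\sum_{k\geq1}\tfrac{\delta\varepsilon}{2}=+\infty$, contradicting the integrability hypothesis. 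Hence $\lim_{t\to+\infty}f(t)=0$.

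There is essentially no obstacle in this argument; the only points that need a (routine) word are the extraction of a $1$-separated subsequence $(t_k)$ so that the ``bad'' intervals are genuinely disjoint, and disposing of the degenerate case $M=0$. I would present the whole thing in a few lines.
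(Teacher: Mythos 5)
Your proof is correct; the paper states this lemma without proof (calling it elementary), and your argument is the standard Barbalat-type one that fills that gap: a bounded derivative forces $|f|\geq\varepsilon/2$ on disjoint intervals of fixed length $\delta$ around the $t_k$, contradicting integrability. The handling of the degenerate case $M=0$ and the extraction of a $1$-separated subsequence are both fine.
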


The proof of \Cref{lem: hemisphere.clustering} is an adaptation of \cite[Theorem 1]{caponigro2015nonlinear}. We present it here for completeness.

\begin{proof}[Proof of \Cref{lem: hemisphere.clustering}]
We focus on the case~\eqref{USA}, and set 
$$a_{ij}(t):=n^{-1}e^{\beta\langle x_i(t), x_j(t)\rangle}>0.$$ 
The proof for~\eqref{SA} is identical, and one only needs to change the coefficients $a_{ij}(t)$ by $Z_{\beta,i}(t)^{-1}e^{\beta\langle x_i(t),x_j(t)\rangle}$ throughout. Also note that since we only make use of the positivity of the coefficients $a_{i,j}(t)$ throughout the proof, all arguments are readily generalizable to the case of arbitrary $d\times d$ matrices $Q$ and $K$ appearing in the inner products.

\subsubsection*{Step 1. Clustering}

For $t\geq 0$, consider
\begin{equation*}
i(t)\in \argmin_{i\in[n]} \langle x_i(t), w\rangle.
\end{equation*}
Fix $t_0\geq 0$. We have
\begin{align*}
&\left(\frac{\diff}{\diff t}\langle x_{i(t_0)}(\cdot),w\rangle\right)\Bigm|_{t=t_0}\\
&=\sum_{j=1}^n a_{i(t_0)j}(t_0)\Big(\langle x_j(t_0),w\rangle-\langle x_{i(t_0)}(t_0),x_j(t_0)\rangle\langle x_{i(t_0)}(t_0),w\rangle\Big)\geq 0.
\end{align*}
This implies that all points remain within the same open hemisphere at all times and the map
\begin{equation*} 
t\mapsto r(t):=\min_{i\in[n]} \langle x_i(t),w\rangle 
\end{equation*}
is non-decreasing on $\mathbb{R}_{\geq0}$. It is also bounded from above by $1$. 
We may thus define $r_\infty:=\lim_{t\rightarrow +\infty} r(t)$. Note that $r_\infty\geq r(0)>0$ by assumption. By compactness, there exist a sequence of times $\{t_k\}_{k=1}^{+\infty}$ with $t_k\rightarrow +\infty$, and some $(\overline{x}_i)_{i\in[n]}\in(\S)^n$ such that $\lim_{k\rightarrow+\infty} x_i(t_k)=\overline{x}_i$ for all $i\in[n]$. Using the definition of $r(t)$, we also find that 
\begin{equation*}
\langle \overline{x}_j,w\rangle\geq r_\infty
\end{equation*}
for all $j\in[n]$, and by continuity, there exists $i\in[n]$ such that $\langle \overline{x}_i,w\rangle = r_\infty$. Then
\begin{align} \label{eq: therighthandside}
\lim_{k\rightarrow +\infty}\langle \dot{x}_i(t_k),w\rangle=\sum_{\substack{j=1}}^n \overline{a}_{ij}(\langle w,\overline{x}_j\rangle-\langle \overline{x}_i,\overline{x}_j\rangle\langle \overline{x}_i,w\rangle)\geq r_\infty \sum_{\substack{j=1}}^n\overline{a}_{ij}(1-\langle \overline{x}_i,\overline{x}_j\rangle),
\end{align}
where we set $\overline{a}_{ij}:=e^{\beta\langle \overline{x}_i, \overline{x}_j\rangle}>0$. Notice that 
\begin{equation*}
    \lim_{k\to+\infty}\int_{t_k}^{+\infty} \langle \dot{x}_i(s),w\rangle \diff s=r_\infty-\lim_{k\to+\infty}\langle x_i(t_k),w \rangle= 0,
\end{equation*}
and by using the equation~\eqref{USA} we also find that $|\langle\ddot{x}_i(t), w\rangle|=O(e^{2\beta})$ for any $t\geq0$. Therefore by Lemma~\ref{lem: ez.lemma}, the left-hand side of~\eqref{eq: therighthandside} is equal to $0$, and consequently the right-hand side term as well.
This implies that $\overline{x}_1=\ldots=\overline{x}_n:=x^*$. Repeating the argument by replacing $w$ with $x^*$, we see that the extraction of a sequence $\{t_k\}_{k=1}^{+\infty}$ as above is not necessary, and therefore 
\begin{equation} \label{eq: qual.conv}
    \lim_{t\to+\infty} x_i(t)=x^*
\end{equation}
for all $i\in[n]$.

\subsubsection*{Step 2. Exponential rate.}
We now improve upon \eqref{eq: qual.conv}. Set
$$\alpha(t):=\min_{i\in[n]} \langle x_i(t),x^*\rangle.$$
From \eqref{eq: qual.conv} we gather that there exists some $t_0>0$ such that $\alpha(t)\geq\frac12$ for all $t\geq t_0$.
Also, in view of what precedes we know that $x^*$ lies in the convex cone generated by the points $x_1(t),\ldots,x_n(t)$ for any $t>0$. Thus, there exists some $\eta\in (0,1]$ such that $\eta x^*$ is a convex combination of the points $x_1(t),\ldots,x_n(t)$, which implies that
\begin{equation} \label{e:decompox*.step2}
x^*=\sum_{k=1}^n \theta_k(t)x_k(t), \hspace{0.5cm} \text{ for some } \hspace{0.5cm} \sum_{k=1}^n \theta_k(t)\geq 1, \quad \theta_k(t)\geq 0 \quad \forall k\in[n]. 
\end{equation}
For any $t$, we denote by $i(t)$ an element of $\argmin(\langle x_i(t),x^*\rangle)$ for which $\langle \dot{x}_i(t),x^*\rangle$ is smallest. It follows from a Taylor expansion of $\langle x_i(t+h),x^*\rangle$ for $h>0$ and $i\in[n]$ that
$$
\dot{\alpha}(t)=\langle \dot{x}_{i(t)}(t),x^*\rangle.
$$
Therefore
\begin{equation}\label{e:dotalpha.step2}
\dot{\alpha}(t)=\langle \dot{x}_{i(t)}(t),x^*\rangle\geq \sum_{j=1}^n a_{i(t)j}(t)(1-\langle x_{i(t)}(t),x_j(t)\rangle) \alpha(t)
\end{equation}
On another hand,
\begin{equation}\label{e:mineqalpha.step2}
\min_{j\in[n]} \langle x_{i(t)}(t),x_j(t)\rangle\leq \sum_{k=1}^n \theta_k(t)\langle x_{i(t)}(t),x_k(t)\rangle= \langle x_{i(t)}(t),x^*\rangle=\alpha(t).
\end{equation}
Plugging~\eqref{e:mineqalpha.step2} into~\eqref{e:dotalpha.step2} and using $a_{ij}(t)\geq n^{-1}e^{-2\beta}$ we get
\begin{equation}\label{e:diffineqalpha.step2}
\dot{\alpha}(t)\geq \frac{1}{2ne^{2\beta}}(1-\alpha(t))
\end{equation}
for $t\geq t_0$. Applying the Gr\"onwall inequality we get 
\begin{equation}
    1-\alpha(t)\leq \frac12 e^{-\frac{1}{2ne^{\beta}}(t-t_0)}
\end{equation}
for all $t\geq t_0$. The conclusion follows.
\end{proof}

In the case $d<n$, we can still apply Wendel's theorem (recalled below) together with Lemma~\ref{lem: hemisphere.clustering} to obtain clustering to a single point with probability at least $p_{n,d}$ for some explicit $p_{n,d}\in(0,1)$.

\begin{theorem}[Wendel, \cite{wendel1962problem}] \label{r:wendel}
Let $d,n\geq 1$ be such that $d\le n$. Let $x_1,\ldots, x_n$ be $n$ i.i.d. uniformly distributed points on $\S$. The probability that these points all lie in the same hemisphere is:
\begin{equation*}
\mathbb{P}\Big(\exists w\in\S \colon \langle x_i, w\rangle >0 \quad \mathrm{ for } \,\mathrm{ all } \quad i\in[n]\Big) = 2^{-(n-1)}\sum_{k=0}^{d-1} \binom{n-1}{k}.
\end{equation*}
\end{theorem}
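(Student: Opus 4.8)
The plan is to reproduce Wendel's symmetrization argument. Since the uniform law on $\S$ is invariant under $x\mapsto -x$, if $\varepsilon_1,\ldots,\varepsilon_n$ are i.i.d.\ uniform $\{\pm1\}$-valued signs independent of $(x_i)_{i\in[n]}$, then $(\varepsilon_1x_1,\ldots,\varepsilon_nx_n)$ and $(x_1,\ldots,x_n)$ have the same distribution, so the probability in question equals $\mathbb{E}\big[\,\mathbb{P}\big(\exists w\in\S\colon \langle\varepsilon_ix_i,w\rangle>0\ \forall i\ \big|\ x_1,\ldots,x_n\big)\big]$. I would evaluate this conditional probability by conditioning on $(x_i)_{i\in[n]}$: for a fixed sign vector $\sigma\in\{\pm1\}^n$ the set $\{w\in\mathbb{R}^d\colon \sigma_i\langle x_i,w\rangle>0\ \forall i\}$ is open, hence nonempty if and only if some $w$ off the union of hyperplanes $\bigcup_{i=1}^n x_i^\perp$ realizes the sign pattern $(\mathrm{sign}\langle x_i,w\rangle)_i=\sigma$; since $(\varepsilon_1,\ldots,\varepsilon_n)$ is uniform on $\{\pm1\}^n$, the conditional probability is $R_n(d)/2^n$, where $R_n(d)$ denotes the number of sign patterns attained by $w\in\mathbb{R}^d\setminus\bigcup_{i=1}^n x_i^\perp$, equivalently the number of connected components ("chambers") of that open set.

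It then remains to show $R_n(d)=2\sum_{k=0}^{d-1}\binom{n-1}{k}$. Almost surely the points $x_1,\ldots,x_n$ are in general position (every subcollection of size at most $d$ is linearly independent), since the exceptional configurations form a null set for the absolutely continuous uniform measure; under general position $R_n(d)$ depends only on $n$ and $d$. The combinatorial heart is the recursion $R_n(d)=R_{n-1}(d)+R_{n-1}(d-1)$: adjoining $x_n^\perp$ to the arrangement of the first $n-1$ hyperplanes splits exactly the chambers that $x_n^\perp$ meets, and their number equals the number of chambers into which the $(d-1)$-dimensional space $x_n^\perp$ is cut by the $n-1$ central hyperplanes $x_i^\perp\cap x_n^\perp$ ($i<n$) --- which are again in general position --- namely $R_{n-1}(d-1)$. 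With the base cases $R_1(d)=2$ and $R_n(1)=2$ (and, as a sanity check, $R_n(d)=2^n$ whenever $d\ge n$, since then $w\mapsto(\langle x_i,w\rangle)_i$ maps $\mathbb{R}^d$ onto $\mathbb{R}^n$ and every sign pattern is attained), a short induction using Pascal's rule yields the closed form. Substituting back gives the probability $R_n(d)\,2^{-n}=2^{-(n-1)}\sum_{k=0}^{d-1}\binom{n-1}{k}$.

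The measure-theoretic inputs --- symmetry under independent sign flips, almost-sure general position, and the fact that after conditioning only the list of attainable sign patterns matters --- are routine. The one step carrying genuine content is the chamber-counting recursion for central hyperplane arrangements in general position; this is where I would be most careful, though it is the classical induction on arrangements and may alternatively be cited from the literature.
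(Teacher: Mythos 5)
The paper does not prove this statement---it is quoted verbatim from Wendel's 1962 paper---so there is no internal proof to compare against; your argument is the classical one (Wendel's own): symmetrize by independent sign flips using the antipodal invariance of the uniform law, reduce the conditional probability to $R_n(d)/2^n$ where $R_n(d)$ counts chambers of the central arrangement $\bigcup_i x_i^\perp$, and evaluate $R_n(d)=2\sum_{k=0}^{d-1}\binom{n-1}{k}$ via the Schl\"afli recursion $R_n(d)=R_{n-1}(d)+R_{n-1}(d-1)$ under almost-sure general position. The argument is correct and complete in outline; the only step worth writing out explicitly is the bijection between attainable sign patterns and chambers (each sign pattern's realization set is an intersection of open half-spaces, hence convex and connected, hence a single chamber), which you use implicitly when identifying the two counts.
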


\subsection{More precise quantitative convergence}
\label{sec: phase}

When $n$ is fixed and $d\rightarrow +\infty$, in addition to showing the formation of a cluster as in Theorem~\ref{thm: d.infty}, it is possible to quantitatively describe the entire evolution of the particles with high probability. To motivate this, on the one hand we note that since the dynamics evolve on $\S$, inner products are representative of the distance between points, and clustering occurs if $\langle x_i(t), x_j(t)\rangle \rightarrow 1$ for any $(i,j)\in[n]^2$ as $t\rightarrow +\infty$.
On the other hand, if $d\gg n$, $n$ points in a generic initial sequence are \emph{almost orthogonal} by concentration of measure~\cite[Chapter~3]{Ver18}, and we are thus able to compare their evolution with that of an initial sequence of truly \emph{orthogonal} ones. 

We begin by describing the case of exactly orthogonal initial particles, which is particularly simple as the dynamics are described by a single parameter.

\begin{theorem} \label{thm: orthogonal}
Let $\beta\geq0$, $d,n\geq2$ be arbitrary. 
Consider an initial sequence $(x_i(0))_{i\in[n]}\in(\S)^n$ of $n$ pairwise orthogonal points: 
$\langle x_i(0), x_j(0)\rangle=0$ for $i\neq j$, and let $(x_i(\cdot))_{i\in[n]}\in C^0(\mathbb{R}_{\geqslant0};(\S)^n)$ denote the unique solution to the corresponding Cauchy problem for~\eqref{SA} (resp. for~\eqref{USA}).
Then the angle $\angle (x_i(t), x_j(t))$ is the same for all distinct $i,j\in[n]$:
\begin{equation*}
\angle (x_i(t), x_j(t)) = \theta_\beta(t) 
\end{equation*}
for $t\geqslant0$ and some $\theta_\beta\in C^0(\mathbb{R}_{\geq0};\mathbb{T})$. 
Furthermore, for~\eqref{SA}, $\gamma_\beta(t):=\cos(\theta_\beta(t))$ satisfies 
\begin{align} \label{eq: ybeta}
\begin{dcases}
    \dot{\gamma}_\beta(t)=\frac{2e^{\beta \gamma_\beta(t)}(1-\gamma_\beta(t))((n-1)\gamma_\beta(t)+1)}{e^\beta+(n-1)e^{\beta \gamma_\beta(t)}} &\text{ for } t\ge 0 \\
    \gamma_\beta(0)=0\,,
\end{dcases}
\end{align}
and for~\eqref{USA}, we have
\begin{align} \label{eq: ybetaUSA}
\begin{dcases}
    \dot{\gamma}_\beta(t)=\frac{2}{n}e^{\beta \gamma_\beta(t)}(1-\gamma_\beta(t))((n-1)\gamma_\beta(t)+1) &\text{ for } t\ge 0\\
    \gamma_\beta(0)=0\,.
\end{dcases}
\end{align}
\end{theorem}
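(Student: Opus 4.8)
The key structural fact is a symmetry argument: the orthonormal family $(x_i(0))_{i\in[n]}$ can be completed to... wait, $n$ orthonormal vectors in $\mathbb{R}^d$ with $d\geq n$. The symmetric group $\mathbf{S}_n$ acts on the configuration by permuting the $x_i$, and any permutation $\pi$ is realized by an orthogonal transformation $O_\pi$ of $\mathbb{R}^d$ (permute the first $n$ coordinates in the basis spanned by the $x_i(0)$, fix the orthogonal complement). The dynamics~\eqref{SA} (and~\eqref{USA}) are equivariant under orthogonal transformations of $\mathbb{R}^d$ — because inner products, $\proj_x$, and the softmax weights are all $O(d)$-invariant — and permutation-equivariant. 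Hence for each $\pi$, $O_\pi x_i(t)$ solves the same Cauchy problem as $x_{\pi(i)}(t)$; by uniqueness of solutions (ODE with locally Lipschitz right-hand side on the compact manifold $(\mathbb{S}^{d-1})^n$), $O_\pi x_i(t) = x_{\pi(i)}(t)$ for all $t$. Applying this with transpositions shows $\langle x_i(t), x_j(t)\rangle$ depends only on whether $i=j$, so it equals a common value $\gamma_\beta(t)$ for all $i\neq j$, and $\angle(x_i(t),x_j(t)) = \theta_\beta(t) := \arccos\gamma_\beta(t)$. (One should also note $\gamma_\beta(t)\in(-\tfrac{1}{n-1},1]$ cannot hit $-1/(n-1)$ from above, so the $x_i(t)$ stay distinct/linearly independent for finite $t$ and $\theta_\beta$ is well-defined and continuous.)

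With the symmetry in hand, the ODE for $\gamma_\beta$ is a direct computation. First I would record $\|x_i(t)\|^2 = 1$, hence $\langle \dot x_i(t), x_i(t)\rangle = 0$ automatically (consistent with $\proj_{x_i}$). Then differentiate $\gamma_\beta(t) = \langle x_1(t), x_2(t)\rangle$:
\begin{equation*}
\dot\gamma_\beta = \langle \dot x_1, x_2\rangle + \langle x_1, \dot x_2\rangle = 2\langle \dot x_1, x_2\rangle
\end{equation*}
by the $i\leftrightarrow j$ symmetry. Plug in~\eqref{SA}: writing $a_{1j}(t) = e^{\beta\langle x_1,x_j\rangle}/Z_{\beta,1}(t)$, with $\langle x_1,x_1\rangle=1$ and $\langle x_1,x_j\rangle=\gamma_\beta$ for $j\neq 1$, one gets $Z_{\beta,1}(t)=e^\beta + (n-1)e^{\beta\gamma_\beta}$, and
\begin{equation*}
\langle \dot x_1, x_2\rangle = \sum_{j=1}^n a_{1j}\big(\langle x_j,x_2\rangle - \langle x_1,x_j\rangle\langle x_1,x_2\rangle\big).
\end{equation*}
Now split the sum over $j\in\{1\}$, $j=2$, and $j\notin\{1,2\}$, using $\langle x_j, x_2\rangle = \gamma_\beta$ for $j\neq 2$ and $=1$ for $j=2$, and $\langle x_1,x_j\rangle = \gamma_\beta$ for $j\neq 1$, $=1$ for $j=1$. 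After collecting terms the numerator factors as $e^{\beta\gamma_\beta}(1-\gamma_\beta)((n-1)\gamma_\beta+1)$, giving~\eqref{eq: ybeta}; for~\eqref{USA} one simply replaces $Z_{\beta,1}$ by $n$, yielding~\eqref{eq: ybetaUSA}. The initial condition $\gamma_\beta(0)=0$ is the orthogonality hypothesis.

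The only genuinely delicate point is justifying that $\theta_\beta(t)$ is a well-defined continuous function into $\mathbb{T}$, i.e. that the common inner product never reaches a value forcing degeneracy of the angle; but this is immediate from~\eqref{eq: ybeta}--\eqref{eq: ybetaUSA} since $\gamma_\beta(0)=0>-1/(n-1)$ and $\dot\gamma_\beta>0$ whenever $(n-1)\gamma_\beta+1>0$ and $\gamma_\beta<1$, so $\gamma_\beta$ is increasing and stays in $[0,1)$ for all finite $t$ (in fact $\gamma_\beta(t)\to 1$, recovering clustering). Everything else — equivariance, uniqueness — is routine, so the main "obstacle" is really just being careful about bookkeeping in the three-way split of the sum and about the continuity/range claim for $\theta_\beta$.
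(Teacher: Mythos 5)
Your proposal is correct and follows essentially the same route as the paper: the permutation symmetry is realized by an orthogonal map permuting the initial basis vectors, combined with permutation equivariance and uniqueness for the Cauchy problem to get $\langle x_i(t),x_j(t)\rangle\equiv\gamma_\beta(t)$, followed by the same direct computation of $\dot\gamma_\beta=2\langle\dot x_1,x_2\rangle$ with the three-way split of the sum (the $j=i$ term vanishing). Your extra worry about degeneracy of $\theta_\beta$ is unnecessary—the angle between two unit vectors is always well defined as $\arccos$ of their inner product—but it does no harm, and your monotonicity observation for $\gamma_\beta$ is consistent with the paper's later use of the ODE.
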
 

Here and henceforth, $\mathbb{T}=\mathbb{R}/2\pi\mathbb{Z}$ denotes the one-dimensional torus. We provide a brief proof of \Cref{thm: orthogonal} just below.
The following result then shows that when $d\gg n$, $t\mapsto \gamma_\beta(t)$ is a valid approximation for $t\mapsto \langle x_i(t),x_j(t)\rangle$ for any distinct $i,j\in [n]$.

\begin{theorem} \label{thm: phase.transition.curve}
Fix $\beta\geqslant0$ and $n\geq2$. Then there exists some $d^*(n,\beta)\geq n$ such that for all $d\geq d^*(n,\beta)$, the following holds.
Consider a sequence $(x_i(0))_{i\in[n]}$ of $n$ i.i.d. uniformly distributed points on $\S$, and let $(x_i(\cdot))_{i\in[n]}\in C^0(\mathbb{R}_{\geq0}; (\S)^n)$ denote the unique solution to the corresponding Cauchy problem for~\eqref{SA}. 
Then there exist 
$C=C(n,\beta)>0$ and $\lambda=\lambda(n,\beta)>0$, such that with probability at least $1-2n^2d^{-1/64}$, 
\begin{equation} \label{eq: upto-t}
\Big|\langle x_i(t), x_j(t)\rangle-\gamma_\beta(t)\Big|\leqslant \min\left\{2\cdot c(\beta)^{nt}\sqrt{\frac{\log d}{d}}, C e^{-\lambda t}\right\}
\end{equation}
holds for any $i\neq j$ and $t\geq0$, where $c(\beta)=e^{10\max\{1,\beta\}}$, and $\gamma_\beta$ is the unique solution to~\eqref{eq: ybeta}.
\end{theorem}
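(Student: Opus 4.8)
The plan is to exploit the fact that the Gram matrix $G(t):=\bigl(\langle x_i(t),x_j(t)\rangle\bigr)_{i,j\in[n]}$ evolves by a \emph{closed} autonomous system of ODEs. Indeed, projecting~\eqref{SA} and using $\proj_{x}y=y-\langle x,y\rangle x$ one finds
\begin{equation*}
\dot G_{ij}=F_{ij}(G),\qquad
F_{ij}(H):=\frac{1}{Z_i(H)}\sum_{k=1}^n e^{\beta H_{ik}}\bigl(H_{kj}-H_{ik}H_{ij}\bigr)\;+\;\bigl(\text{same with }i,j\text{ interchanged}\bigr),
\end{equation*}
where $Z_i(H):=\sum_{k=1}^n e^{\beta H_{ik}}$ (and $1/Z_i$ is replaced by $1/n$ for~\eqref{USA}). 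The field $F=(F_{ij})$ is real-analytic on the compact set $\mathcal{G}:=\{H=H^\top:\ H_{ii}=1,\ |H_{ij}|\le1\}$, on which $Z_i(H)\ge e^{\beta}$. Since $d\ge n$, one can place $n$ pairwise orthogonal points on $\S$, so Theorem~\ref{thm: orthogonal} shows that $\Gamma(t):=\bigl(1-\gamma_\beta(t)\bigr)I_n+\gamma_\beta(t)\mathbf{1}\mathbf{1}^\top$ is the exact solution of $\dot\Gamma=F(\Gamma)$ with $\Gamma(0)=I_n$, and it remains in $\mathcal{G}$ because $\gamma_\beta(t)\in[0,1)$. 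The theorem thus reduces to comparing the two solutions $G(\cdot)$ and $\Gamma(\cdot)$ of the \emph{same} ODE, the only randomness sitting in $G(0)$.

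For the first bound in~\eqref{eq: upto-t} I would combine concentration and Grönwall. Concentration of measure on $\S$ \cite[Chapter~3]{Ver18} gives $\mathbb{P}\bigl(|\langle x_i(0),x_j(0)\rangle|>s\bigr)\lesssim e^{-cds^2}$; taking $s=\tfrac{1}{\sqrt{32}}\sqrt{\log d/d}$ and a union bound over the $\binom n2$ pairs yields, with probability at least $1-2n^2d^{-1/64}$, that $\delta(0):=\max_{i\ne j}|G_{ij}(0)-\Gamma_{ij}(0)|\le\tfrac{1}{\sqrt{32}}\sqrt{\log d/d}$. Writing $\delta(t):=\max_{i\ne j}|G_{ij}(t)-\Gamma_{ij}(t)|$ and taking the Dini derivative of this maximum, one obtains $D^{+}\delta(t)\le L\,\delta(t)$ with $L:=\sup_{i\ne j}\sup_{H\in\mathcal{G}}\sum_{k\ne\ell}|\partial_{H_{k\ell}}F_{ij}(H)|$; only off-diagonal entries enter because $G$ and $\Gamma$ share the same constant diagonal, and the segment joining them stays in $\mathcal{G}$ by convexity. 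Grönwall then gives $\delta(t)\le\delta(0)e^{Lt}$, and the substance of this step is the explicit estimate $L\le 10\,n\max\{1,\beta\}=n\log c(\beta)$ — the factor $n$ is genuine, stemming from the fact that $Z_i(H)$ depends on the whole $i$-th row of $H$. This produces $|\langle x_i(t),x_j(t)\rangle-\gamma_\beta(t)|\le\delta(t)\le2\,c(\beta)^{nt}\sqrt{\log d/d}$ for all $t\ge0$.

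For the second bound I must upgrade this to $|\langle x_i(t),x_j(t)\rangle-\gamma_\beta(t)|\le Ce^{-\lambda t}$ for all $t\ge0$, with $C,\lambda$ depending only on $(n,\beta)$. From~\eqref{eq: ybeta}, $\gamma_\beta$ is increasing with $\gamma_\beta(t)\to1$; I would fix $\tau_1=\tau_1(n,\beta)$ with $1-\gamma_\beta(\tau_1)\le\varepsilon_0/2$ for a small constant $\varepsilon_0(n,\beta)$, then enlarge $d^{*}(n,\beta)$ so that the first bound already forces $\delta(\tau_1)\le c(\beta)^{n\tau_1}\tfrac{1}{\sqrt{32}}\sqrt{\log d/d}\le\varepsilon_0/2$. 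Hence $\max_{i\ne j}\bigl(1-G_{ij}(\tau_1)\bigr)\le\varepsilon_0$: at time $\tau_1$ all particles lie in a cone of half-angle $O(\sqrt{\varepsilon_0})$ around $x_1(\tau_1)$. Re-running the proof of Lemma~\ref{lem: hemisphere.clustering} from time $\tau_1$ with $w=x_1(\tau_1)$: its Step~1 shows $\min_i\langle x_i(t),x_1(\tau_1)\rangle$ is non-decreasing, so the common limit $x^{*}$ also lies in this cone and $\min_i\langle x_i(t),x^{*}\rangle\ge1-\varepsilon_0-\sqrt{2\varepsilon_0}\ge\tfrac12$ for all $t\ge\tau_1$ once $\varepsilon_0$ is small — precisely the input of its Step~2 with ``$t_0$''$=\tau_1$ — whence $1-G_{ij}(t)\le C'e^{-\lambda'(t-\tau_1)}$ for $t\ge\tau_1$, with $C',\lambda'$ depending only on $(n,\beta)$. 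Separately, from~\eqref{eq: ybeta} one checks $\dot\gamma_\beta\ge e^{-\beta/2}(1-\gamma_\beta)$ whenever $\gamma_\beta\ge\tfrac12$, so $1-\gamma_\beta(t)\le\tfrac{\varepsilon_0}{2}e^{-e^{-\beta/2}(t-\tau_1)}$ for $t\ge\tau_1$. Adding these and enlarging $C$ to absorb the compact interval $[0,\tau_1]$ (where $|G_{ij}-\gamma_\beta|\le2$) yields $|\langle x_i(t),x_j(t)\rangle-\gamma_\beta(t)|\le Ce^{-\lambda t}$ for all $t\ge0$. Taking the minimum of the two bounds completes the proof; the case~\eqref{USA} is handled identically, only the two-sided bound on $Z_i$ being used.

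The main obstacle is the short-time step, and within it the explicit Lipschitz bound $L\le10\,n\max\{1,\beta\}$: one must differentiate $F_{ij}$ through the softmax normalization $1/Z_i$, control the resulting $O(n)$-term sums uniformly over $\mathcal{G}$, and verify that the various $e^{\pm\beta}$ factors collapse into $c(\beta)=e^{10\max\{1,\beta\}}$. The remaining ingredients — concentration on the sphere, the Grönwall comparison, and the re-use of the proof of Lemma~\ref{lem: hemisphere.clustering} — are routine.
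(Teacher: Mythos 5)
Your argument is correct and reaches the same two bounds by the same overall strategy as the paper (a Lipschitz/Gr\"onwall comparison plus concentration for the first bound in~\eqref{eq: upto-t}; the cone-collapse mechanism of Lemma~\ref{lem: hemisphere.clustering} plus the explicit decay of $1-\gamma_\beta$ for the second), but the short-time step is implemented by a genuinely different route. The paper compares \emph{particle trajectories}: it inductively constructs an exactly orthogonal frame $(y_i(0))_{i\in[n]}$ with $\|x_i(0)-y_i(0)\|\leq\sqrt{\log d/d}$ on a high-probability event, propagates this via a Lipschitz estimate for the flow map on $(\S)^n$ with constant $10\max\{1,\beta\}n$, and then passes to inner products. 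You instead observe that the Gram matrix obeys a closed autonomous ODE on the convex compact set $\mathcal{G}$, that Theorem~\ref{thm: orthogonal} exhibits $(1-\gamma_\beta(t))I_n+\gamma_\beta(t)\mathbf{1}\mathbf{1}^\top$ as an exact reference solution, and compare the two Gram trajectories directly. This buys you a cleaner concentration step --- you only need $|\langle x_i(0),x_j(0)\rangle|\lesssim\sqrt{\log d/d}$ pairwise, which is immediate from L\'evy concentration plus a union bound, with no Gram--Schmidt-type construction --- and it removes the lossy passage from $\|x_i-y_i\|$ to inner products (the factor $2$ in~\eqref{eq: upto-t}). What it costs is that the one nontrivial computation, the Lipschitz constant of the Gram vector field $F$ on $\mathcal{G}$, is asserted ($L\leq 10n\max\{1,\beta\}$) rather than carried out; you correctly flag this as the substantive remaining check, and a crude row-by-row differentiation through the softmax gives $O(1)+O(\beta)$ per entry summed over $k$, so the claimed bound holds with room to spare (the factor $n$ is in fact not even needed at the Gram level, but keeping it only weakens the bound and matches the stated $c(\beta)^{nt}$). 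One further small dividend of your route: the long-time exponential bound is obtained deterministically on the concentration event from the cone condition at time $\tau_1$, so you never need to intersect with the almost-sure event of Theorem~\ref{thm: d.infty} as the paper does. The handling of the second bound (choice of $\tau_1$, re-entry into Steps 1--2 of Lemma~\ref{lem: hemisphere.clustering} with $w=x_1(\tau_1)$, the differential inequality $\dot\gamma_\beta\geq e^{-\beta/2}(1-\gamma_\beta)$ once $\gamma_\beta\geq\tfrac12$, and absorbing $[0,\tau_1]$ into $C$) matches the paper's Step~3 essentially verbatim.
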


Since the proof is rather lengthy, we defer it to Appendix~\ref{app: proof.1}. It  relies on combining the stability of the flow with respect to the initial data (entailed by the Lipschitz nature of the vector field) with concentration of measure. 
An analogous statement also holds for~\eqref{USA}, and more details can be found in Remark~\ref{rem: usa.d}, whereas 
the explicit values of $C$ and $\lambda$ can be found in~\eqref{e:ineqsecondpart}.
The upper bound in~\eqref{eq: upto-t} is of interest in regimes where $d$ and/or $t$ are sufficiently large as the error in~\eqref{eq: upto-t} is trivially bounded by $2$.

\begin{proof}[Proof of \Cref{thm: orthogonal}] We split the proof in two parts. We focus on proving the result for the dynamics~\eqref{SA}, since the same arguments readily apply to the dynamics~\eqref{USA}.

\subsubsection*{Part 1. The angle $\theta_\beta(t)$}
We first show there exists $\theta\in C^0(\R_{\geq0};\mathbb{T})$ such that $\theta(t)=\angle (x_i(t), x_j(t))$ for any distinct $(i, j)\in[n]^2$ and $t\geq0$. 
Since the initial tokens are orthogonal (and thus $d\geq n$), we may consider an orthonormal  basis $(e_1,\ldots,e_d)$ of $\mathbb{R}^d$ such that $x_i(0)=e_i$ for $i\in[n]$.  Let $\pi:[d]\rightarrow [d]$ be a permutation. By decomposing any $x\in\S$ in this basis, we define $P_\pi:\S\rightarrow\S$ as $$P_\pi\left(\sum_{i=1}^n a_ie_i\right)=\sum_{i=1}^n a_ie_{\pi(i)}.$$
Setting $y_i(t)=P_\pi(x_i(t))$ for $i\in[n]$, we see that $y_i(t)$ solves~\eqref{SA} with initial condition $y_i(0)=P_\pi(x_i(0))$. But  $(x_{\pi(1)}(t),\ldots, x_{\pi(n)}(t))$ is a solution of~\eqref{SA} by permutation equivariance, and it has the same initial condition since $P_\pi(x_i(0))=x_{\pi(i)}(0)$. Consequently, we deduce that $P_\pi(x_i(t))=x_{\pi(i)}(t)$ for any $t\geq 0$ and any $i\in[d]$. Hence $$\langle x_i(t),x_j(t)\rangle =\langle P_\pi(x_i(t)),P_\pi(x_j(t))\rangle=\langle x_{\pi(i)}(t), x_{\pi(j)}(t)\rangle$$ 
which concludes the proof.

\subsubsection*{Part 2. The curve $\gamma_\beta(t)$}
By virtue of the orthogonality assumption we have $\gamma_\beta(0)=\cos(\theta_\beta(0))=0$.
To prove that $\gamma_\beta(t)$ satisfies~\eqref{eq: ybeta} for the case of~\eqref{SA}, recall that
\begin{align*}
\proj_{x_i(t)}(x_j(t))=x_j(t)-\langle x_i(t), x_j(t)\rangle\, x_i(t).
\end{align*}
Then for $k\neq i$,
\begin{align*}
\dot{\gamma}_\beta(t) &= 2\langle \dot{x}_i(t), x_k(t)\rangle \\
&= 2\sum_{\substack{j=1}}^n \left(\frac{e^{\beta\langle x_i(t), x_j(t)\rangle}}{\sum_{\ell=1}^n e^{\beta\langle x_i(t), x_\ell(t)\rangle}}\right) \left(\langle x_j(t), x_k(t)\rangle-\langle x_i(t), x_j(t)\rangle\langle x_i(t), x_k(t)\rangle\right).
\end{align*}
Since the denominator in the above expression is equal to $(n-1)e^{\beta \gamma_\beta(t)}+e^\beta$, we end up with
\begin{align*}
\dot{\gamma}_\beta(t) &= \frac{2e^{\beta \gamma_\beta(t)}}{(n-1)e^{\beta \gamma_\beta(t)}+e^\beta}\sum_{\substack{j=1}}^n \Big(\langle x_j(t), x_k(t)\rangle-\langle x_i(t), x_j(t)\rangle\langle x_i(t), x_k(t)\rangle\Big)\\
&= \frac{2e^{\beta \gamma_\beta(t)}}{(n-1)e^{\beta \gamma_\beta(t)}+e^\beta}(1-\gamma_\beta(t)^2+(n-2)(\gamma_\beta(t)-\gamma_\beta(t)^2)),
\end{align*}
as desired. 
\end{proof}

\subsection{Metastability and a phase transition} \label{sec: metastability}

An interesting byproduct of \Cref{thm: orthogonal} and \Cref{thm: phase.transition.curve} is the fact that they provide an accurate approximation of the exact \emph{phase transition curve} delimiting the clustering and non-clustering regimes, in terms of $t$ and $\beta$. 
To be more precise, given an initial sequence $(x_i(0))_{i\in[n]}\in(\S)^n$ of random points distributed independently according to the uniform distribution on $\S$, and for any fixed $0<\delta\ll1$, we define the phase transition curve as the boundary 
\begin{equation*}
    \Gamma_{d,\delta}=\partial\left\{t,\beta \ge 0\colon t=\arginf_{s\ge 0}\Big(\mathbb{P}(\langle x_1(s),x_2(s)\rangle\geq 1-\delta)=1-2n^2d^{-\frac{1}{64}}\Big) \right\}
\end{equation*}
where $(x_i(\cdot))_{i\in[n]}$ denotes the solution to the corresponding Cauchy problem for~\eqref{SA}. (Here the choice of the first two particles instead of a random distinct pair is justified due to permutation equivariance.) 
\Cref{thm: phase.transition.curve} then gives the intuition that  over compact subsets of $(\R_{\geq0})^2$, $\Gamma_{d,\delta}$ should be well-approximated by 
\begin{equation} \label{eq: gamma.infty}
 \Gamma_{\infty,\delta}=\Big\{t, \beta\geq0\colon \gamma_\beta(t)= 1-\delta\Big\}.   
\end{equation}

\begin{figure}[h!]
\centering
\begin{subfigure}{.33\textwidth}
  \centering
  \includegraphics[scale=0.3085]{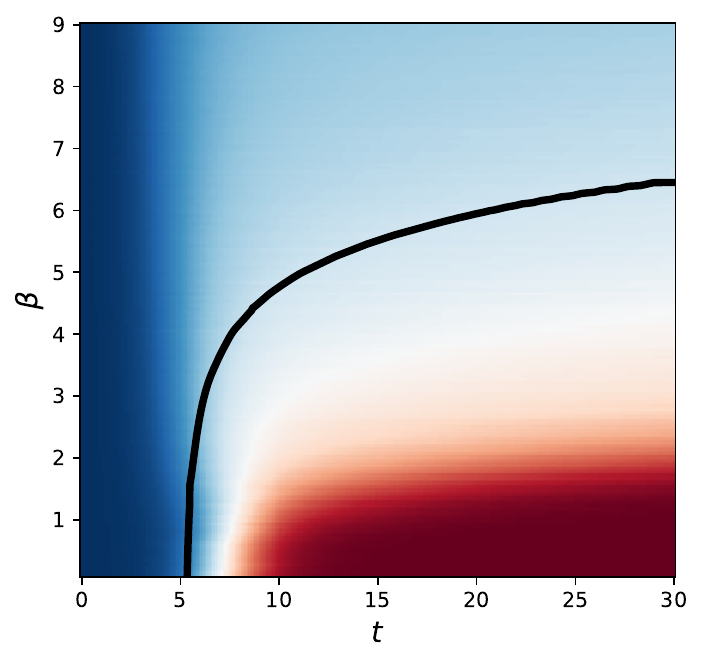}
  \caption{$d=2$}
\end{subfigure}%
\begin{subfigure}{.33\textwidth}
  \centering
  \includegraphics[scale=0.3085]{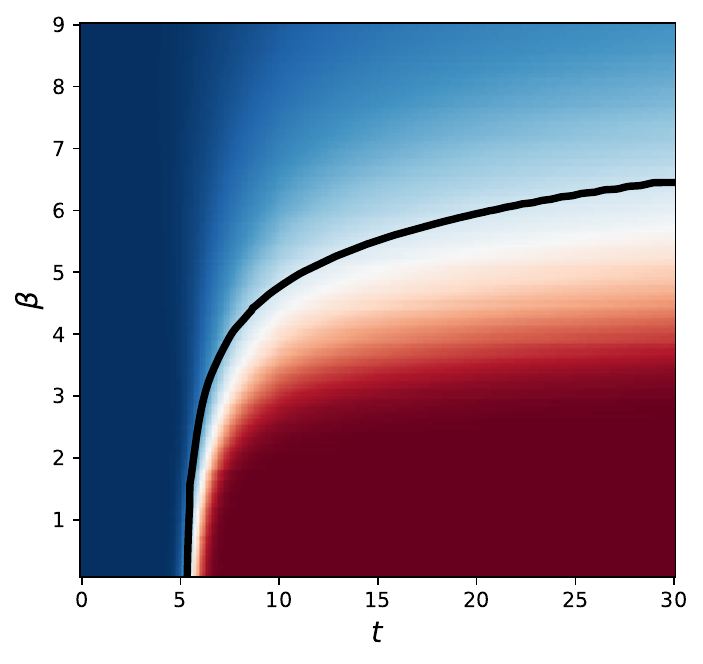}
  \caption{$d=8$}
\end{subfigure}%
\begin{subfigure}{.33\textwidth}
  \centering
  \includegraphics[scale=0.3095]{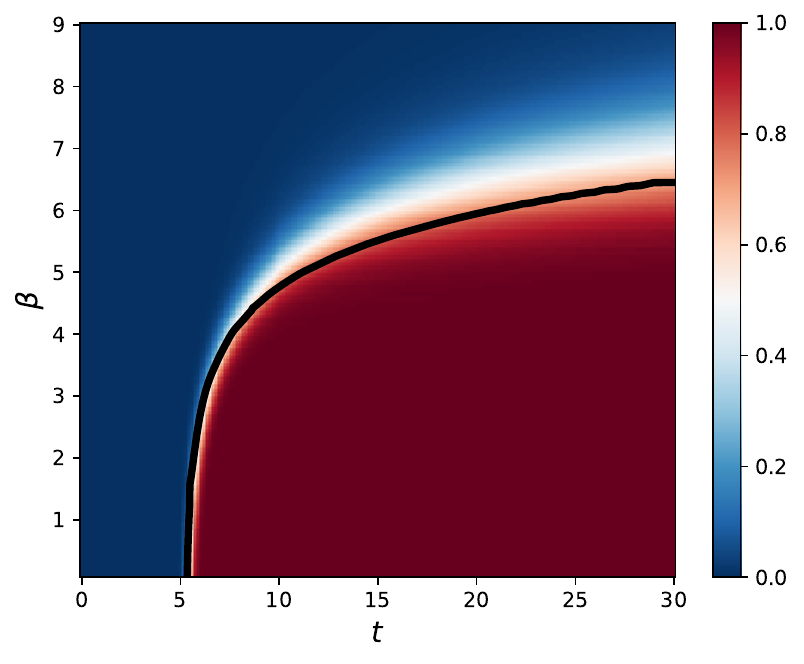}
  \caption{$d=32$}
\end{subfigure}
\begin{subfigure}{.33\textwidth}
  \centering
  \includegraphics[scale=0.3085]{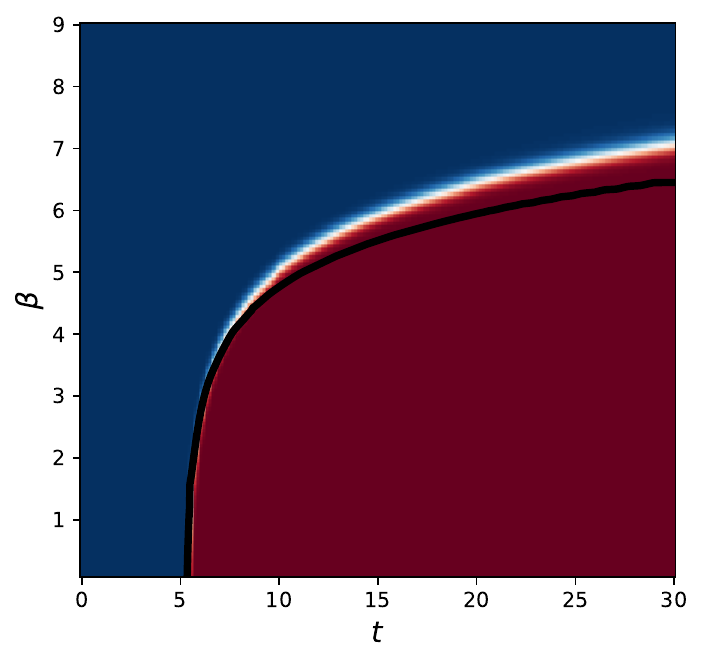}
  \caption{$d=128$}
\end{subfigure}%
\begin{subfigure}{.33\textwidth}
  \centering
  \includegraphics[scale=0.3085]{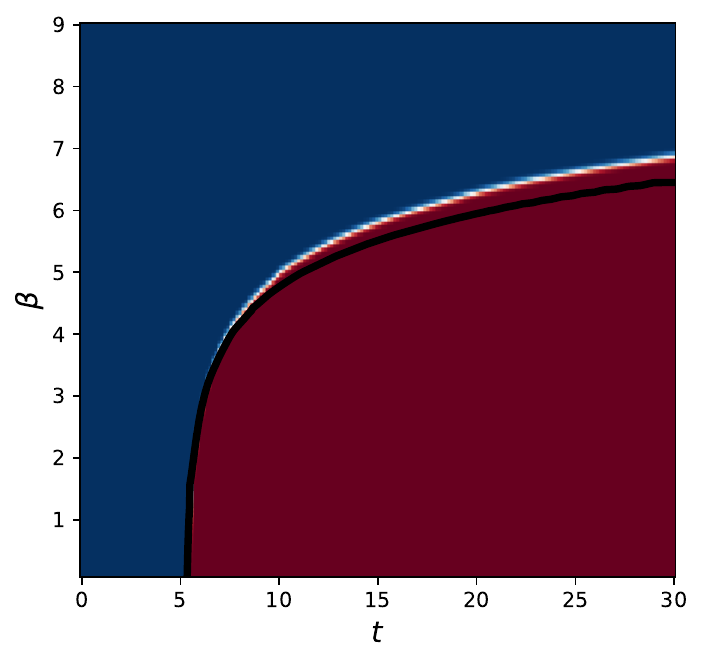}
  \caption{$d=512$}
\end{subfigure}%
\begin{subfigure}{.33\textwidth}
  \centering
  \includegraphics[scale=0.3095]{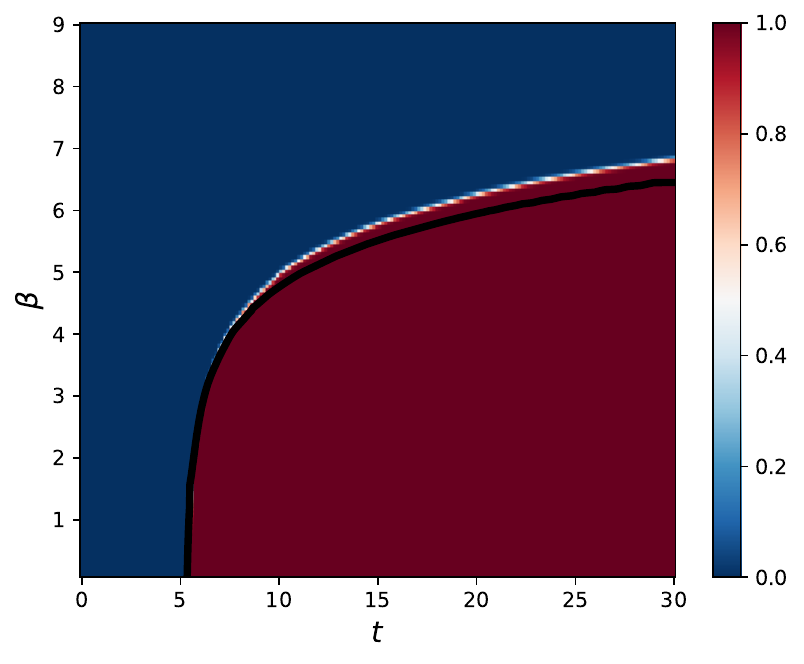}
  \caption{$d=1024$}
\end{subfigure}
\caption{
Plots of the probability that randomly initialized particles following~\eqref{SA} cluster to a single point as a function of $t$ and $\beta$: we graph the function $(t,\beta)\mapsto \mathbb{P}_{(x_1(0),\ldots,x_n(0))\sim\sigma_d}\left(\{\langle x_1(t),x_2(t)\rangle\geq1-\delta\}\right)$,
which is equal to $(t,\beta)\mapsto \mathbb{P}_{(x_1(0),\ldots,x_n(0))\sim\sigma_d, i\neq j \text{ fixed}}\left(\{\langle x_1(t),x_2(t)\rangle\geq1-\delta\}\right)$
by permutation equivariance. We compute this function by generating the average of the histogram of $\{\langle x_i(t),x_j(t)\rangle\geq1-\delta\colon(i,j)\in[n]^2, i\neq j\}$ over $2^{10}$ different realizations of initial sequences. 
Here, $\delta=10^{-3}$, $n=32$, while $d$ varies.
We see that the curve $\Gamma_{\infty,\delta}$ defined in~\eqref{eq: gamma.infty} approximates the actual phase transition with increasing accuracy as $d$ grows, as implied by \Cref{thm: phase.transition.curve}.
}
    \label{fig: phase.diag.Id}
\end{figure}

This is clearly seen in Figure~\ref{fig: phase.diag.Id}, along with the fact that the resolution of this approximation increases with $d\to+\infty$.

Figure~\ref{fig: phase.diag.Id} appears to contain more information than what we may gather from Theorem~\ref{thm: d.infty}, Theorem~\ref{thm: orthogonal} and Theorem~\ref{thm: phase.transition.curve}. 
In particular, for small $d$, we see the appearance of a zone (white/light blue in Figure~\ref{fig: phase.diag.Id}) of parameters $(t,\beta)$ for which the probability of particles being clustered is positive, but not close to one. A careful inspection of this region reveals that points are grouped in a finite number of clusters; see Figure~\ref{fig: metastability}. The presence of such a zone indicates the emergence of a long-time \emph{metastable} state where points are clustered into several groups but eventually relax to a single cluster in long-time. This two-time-scale phenomenon is illustrated in Figure~\ref{fig: metastability} and prompts us to formulate the following question.   

\begin{open} \label{prob: metastability}
    Do the dynamics enter a transient metastable state, in the sense that for $\beta\gg1$, all particles stay in the vicinity of $m<n$ clusters for long periods of time, before they all collapse to the final cluster $\{x^*\}$?
\end{open}

There have been important steps towards a systematic theory of metastability for gradient flows, with applications to nonlinear parabolic equations---typically reaction-diffusion equations such as the Allen-Cahn or Cahn-Hilliard equations \cite{otto2007slow, kohn2002upper}---. 
While these tools to not readily apply to the current setup, they form an important starting point to answer this question.

\begin{figure}[h!]
    \centering
    \includegraphics[width=\textwidth]{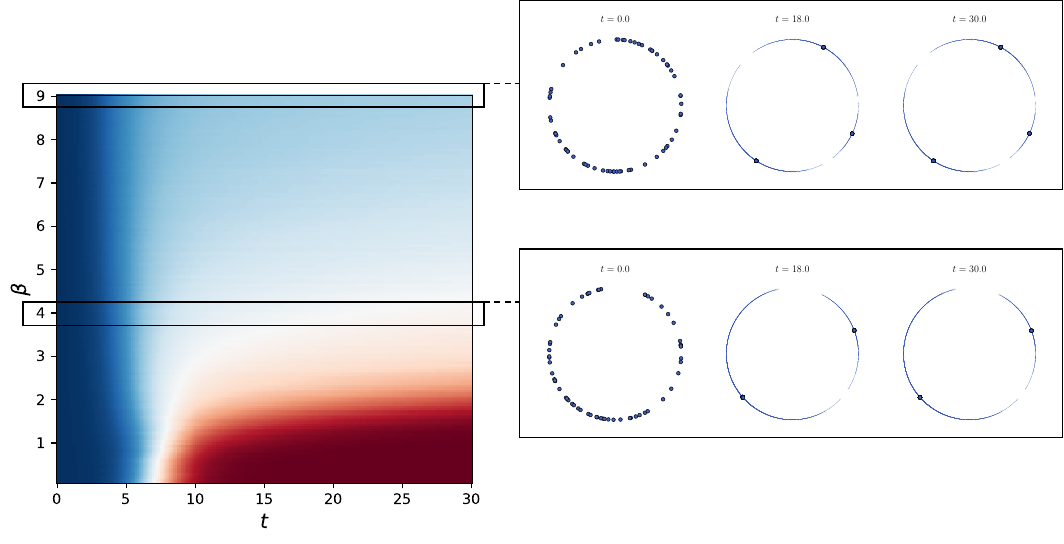}
    \caption{We zoom in on the phase diagram (Figure \ref{fig: phase.diag.Id}) for the dynamics on the circle: $d=2$. For $\beta=4, 9$, we also display a trajectory of~\eqref{SA} for a randomly drawn initial condition at times $t=2.5, 18, 30$. We see that the particles settle at $2$ clusters when $\beta=4$ (bottom right) and $3$ clusters when $\beta=9$ (top right), for a duration of time. This reflects our metastability claim for large $\beta$. 
    }
    \label{fig: metastability}
\end{figure}

Finally, one may naturally ask whether the clustering and phase diagram conclusions persist when the parameter matrices $(Q, K, V)$ are significantly more general: some illustrations\footnote{See \href{https://github.com/borjanG/2023-transformers-rotf}{\tt github.com/borjanG/2023-transformers-rotf} for additional figures which indicate that this phenomenon appears to hold in even more generality.} are given in Figure~\ref{fig: random.QKV}. 

\begin{open}
    Can the conclusions of \Cref{thm: orthogonal}--\Cref{thm: phase.transition.curve} be generalized to the case of random matrices $(Q, K, V)$?
\end{open}

    

\begin{figure}[h!]
    \centering
    \centering
\begin{subfigure}{.49\textwidth}
  \centering
  \includegraphics[scale=0.45]{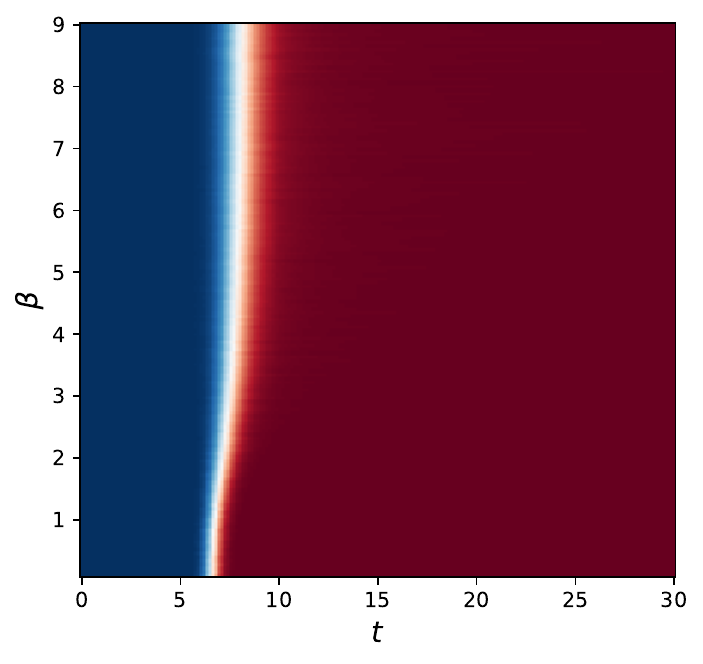}
  \caption{$Q, K, V$ in real Ginibre ensemble}
\end{subfigure}%
\begin{subfigure}{.49\textwidth}
  \centering
  \includegraphics[scale=0.45]{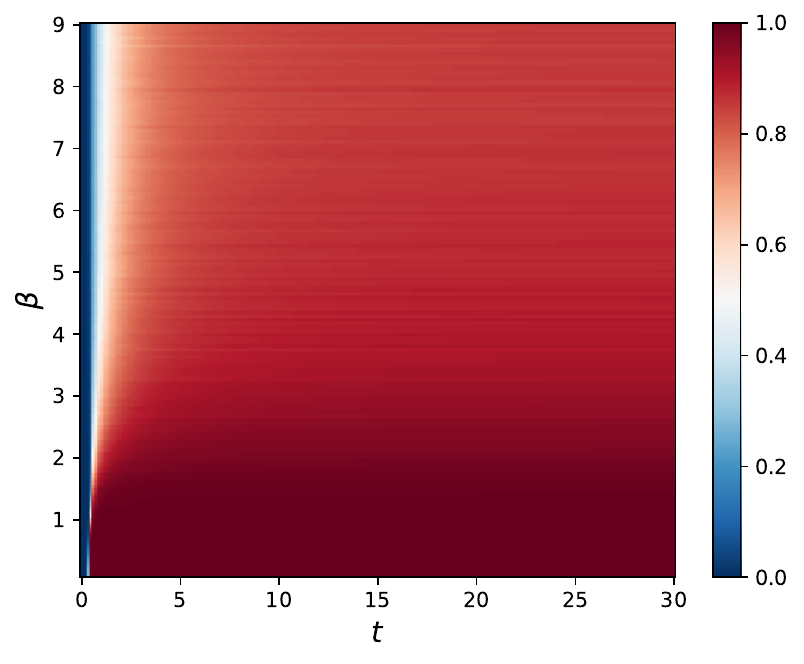}
  \caption{$Q^\top K$ Wigner, $V\succeq0$ GOE}
\end{subfigure}
\begin{subfigure}{.49\textwidth}
  \centering
  \includegraphics[scale=0.45]{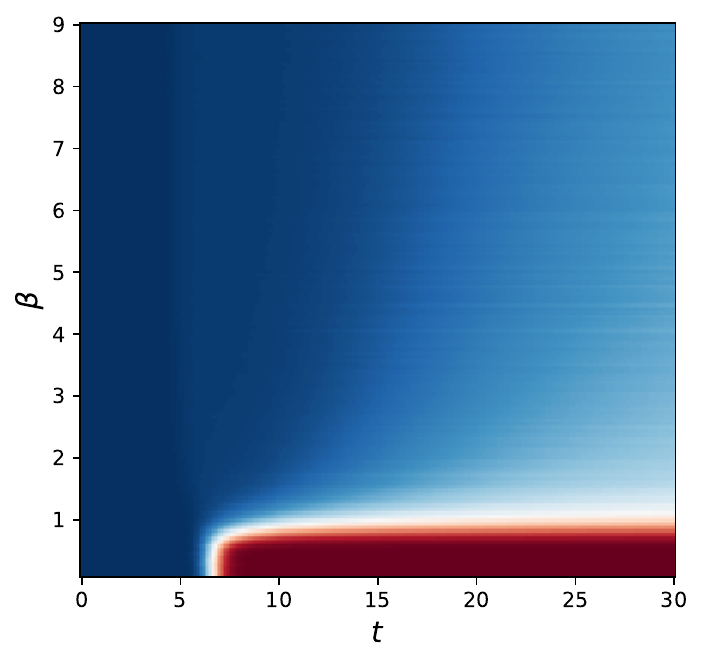}
  \caption{$Q, K$ in real Ginibre ensemble, $V=Q^\top K$}
\end{subfigure}%
\begin{subfigure}{.49\textwidth}
  \centering
  \includegraphics[scale=0.45]{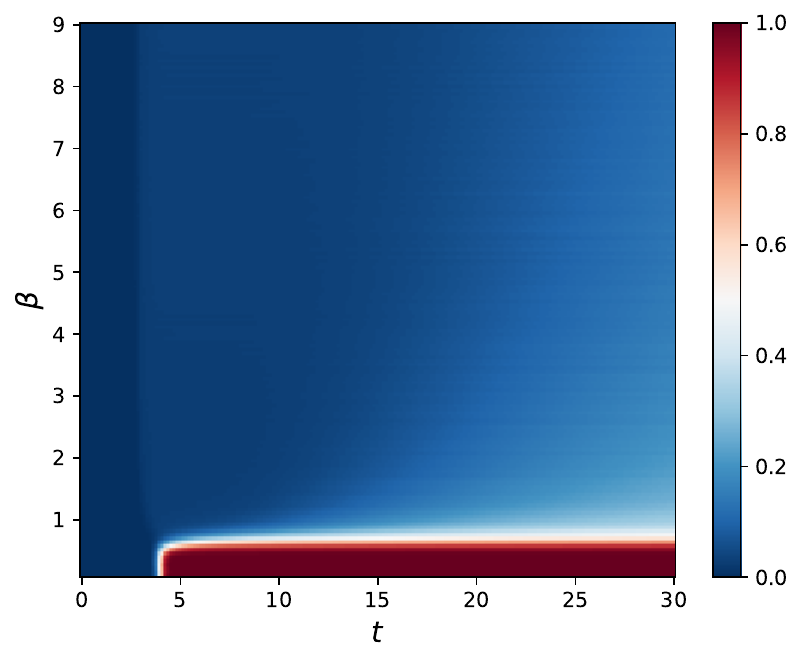}
  \caption{$Q^\top K$ Wigner, $V=Q^\top K$}
\end{subfigure}
\caption{Phase diagrams (see Figure \ref{fig: phase.diag.Id} for explanations) for some choices of random matrices $(Q, K, V)$; here $d=128$, $n=32$. Sharp phase transitions as well as metastable regions appear in all cases. 
}
\label{fig: random.QKV}
\end{figure}

\part{Further questions} \label{sec: beyond}

We conclude this manuscript by discussing several avenues of research that can lead to a finer understanding of the clustering phenomenon and generalizations of our results, and which, we believe, are of independent mathematical interest.
Specifically, 
\begin{itemize}
    \item In Section \ref{sec: circle}, we zero in on the special case $d=2$, where we make a link with the celebrated \emph{Kuramoto model} when $\beta=0$;
    \item In Section \ref{sec: bbgky}, we discuss an alternative approach for analyzing clustering, based on the so-called \emph{BBGKY hierarchy} from statistical mechanics;
    \item In Section \ref{sec: general.matrices}, we foray beyond the case $Q^\top K=V=I_d$ in a few directions. We start with Section \ref{sec:repulsive} by relating the case $V=-I_d$ to \emph{optimal configurations} on the sphere. We then discuss existing results in the absence of layer normalization in Section \ref{sec: neurips}, which motivate the study of a related singular equation (Section \ref{sec: filippov}) and a diffusive regularization (Section \ref{sec: diffusion});
    \item Finally, in Section \ref{sec: approximation}, we briefly discuss various ways focusing on the tuning of the parameter matrices themselves.
\end{itemize}

\section{Dynamics on the circle} \label{sec: circle}

We study the dynamics~\eqref{SA} and~\eqref{USA} in the special case $d=2$, namely on the unit circle $\mathbb{S}^1\subset\R^2$. This model, parametrized by angles and related to the celebrated Kuramoto model, is of independent interest and deserves a complete mathematical analysis.

\subsection{Angular equations} 
On the circle $\mathbb{S}^1$, all particles $x_i(t)\in\mathbb{S}^1$ are of course completely characterized by the angle $\theta_i(t)\in\mathbb{T}$:  $x_i(t)=\cos(\theta_i(t))e_1+\sin(\theta_i(t))e_2$ where $e_1=(1,0)$ and $ e_2=(0,1)\in \R^2$. 
We focus on the dynamics~\eqref{USA} for simplicity.
For any $i\in[n]$ and $t\geq0$, we may derive the equation  satisfied by $\theta_i(t)$ from $\cos(\theta_i(t))=\langle x_i(t), e_1\rangle$: differentiating in $t$ and plugging into~\eqref{USA} we obtain
\begin{align*}
    \dot{\theta}_i(t) = -\frac{n^{-1}}{\sin(\theta_i(t))}\left(\sum_{j=1}^n e^{\beta\langle x_i(t), x_j(t)\rangle}\Big[\langle x_j(t), e_1\rangle-\langle x_i(t), x_j(t)\rangle \langle x_i(t), e_1\rangle\Big]\right)
\end{align*}
where we used the definition of the projection (if $\theta_i(t)=0$ for some $t$, we differentiate the equality $\sin(\theta_i(t))=\langle x_i(t),e_2\rangle$ instead, which also leads to~\eqref{eq:onangles} in the end). 
Observing that 
\begin{equation*}
 \langle x_i(t), x_j(t)\rangle = \cos(\theta_i(t)-\theta_j(t)),   
\end{equation*}
we find 
\begin{align*}
    \dot{\theta}_i(t) &=- \frac{n^{-1}}{\sin(\theta_i(t))}\left(\sum_{j=1}^n e^{\beta\cos(\theta_i(t)-\theta_j(t))}\Big[\cos(\theta_j(t)) - \cos(\theta_i(t)-\theta_j(t))\cos(\theta_i(t))\Big]\right).
\end{align*}
Using elementary trigonometry, we conclude that
\begin{equation}\label{eq:onangles}
    \dot{\theta}_i(t) =-\frac1n\sum_{j=1}^n e^{\beta\cos(\theta_i(t)-\theta_j(t))}
    \sin(\theta_i(t)-\theta_j(t)).
\end{equation}
The case $\beta=0$ is exactly the Kuramoto model recalled in Section~\ref{s:kuramoto}. Suppose for the time being that $\beta>0$.
Defining the function $h_\beta:\mathbb{T}\to\R_{\geq0}$ as 
\begin{equation*}
    h_\beta(\theta)=e^{\beta\cos(\theta)},
\end{equation*}
we have effectively deduced that the empirical measure of the angles, $\nu(t)=\frac1n \sum_{j=1}^n \delta_{\theta_j(t)}$,
which is a measure on the torus $\mathbb{T}$, is a solution to the continuity equation
\begin{equation*}
\partial_t \nu(t) + \partial_\theta(\mathcal{X}[\nu(t)]\nu(t))=0, \hspace{1cm} \text{ on } \R_{\geq0}\times\mathbb{T},
\end{equation*}
where 
\begin{equation*}
    \mathcal{X}[\nu](\theta)=\frac{1}{\beta}\Big( h_\beta'\ast\nu\Big)(\theta).
\end{equation*}
When the particles $x_i(t)$ follow~\eqref{SA}, one readily checks that the same continuity equation is satisfied but rather with the field 
\begin{equation*}
 \mathcal{X}[\nu](\theta)=\frac{1}{\beta}\left(\frac{h_\beta'\ast\nu}{h_\beta\ast \nu}\right)(\theta).   
\end{equation*}


\subsection{The Kuramoto model} \label{s:kuramoto}
As mentioned above, when $\beta=0$, \eqref{eq:onangles} is a particular case of the Kuramoto model \cite{kuramoto1975self}:
\begin{equation}\label{e:kuramoto}
\dot{\theta}_i(t)=\omega_i+\frac{K}{n}\sum_{j=1}^n \sin(\theta_j(t)-\theta_i(t)),
\end{equation}
where $K>0$ is a prescribed coupling constant, and $\omega_i\in\mathbb{T}$ are the intrinsic natural frequencies of the oscillators $\theta_i(t)$.
It is known that for sufficiently small coupling strength $K$, the oscillators $\theta_i(t)$ in the Kuramoto model~\eqref{e:kuramoto} do not synchronize in long-time.
It is also known that when $K$ exceeds some critical threshold value, a phase transition occurs, leading to the synchronization of a  fraction of the oscillators. If $K$ is chosen very large, there is total synchronization of the oscillators in long-time. For more on the mathematical aspects of the Kuramoto model, we refer the reader to the review papers \cite{strogatz2000kuramoto, acebron2005kuramoto, ha2016collective} (see also \cite{carrillo2014contractivity, chiba2015proof, fernandez2016landau, dietert2018landau, ha2020asymptotic, townsend2020dense, abdalla2022expander, morales2022trend} for a non-exhaustive list of other recent mathematical results on the subject).

When all the frequencies $\omega_i$ are equal to some given frequency, $\omega\in\R$ say, after a change of variable of the form $\theta_i(t)\leftarrow \theta_i(t)-\omega t$, the dynamics in~\eqref{e:kuramoto} become the gradient flow
\begin{equation*}
    \dot{\theta}(t) = n\nabla\mathsf{F}(\theta)
\end{equation*}
where the energy $\mathsf{F}:\mathbb{T}^n\to\R_{\geq0}$ reads
\begin{equation}\label{e:energykuramoto}
\mathsf{F}(\theta)=\frac{K}{2n^2} \sum_{i=1}^n\sum_{j=1}^n\cos(\theta_i-\theta_j).
\end{equation}
The oscillators can be viewed as attempting to maximize this energy. The energy $\mathsf{F}$ is maximized when all the oscillators
are synchronized, that is, $\theta_i=\theta^*$ for some $\theta^*\in\mathbb{T}$ and for all $i\in[n]$. 
As the dynamics follow a gradient system, the equilibrium states are the critical points of the energy, namely those satisfying $\nabla \mathsf{F}(\theta)=0$. The local maxima of $\mathsf{F}$ correspond to equilibrium states $\theta$ that are physically achievable, since small perturbations thereof return the system back to $\theta$. 

Some authors consider a variant of the Kuramoto model where the oscillators are interacting according to the edges of a graph. In other words, the coefficients $A_{ij}$ of the graph's adjacency matrix are inserted in the sum in~\eqref{e:energykuramoto} as weights, and the dynamics are then the corresponding gradient flow. A recent line of work culminating with~\cite{abdalla2022expander} has established that synchronization occurs with high probability for Erd\H{o}s–Rényi graphs with parameter $p$, for every $p$ right above the connectivity threshold. 

Coming back to our dynamics~\eqref{eq:onangles}, we notice that it can also be written as a gradient flow on $\mathbb{T}^n$:
\begin{equation*}
 \dot{\theta}(t)=n\nabla \mathsf{E}_\beta(\theta(t)),  
\end{equation*}
for the interaction energy 
$\mathsf{E}_\beta:\mathbb{T}^n\to\mathbb{R}_{\geq0}$ defined as 
\begin{equation}\label{eq:energyF}
\mathsf{E}_\beta(\theta)=\frac{1}{2\beta n^2}\sum_{i=1}^n\sum_{j=1}^n e^{\beta\cos(\theta_i-\theta_j)},
\end{equation}
which is maximized when $\theta_i=\theta^*$ for some $\theta^*\in\mathbb{T}$ and for all $i\in[n]$.
In the spirit of \cite{ling2019landscape}, we suggest the following open problem---we recall that a critical point is called a \emph{strict saddle point} of $\mathsf{E}_\beta$ if the Hessian of $\mathsf{E}_\beta$ at these points has at least one positive eigenvalue.

\begin{open} \label{o:strictsaddle}
With the exception of the global maxima, are all critical points of $\mathsf{E}_\beta$ strict saddle points? 
\end{open}

The proofs of Theorems \ref{thm: beta.tiny} and \ref{thm: beta.interval} already yield a positive answer to Problem 5 in the regimes $\beta\leq1$ or $\beta\geq \frac{n^2}{\pi^2}$. The complementary regime remains open.
By classical arguments, recalled in Appendix \ref{app: proof.2}, a positive answer to Problem \ref{o:strictsaddle} would imply that for all initial conditions except a set of measure zero, all $\theta_i(t)$ converge under the dynamics \eqref{eq:onangles} to a common limit as $t\rightarrow+\infty$.

Extensions of the Kuramoto model of the form
\begin{equation}\label{e:variantkuramoto}
\dot{\theta}_i(t)=\omega_i+\frac{K}{n}\sum_{j=1}^n h(\theta_j(t)-\theta_i(t)),
\end{equation}
for a general non-linearity $h:\mathbb{T}\to\mathbb{R}$, which contains both~\eqref{e:kuramoto} and our model~\eqref{eq:onangles} as particular cases, have already been studied in the physics literature.
For instance, we refer the reader to \cite{daido1992order} (see also \cite[page 158]{acebron2005kuramoto}), where many heuristics are proposed to address the behavior of solutions to these dynamics. We are not aware of mathematical results for~\eqref{eq:onangles} besides Theorem \ref{thm: beta.interval}. We nevertheless have some hope that handling the dynamics~\eqref{eq:onangles} is easier than dealing with~\eqref{e:variantkuramoto} for a general $h$; for instance, we have 
\begin{equation*}
    h_\beta(\theta) = e^{\beta\cos(\theta)}= \sum_{k\in\mathbb{Z}} I_k(\beta)e^{\mathrm{i}k\theta}
\end{equation*}
where $I_k(\beta)$ are the modified Bessel function of the first kind, whose properties have been extensively studied.



\section{BBGKY hierarchy} \label{sec: bbgky}
For the sake of simplicity, we again focus on the dynamics on the circle $\mathbb{S}^1$, where recall that all particles are parametrized by angles (which we also refer to as particles).  
To carve out an even more complete understanding of the clustering phenomenon, it is natural to consider initial particles sampled i.i.d. from the uniform distribution on $\mathbb{S}^1$ and to study the time-evolution of the $r$-particle distribution $\rho^{(r)}_n(t,\theta_1,\ldots,\theta_r)$, defined as the joint law of the particles $\theta_1(t),\ldots,\theta_r(t)$. Otherwise put, it is the $r$-point marginal of the joint distribution $\rho^{(n)}(t,\cdot)\in\mathcal{P}(\mathbb{T}^n)$ of all $n$ particles.
Note that because of rotational invariance, $\rho^{(1)}(t,\cdot)$ is just the uniform distribution equal to $\frac{1}{2\pi}$ for all $t\geq0$. For $r=2$, again by rotational invariance, there exists some $\psi(t,\cdot):\mathbb{T}\rightarrow\R_{\geq0}$ such that
\begin{equation*}
\rho^{(2)}(t,\theta_1,\theta_2) = \frac{1}{2\pi}\psi(t, \theta_2-\theta_1).    
\end{equation*}
Proving the clustering/synchronization of all $\theta_i(t)$ in long-time amounts to proving that $\psi(t,\cdot)$ converges to a Dirac mass centered at $0$ as $t\rightarrow+\infty$.
Using the fact that $\rho^{(n)}(t,\cdot)$ solves the Liouville equation, by following the method used to derive the BBGKY\footnote{Bogoliubov–Born–Green–Kirkwood–Yvon.} hierarchy \cite{gallagher2013newton, golse2016dynamics}, it is possible to show that $\psi(t,\cdot)$ satisfies
\begin{equation} \label{e:transport}
\begin{dcases}
    \partial_t\psi(t, x)+\partial_x(v(t, x)\psi(t, x)) = 0 &\text{ in }\R_{\geq0}\times\mathbb{T}\\
    \psi(0,x) = (2\pi)^{-1} &\text{ in } \mathbb{T},
\end{dcases}
\end{equation}
where
\begin{equation*}
    v(t, x) = \frac{2}{\beta n} h_\beta'(x) -
\frac{2(n-2)}{\beta n} g(t,x),
\end{equation*}
and
\begin{equation*}
g(t,x) = \mathbb{E}\Big[-h_\beta'(\theta_3(t))\,\Big|\,\theta_1(t)=0,\ \theta_2(t)=x\Big].    
\end{equation*}
Note that the equation~\eqref{e:transport} is not closed since $g(t,x)$ depends on the $3$-point correlation function. This is typical in the BBGKY hierarchy, whereupon physical theory and experimental evidence is typically used to devise an ansatz for closing the system.
For instance, the Boltzmann equation is derived from the BBGKY hierarchy by assuming the \emph{molecular chaos hypothesis (Stosszahlansatz)} at the level of $r=2$.  
We suggest to close~\eqref{e:transport} in a way that reflects the formation of clusters:

\begin{open}
    Devise a realistic ansatz for $g(t,x)$ which allows to close equation~\eqref{e:transport}, and allows to prove the convergence of $\psi(t,\cdot)$ to a Dirac mass centered at $0$ as $t\to+\infty$. 
\end{open}

The derivation of a BBGKY hierarchy when $d>2$, as well as for~\eqref{SA}, are also problems which we believe merit further investigation.

\section{General matrices} \label{sec: general.matrices}

Figure~\ref{fig: random.QKV} hints at the likelihood of the clustering phenomenon being significantly more general than just the case $Q=K=V=I_d$.
However, extending our proofs to more general parameter matrices does not appear to be straightforward and is an open problem. Here we discuss some particular cases (without excluding other approaches). 

\subsection{The repulsive case} \label{sec:repulsive}

As seen from Lemma~\ref{lem: dissipation}, in the repulsive case $V=-I_d$ the interaction energy $\mathsf{E}_\beta$ decreases along trajectories. 
Recall that the unique global minimum of $\mathsf{E}_\beta$ over $\mathcal{P}(\S)$ is the uniform distribution (Proposition~\ref{prop: existence.uniqueness.energy}). In contrast, we explain in this section that many different configurations of $n$ points may yield global minima for $\mathsf{E}_\beta$ when minimized over empirical measures with $n$ atoms. 

We thus focus on minimizing $\mathsf{E}_\beta$ over the set $\mathcal{P}_n(\S)$ of empirical measures, namely sums of $n$ Dirac masses. Rewriting $\mathsf{E}_\beta$ as 
\begin{equation*}
\mathsf{E}_\beta[\mu]=\frac{e^{\beta}}{2\beta}\iint e^{-\frac{\beta}{2}\|x-x'\|^2}\diff\mu(x)\diff\mu(x'),
\end{equation*}
it turns out that minimizing $\mathsf{E}_\beta$ over $\mathcal{P}_n(\S)$ is precisely the problem of finding \emph{optimal configurations of points} on $\S$, which has direct links to the sphere packing problem \cite{cohn2007universally, cohn2022universal} and coding theory \cite{delsarte1991spherical}. For $\mu\in\mathcal{P}_n(\S)$, we can equivalently rewrite $\mathsf{E}_\beta$ in terms of the set of support points $\mathscr{C}\subset\S$, $\#\mathscr{C}=n$:
\begin{equation*}
\mathsf{E}_\beta[\mu]=\mathsf{H}_\beta[\mathscr{C}]=\frac{e^{\beta}}{2n^2\beta}\sum_{\substack{x,x'\in\mathscr{C}}} e^{-\frac{\beta}{2}\|x-x'\|^2}.
\end{equation*} 
In \cite{cohn2007universally}, Cohn and Kumar characterize the global minima $\mathscr{C}$ of $\mathsf{H}_\beta$. 
To state their result, we need the following definition. 

\begin{definition} Let $n\geq2$. A set of points $\mathscr{C}=\{x_1,\ldots, x_n\}\subset\S$ is called a \emph{spherical $t$-design} if 
\begin{equation*}
\int p(x)\diff\sigma_d(x)=\frac1n\sum_{i=1}^n p(x_i)
\end{equation*}
for all polynomials $p$ of $d$ variables, of total degree at most $t$. The set of points $\mathscr{C}$ is called a \emph{sharp configuration} if there are $m$ distinct inner products between pairwise distinct points in $\mathscr{C}$, for some $m>1$, and if it is  a spherical $(2m-1)$-design.
\end{definition}

The following result is a special case of \cite[Theorem 1.2]{cohn2007universally}.

\begin{theorem}[\cite{cohn2007universally}] Let $n\geq2$. Any global minimum of $\mathsf{H}_\beta$ among $\mathscr{C}\subset\S$, $\#\mathscr{C}=n$ is either a sharp configuration, or the vertices of a $600$-cell\footnote{A $600$-cell is a particular $4$-dimensional convex polytope with $n=120$ vertices.}.
\end{theorem}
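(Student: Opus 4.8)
The plan is to recognize $\mathsf{H}_\beta$ as a Gaussian pairwise-interaction energy, to observe that the Gaussian kernel is strictly completely monotonic as a function of squared distance, and then to invoke the universal optimality theory of Cohn and Kumar~\cite{cohn2007universally}; the theorem is a direct instance of it.

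\emph{Reduction.} For $x,x'\in\S$ one has $\|x-x'\|^2=2-2\langle x,x'\rangle\in[0,4]$, so
\[
\mathsf{H}_\beta[\mathscr{C}]=\frac{e^{2\beta}}{2n^2\beta}\sum_{x,x'\in\mathscr{C}}g\big(\|x-x'\|^2\big),\qquad g(t):=e^{-\beta t}.
\]
The prefactor is positive and the $n$ diagonal terms contribute the constant $n\,g(0)=n$, so minimizing $\mathsf{H}_\beta$ over $n$-point subsets of $\S$ is equivalent to minimizing the potential energy $\sum_{x\neq x'}g(\|x-x'\|^2)$, which is exactly the Gaussian-energy problem studied in~\cite{cohn2007universally}. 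Moreover $(-1)^k g^{(k)}(t)=\beta^k e^{-\beta t}>0$ for every $k\geq0$ and $t>0$, so $g$ is strictly completely monotonic on $(0,+\infty)$ (in particular not a polynomial); hence the kernel $(x,x')\mapsto g(\|x-x'\|^2)$ belongs to the class for which sharp configurations and the regular $600$-cell are \emph{universally optimal}, i.e.\ simultaneously minimize $\sum_{x\neq x'}f(\|x-x'\|^2)$ over all $n$-point configurations for \emph{every} completely monotonic $f$.

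\emph{Invoking Cohn--Kumar.} By~\cite[Theorem 1.2]{cohn2007universally}, whenever $\S$ carries an $n$-point sharp configuration (or $(d,n)=(4,120)$ and the vertices of the $600$-cell), that configuration attains $\min_{\#\mathscr{C}=n}\mathsf{H}_\beta[\mathscr{C}]$; our only task is the observation above that the Gaussian potential lies inside their framework. The converse---that a global minimum must be one of these---follows from the rigidity of the Delsarte--Yudin linear-programming bound underlying their proof: one Hermite-interpolates $u\mapsto g(2-2u)$ at the finitely many inner products occurring in the candidate configuration, verifies that the interpolant lies below $u\mapsto g(2-2u)$ on $[-1,1]$ and has nonnegative ultraspherical (Gegenbauer) coefficients in all degrees $\geq1$, and reads off that the bound is met with equality exactly at the candidate. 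Strict complete monotonicity of $g$ makes the equality case rigid---it forces the full pairwise-inner-product set and the spherical-design strength characterizing a sharp configuration (respectively the $600$-cell)---so the only global minimizers are isometric copies of these. Of course this classification is vacuous unless such a configuration exists for the given $n$; for other values of $n$ the minimizer of $\mathsf{H}_\beta$ is in general $\beta$-dependent, consistent with the discussion preceding the statement.

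\emph{Main obstacle.} The genuinely hard part---establishing the pointwise inequality for the Hermite interpolant together with nonnegativity of its Gegenbauer coefficients, and the rigidity of the equality case---is the technical core of~\cite{cohn2007universally}, and I would not reproduce it; here the contribution is limited to the immediate observations that $\mathsf{H}_\beta$ is, up to a positive scalar and an additive constant, a Gaussian potential energy and that $e^{-\beta t}$ is completely monotonic, which place the problem squarely within their theory.
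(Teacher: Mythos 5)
The paper offers no proof of this statement at all---it is quoted verbatim as ``a special case of \cite[Theorem 1.2]{cohn2007universally}''---so your proposal, which rewrites $\mathsf{H}_\beta$ as a Gaussian pair energy, checks that $t\mapsto e^{-\beta t}$ is (strictly) completely monotonic, and then invokes that same theorem, is essentially the same approach with the necessary bridging observations made explicit. Your additional care about the converse direction---that classifying \emph{all} global minima requires the rigidity of the equality case in the Cohn--Kumar linear-programming bound, and that the classification is vacuous for values of $n$ admitting no sharp configuration---is a fair and slightly more honest reading of the cited result than the paper's bare citation.
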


The set of sharp configurations is not known for all regimes of $n, d$ or $m$ (the largest $m$ such that the configuration is a spherical $m$-design). A list of known examples is provided in \cite[Table 1]{cohn2007universally}: it consists of vertices of full-dimensional polytopes (specifically, regular polytopes whose faces are simplices), or particular derivations of the $E_8$ root lattice in $\R^8$ and the Leech lattice in $\R^{24}$. We refer the reader to \cite{cohn2007universally} and the illustrative experimental paper \cite{ballinger2009experimental} for further detail. A complete picture of the long-time behavior of Transformers in the repulsive case remains open.

\subsection{Pure self-attention} \label{sec: neurips}
An alternative avenue for conducting such an analysis which has shown to be particularly fruitful consists in removing the projector $\proj_x$,  leading to
\begin{equation} \label{eq: transf-1}
    \dot{x}_i(t) = \frac{1}{Z_{\beta, i}(t)}\sum_{j=1}^n e^{\beta\langle Qx_i(t), Kx_j(t)\rangle}Vx_j(t)
\end{equation}
for all $i\in[n]$ and $t\in\mathbb{R}_{\geq0}$. 
In fact, in \cite{geshkovski2023emergence} we analyze precisely these dynamics, and show different clustering results depending on the spectral properties of the matrix $V$. We briefly summarize our findings in what follows.

\subsubsection{A review of \cite{geshkovski2023emergence}}
For most choices of value matrices $V$, without rescaling time, most particles diverge to $\pm\infty$ and no particular pattern emerges. 
To make a very rough analogy,~\eqref{eq: transf-1} "looks like" $\dot{x}_i(t)=Vx_i(t)$ (which amounts to having $P_{ij}(t)=\delta_{ij}$ instead of~\eqref{eq:P}), whose solutions are given by $x_i(t)=e^{tV}x_i(0)$. To discern the formation of clusters, we introduce the rescaling\footnote{The rescaling~\eqref{eq:zifromxi} should be seen as a surrogate for layer normalization.}
\begin{equation}\label{eq:zifromxi}
z_i(t)=e^{-tV}x_i(t),
\end{equation}
which are solutions to
\begin{equation}\label{e:Rres}
\dot{z}_i(t) = \frac{1}{Z_{\beta,i}(t)}\sum_{j=1}^n e^{\beta\left\langle Qe^{tV}z_i(t),Ke^{tV}z_j(t)\right\rangle} V(z_j(t)-z_i(t)) \hspace{1cm}
\end{equation}
for $i\in[n]$ and $t\geq0$, where 
$$Z_{\beta,i}(t)=\sum_{k=1}^n e^{\beta\langle Qe^{tV}z_i(t),Ke^{tV}z_k(t)\rangle},$$
whereas the initial condition remains the same, namely $x_i(0)=z_i(0)$. It is crucial to notice that the coefficients $A_{ij}(t)$ (see~\eqref{eq:P}) of the self-attention matrix for the rescaled particles $z_i(t)$ are the same as those for the original particles $x_i(t)$. The weight $A_{ij}(t)$ indicates the strength of the attraction of $z_i(t)$ by $z_j(t)$. 
In \cite{geshkovski2023emergence} we show that the rescaled particles $z_i(t)$ cluster toward well-characterized geometric objects as $t\rightarrow+\infty$ for various choices of matrices $(Q,K,V)$. Our results are summarized in Table~\ref{table:results} below, whose first two lines are discussed thereafter.

{\footnotesize
\begin{table}[h!]
    \centering
    \begin{tabular}{c|c|c|c}
        $V$ &  $K$ and $Q$ & Limit geometry & Result in \cite{geshkovski2023emergence} \\
        \midrule
        $V=I_d$ & $Q^\top K\succ0$ & vertices of convex polytope &   Theorem~3.1 \\
       $\lambda_1(V)>0$, simple & $\langle Q\varphi_1,K\varphi_1\rangle>0$ & union of $3$ parallel hyperplanes & Theorem~4.1 \\
      $V$ paranormal & $Q^\top K\succ0$ & polytope $\times$ subspaces  & Theorem~5.1\\
     \midrule
        $V=-I_d$ & $Q^\top K=I_d$ &single cluster at origin$^\ast$ &   Theorem~C.5
    \end{tabular}
    \vspace{0.5cm}
    \caption{Summary of the clustering results of \cite{geshkovski2023emergence}. $^\ast$All results except for the case $V=-I_d$ hold for the time-scaled dynamics~\eqref{e:Rres}.} \label{table:results}
\end{table}
}

When $V=I_d$, outside from exceptional situations, all particles cluster to vertices of some convex polytope.
Indeed, since the velocity $\dot{z}_i(t)$ is a convex combination of the attractions $z_j(t)-z_i(t)$, the convex hull $\mathcal{K}(t)$ of the $z_i(t)$ shrinks and thus converges to some convex polytope. 
The vertices of the latter attract all particles as $t\rightarrow +\infty$.
When the eigenvalue with largest real part of $V$, denoted by $\lambda_1(V)$, is simple and positive, the rescaled particles $z_i(t)$ cluster on hyperplanes which are parallel to the direct sum of the eigenspaces of the remaining eigenvalues. Roughly speaking, the coordinates of the points $z_i(t)$ along the eigenvector of $V$ corresponding to $\lambda_1(V)$ quickly dominate the matrix coefficients $P_{ij}(t)$ in~\eqref{e:Rres} due to the factors $e^{tV}z_j(t)$. For more results and insights regarding clustering on $\R^d$, we refer the reader to \cite{geshkovski2023emergence}. We nonetheless leave the reader with the following general question:

\begin{open} \label{prob: Rd}
Is it possible to extend the clustering results of Table~\ref{table:results} to other cases of $(Q,K,V)$? What are the resulting limit shapes?
\end{open}

\subsection{Singular dynamics} \label{sec: filippov}
We mention another intriguing question, whose answer would allow for a transparent geometric understanding of clustering for~\eqref{e:Rres}. Let $(Q,K,V)$ be given $d\times d$ matrices. 
For $\beta>0$, we consider the system of coupled ODEs
\begin{equation}\label{e:exp}
\dot{z}_i(t)=\frac{1}{Z_{\beta,i}(t)}\sum_{j=1}^ne^{\beta\langle Qz_i(t),Kz_j(t)\rangle}V(z_j(t)-z_i(t)),
\end{equation}
where once again 
$$Z_{\beta,i}(t) = \sum_{k=1}^n e^{\beta\langle Qz_i(t),Kz_k(t)\rangle}.$$
For any $T>0$, and any fixed initial condition $(z_i(0))_{i\in[n]}\in (\R^d)^n$, as $\beta\rightarrow +\infty$, we expect that the solution to~\eqref{e:exp} converges uniformly on $[0,T]$ to a solution of 
\begin{equation}\label{e:maineq}
\dot{z}_i(t)=\frac{1}{|C_i(t)|}\sum_{j\in C_i(t)}V(z_j(t)-z_i(t))
\end{equation}
where 
\begin{equation}\label{e:Ci(t)}
C_i(t)=\Big\{j\in[n]\colon\langle Qz_i(t),Kz_j(t)\rangle \geq \langle Qz_i(t),Kz_k(t)\rangle\quad \text{ for all } k\in[n]\Big\}.
\end{equation}
However, defining a notion of solution to~\eqref{e:maineq}--\eqref{e:Ci(t)} is not straightforward, as illustrated by the following example.

\begin{example}\label{e:selection}
Suppose $d=2$, $n=3$. Let $Q=K=V=I_d$ and $z_1(0)=(1,1)$, $z_2(0)=(-1,1)$, $z_3(0)=(0,0)$. Consider the evolution of these particles through~\eqref{e:maineq}--\eqref{e:Ci(t)}. The points $z_1(t)$ and $z_2(t)$ do not move, because it is easily seen that $C_i(t)=\{i\}$ for $i\in\{1,2\}$. On the other hand, the point $z_3(t)$ can be chosen to solve either of three equations: $\dot{z}_3(t)=z_1(t)-z_3(t)$, or $\dot{z}_3(t)=z_2(t)-z_3(t)$, or even $\dot{z}_3(t)=\frac12(z_1(t)+z_2(t))-z_3(t)$. In any of these cases, both~\eqref{e:maineq} and~\eqref{e:Ci(t)} remain satisfied for almost every $t\geq0$.
\end{example}

It is possible to prove the existence of solutions to~\eqref{e:maineq}--\eqref{e:Ci(t)} defined in the sense of Filippov\footnote{We thank Enrique Zuazua for this suggestion.}: for this, we can either use a time-discretization of~\eqref{e:maineq}--\eqref{e:Ci(t)}, or use a convergence argument for solutions to~\eqref{e:exp} as $\beta\rightarrow+\infty$. Uniqueness however does not hold, as illustrated by Example~\ref{e:selection}. This naturally leads us to the following question: 

\begin{open}
Is it possible to establish a selection principle (similar to viscosity or entropy solutions) for solutions to~\eqref{e:maineq}--\eqref{e:Ci(t)} which allows to restore uniqueness?
In the affirmative, is it possible to revisit the clustering results of \cite{geshkovski2023emergence} and Problem~\ref{prob: Rd} in the setting of~\eqref{e:maineq}--\eqref{e:Ci(t)}?
\end{open}

\subsection{Diffusive regularization} \label{sec: diffusion}
We believe that~\eqref{e:maineq}--\eqref{e:Ci(t)} is also an original model for collective behavior. 
There are some similarities in spirit with methods arising in \emph{consensus based optimization} (CBO for short), \cite{pinnau2017consensus, carrillo2021consensus}. 
With CBO methods, one wishes to minimize a smooth and bounded, but otherwise arbitrary function $f:\mathbb{R}^d\to\mathbb{R}$ by making use of the Laplace method
\begin{equation*}
    \lim_{\beta\to+\infty}\left(-\frac{1}{\beta}\log\int_{\R^d} e^{-\beta f(x)}\diff\rho(x)\right)=\inf_{x\in \text{supp}(\rho)}f(x),
\end{equation*}
which holds for any fixed $\rho\in\mathcal{P}_{\text{ac}}(\R^d)$. This is accomplished by considering a McKean-Vlasov particle system of the form
\begin{equation*}
    \diff x_i(t) = -\lambda (x_i(t)-v_f)H^\epsilon(f(x_i(t))-f(v[\mu_n(t)]))\diff t +\sqrt{2}\sigma|x_i(t)-v[\mu_n(t)]|\diff W_i(t)
\end{equation*}
for fixed $\beta>0$, with drift parameter $\lambda>0$ and noise parameter $\sigma\geq0$; $H^\epsilon$ is a particular smoothed Heaviside function, and $\mu_n(t)$ is the empirical measure of the particles. The point $v[\mu]\in\R^d$ is a weighted average of the particles:
\begin{equation*}
    v[\mu] = \frac{1}{Z_{\beta,\mu}}\int_{\R^d} e^{-\beta f(x)}x\diff\mu(x)
\end{equation*}
where $Z_{\beta,\mu}=\int_{\R^d} e^{-\beta f(x)}\diff\mu(x)$.
Morally speaking, particles which are near a minimum of $f$ have a larger weight. The drift term is a gradient relaxation (for a quadratic potential) towards the current weighted average position of the batch of particles. The diffusion term is an exploration term whose strength is proportional to the distance of the particle from the current weighted average. Results of convergence to a global minimizer do exist, under various smallness assumptions on the initial distribution of the particles, and assumptions on the relative size of the coefficients. They rely on the analysis of the associated Fokker-Planck equation, see \cite{carrillo2021consensus, chaintron2022propagation}, and  also \cite{fornasier2021consensus} for the analog on $\S$.
We point out that similarities are mainly in spirit---these results and analysis are inapplicable to our setting because there is no analog for $f(x)$---. Nonetheless, they do raise the following interesting question:

\begin{open}
    What can be said about the long-time limit of Transformers with a noise/diffusion term of strength $\sigma>0$? 
\end{open}

The question is of interest for any of the Transformers models presented in what precedes. 

\section{Approximation, control, training} \label{sec: approximation}

Understanding the \emph{expressivity}, namely the ability of a neural network to reproduce any map in a given class (by tuning its parameters), is essential. 
Two closely related notions reflect the expressivity of neural networks: \emph{interpolation}---the property of exactly matching arbitrarily many input and target samples---and \emph{(universal) approximation}---the property of approximating input-target functional relationships in an appropriate topology---. We refer the reader to \cite{cheng2023interpolation} for a primer on the relationship between these two notions in the contex of deep neural networks. 

For discrete-time Transformers, universal approximation has been shown to hold in \cite{yun2019transformers}, making use of a variant of the architecture with translation parameters and letting the number of layers go to infinity; see also \cite{alberti2023sumformer, jiang2023approximation} and the review \cite{jiang2023brief}.

In the context of flow maps (from $\R^d$ to $\R^d$), it is now well understood that interpolation and approximation reflect the \emph{controllability} properties of the system. The transfer of control theoretical techniques to the understanding of expressivity has borne fruit, both in terms of controllability results \cite{agrachev2020control, cuchiero2020deep, tabuada2020universal, li2022deep, ruiz2023neural, veeravalli2023nonlinear, cheng2023interpolation} and optimal control insights \cite{li2018maximum, esteve2020large, geshkovski2022turnpike, esteve2023sparsity}. 
We refer the reader to \cite{agrachev2024generic, adu2024approximate, furuya2024transformers} for the first universal approximation results for Transformers, viewed as measure-to-measure maps, using control theoretic tools.

Besides approximation, understanding the training dynamics of Transformers is another major challenge which we haven't covered herein. As it is impossible to reference all works on this burgeoning topic, we refer the interested reader to \cite{tarzanagh2023transformers, ahn2024transformers, deora2023optimization} and references therein.

\section*{Acknowledgments}

We thank Pierre Ablin, S\'ebastien Bubeck, Gabriel Peyré, Matthew Rosenzweig, Sylvia Serfaty, Kimi Sun, and Rui Sun for discussions. We thank Nicolas Boumal for referring us to \cite{markdahl2017almost, criscitiello2024synchronization} and for clarifying comments.

B.G. acknowledges financial support from the French government managed by the National Agency for Research under the France 2030 program, with the reference ”ANR-23-PEIA-0004”.
The work of Y.P. was supported in part by the MIT-IBM Watson AI Lab and by the National Science Foundation under Grant No CCF-2131115.
P.R. was supported by NSF grants DMS-2022448, CCF-2106377, and a gift from Apple.

\appendix
\part*{Appendix}

\section{Proof of Theorem~\ref{p:beta0}} \label{app: proof.2}
The proof of Theorem~\ref{p:beta0} relies on standard arguments from dynamical systems, upon noticing that the evolution~\eqref{e:Snonres0} is a (continuous-time) gradient ascent for the energy $\mathsf{E}_0:(\S)^n\rightarrow\R$ defined as
\begin{equation*}
    \mathsf{E}_0(x_1,\ldots,x_n)=\frac1n\sum_{i=1}^n\sum_{j=1}^n \langle x_i,x_j\rangle.
\end{equation*}
Since the dynamics are the gradient ascent of a real-analytic functional on the compact real-analytic manifold $(\S)^n$, the celebrated \L{}ojasiewicz theorem~\cite{lojasiewicz1963propriete}, in the form given by \cite[Corollary 5.1]{ha2018relaxation}---which is valid in the context of general compact Riemannian manifolds---, implies that for any initial condition $X\in (\S)^n$, the solution $\Phi^t(X)\in(\S)^n$ converges to some critical point $X^*\in(\S)^n$ of $\mathsf{E}_0$ as $t\to+\infty$.

We recall that a \emph{strict saddle point} of $\mathsf{E}_0$ is a critical point of $\mathsf{E}_0$ at which the Hessian of $\mathsf{E}_0$ has at least one strictly positive eigenvalue. 
Theorem \ref{p:beta0} then follows by combining the following couple of lemmas with the \L{}ojasiewicz theorem.

\begin{lemma} \label{l:nosaddleconv}
Let $\mathcal{M}$ be a compact Riemannian manifold and let $f:\mathcal{M}\rightarrow\R$ be a smooth function. The set of initial conditions $X_0\in \mathcal{M}$ for which the gradient ascent
\begin{equation}\label{e:gradfl}
\begin{cases}
    \dot{X}(t)=\nabla f(X(t)) &\\
    X(0)=X_0
\end{cases}
\end{equation}
converges to a strict saddle point of $f$ is of volume zero.
\end{lemma}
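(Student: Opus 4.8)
The plan is to prove this via the stable manifold theorem for (the flow of) a smooth vector field, reducing the global statement to a local one at each strict saddle point and then covering the manifold by countably many such neighborhoods. First I would recall the relevant dynamical fact: if $p$ is a hyperbolic-in-at-least-one-unstable-direction equilibrium of $\dot X = \nabla f(X)$ — here $p$ is a critical point of $f$ at which $\mathrm{Hess}\, f(p)$ has at least one strictly positive eigenvalue, hence the linearization $D(\nabla f)(p) = \mathrm{Hess}\, f(p)$ has at least one eigenvalue with strictly positive real part — then the set of points whose forward trajectory converges to $p$ is contained in the local stable set of $p$, which by the (center-)stable manifold theorem is a locally embedded submanifold of dimension at most $\dim\mathcal M - 1$. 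In particular it has measure zero in $\mathcal M$. For this one invokes the standard local stable manifold / center-stable manifold theorem for smooth flows, e.g.\ from Shub's book or Perko; a clean reference tailored to gradient dynamics and this exact measure-zero conclusion is \cite{lee2016gradient} (or one may cite the classical \cite{hirsch1977invariant}).

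Next I would globalize. Let $\mathcal S \subset \mathcal M$ denote the set of strict saddle points of $f$; since each such point is an isolated-in-the-relevant-sense critical point — more carefully, we do not need isolation, only that around each $p \in \mathcal S$ there is an open neighborhood $U_p$ such that $\{X_0 \in U_p : \Phi^t(X_0) \to p\}$ has measure zero, which is exactly the content of the local stable manifold theorem applied at $p$ — we may extract from the open cover $\{U_p\}_{p\in\mathcal S}$ a countable subcover $\{U_{p_k}\}_{k\in\mathbb N}$ by second countability of $\mathcal M$. A trajectory converging to \emph{some} strict saddle point eventually enters the corresponding $U_{p_k}$, and once it is in $U_{p_k}$ at some time $T$, the point $\Phi^T(X_0)$ lies in the local stable set of $p_k$, which has measure zero. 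Since $\Phi^T$ is a diffeomorphism of $\mathcal M$, it preserves the class of measure-zero sets, so $\Phi^{-T}$ of that local stable set is again measure zero; taking the countable union over $k$ and over rational (or integer) times $T$ shows that the full set of initial conditions converging to a strict saddle is a countable union of measure-zero sets, hence measure zero.

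I would then package the two ingredients: (i) the local stable manifold theorem, which I will quote rather than prove, giving for each strict saddle $p$ a neighborhood $U_p$ with $W^{s}_{\mathrm{loc}}(p) \cap U_p$ a submanifold of positive codimension; (ii) the countable-cover plus diffeomorphism-invariance argument above. The statement about volume zero is with respect to the Riemannian volume measure on $\mathcal M$, which is mutually absolutely continuous with Lebesgue measure in any chart, so ``volume zero'' and ``measure zero in charts'' coincide and the diffeomorphism-invariance step is legitimate.

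The main obstacle — really the only subtlety — is handling the case where the strict saddle points are not isolated, so that ``stable manifold of $p$'' must be replaced by ``local stable set within a neighborhood'', and one must ensure the center-stable manifold (tangent to the sum of the non-positive eigenspaces of $\mathrm{Hess}\,f(p)$) genuinely has codimension $\ge 1$; this is guaranteed precisely by the strict saddle hypothesis (at least one strictly positive eigenvalue), and the center-stable manifold theorem still applies since $\nabla f$ is smooth (indeed real-analytic here). Everything else is bookkeeping: second countability to get a countable subcover, $\Phi^t$ being a diffeomorphism for each fixed $t$, and countable unions of null sets being null.
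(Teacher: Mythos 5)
Your proposal is correct and follows essentially the same route as the paper's proof: the local center-stable manifold theorem at each strict saddle (giving a positive-codimension, hence null, local stable set), a countable cover of the saddle set, and the fact that the time-$t$ flow maps are diffeomorphisms preserving null sets, so the set of bad initial conditions is a countable union of null sets. The only cosmetic differences are that the paper extracts the countable subcover via compactness rather than second countability and cites Shub's book for the center-stable manifold theorem.
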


\begin{proof}[Proof of Lemma \ref{l:nosaddleconv}] 
Let us denote by $\Phi^t(X_0):=X(t)$, $t\geq 0$ the solution to \eqref{e:gradfl}. 
We denote by $\mathscr{S}\subset\mathcal{M}$ the set of strict saddle points of $f$, and by $\mathscr{A}\subset\mathcal{M}$ the set of initial conditions $X_0\in\mathcal{M}$ for which $\Phi^t(X_0)$ converges to a strict saddle point of $f$ as $t\rightarrow+\infty$.
For any $y\in \mathscr{S}$, we denote by $B_y$ a ball in which the local center-stable manifold $W^{\text{sc}}_{\text{loc}}(y)$ exists (see \cite{shub2013global}, Theorem III.7 and Exercise III.3 for the adaptation to flows). 
Using compactness, we may write the union of these balls as a countable union $\bigcup_{k\in I} B_{y_k}$ (where $I$ is countable and $y_k\in \mathcal{M}$ for $k\in I$). If $X_0\in\mathscr{A}$, there exists some $t_0\geq 0$ and $k\in I$ such that $\Phi^t(X_0)\in B_{y_k}$ for all $t\geq t_0$.
From the center-stable manifold theorem (\cite{shub2013global}, Theorem III.7 and Exercise III.3,
where we note that the Jacobian of a gradient vector field coincides, at a critical point, with the
Hessian of the corresponding function) we gather that $\Phi^t(X_0)\in W^{\text{sc}}_{\text{loc}}(y_k)$ for $t\geq t_0$, hence
$X_0\in \Phi^{-t}(W^{\text{sc}}_{\text{loc}}(y_k))$ for all $t\geq t_0$. The dimension of $W^{\text{sc}}_{\text{loc}}(y_k)$ is at most $\text{dim}(\mathcal{M})-1$, thus it has zero volume. Since $\Phi^t$ is a diffeomorphism on a compact manifold, $\Phi^{-t}$ preserves null-sets and hence $\Phi^{-t}(W^{\text{sc}}_{\text{loc}}(y_k))$ has zero volume for all $t\geq 0$. Therefore $\mathscr{A}$, which satisfies
$$
\mathscr{A}\subset \bigcup_{k\in I} \bigcup_{\ell\in\mathbb{N}} \Phi^{-\ell}(W^{\text{sc}}_{\text{loc}}(y_k))
$$
has volume zero. 
\end{proof}

\begin{lemma} \label{lem: yury.lemma}
Any critical point $(x_1,\ldots,x_n)\in(\S)^n$ of $\mathsf{E}_0$ which is not a global maximum, namely such that $x_1=\ldots=x_n$, is a strict saddle point. In particular, all local maxima are global.
\end{lemma}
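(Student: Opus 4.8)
The plan is to compute the Riemannian Hessian of $\mathsf{E}_0$ at a critical point block by block and exhibit a strictly positive eigenvalue whenever the $x_i$ are not all equal. First note that $\mathsf{E}_0(x_1,\ldots,x_n)=\frac1n\|S\|^2$ with $S:=\sum_{i=1}^n x_i$, and that the Riemannian gradient is $\nabla_{x_i}\mathsf{E}_0=\frac2n\,\proj_{x_i}(S)$; hence $X=(x_1,\ldots,x_n)$ is a critical point if and only if $S=\langle S,x_i\rangle x_i$ for every $i$. Consequently, either $S=0$, or $S\neq0$ and each $x_i$ equals $\pm v$ where $v:=S/\|S\|$ (in the latter case $p$ particles sit at $v$ and $q=n-p$ at $-v$, with necessarily $p>q$ so that $S=(p-q)v$). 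In particular, the configurations with all $x_i$ equal are exactly the global maxima of $\mathsf{E}_0$, and every other critical point either has $S=0$ or has at least one particle at $-v$.

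The computation driving everything is the following. Fix a critical point $X$, an index $k$, and a unit vector $u\in\Tan_{x_k}\S$, and consider the geodesic variation $X(\epsilon)$ of $(\S)^n$ obtained by replacing $x_k$ with $x_k(\epsilon)=\cos\epsilon\,x_k+\sin\epsilon\,u$ and keeping the other particles fixed. Writing $s_k:=\langle S,x_k\rangle$ and using that $\langle S,u\rangle=s_k\langle x_k,u\rangle=0$ at a critical point, a direct expansion of $\|S(\epsilon)\|^2=\|S-x_k+x_k(\epsilon)\|^2=\|S\|^2-2s_k+2+2\cos\epsilon\,(s_k-1)$ yields
\begin{equation*}
\frac{\diff^2}{\diff\epsilon^2}\Big|_{\epsilon=0}\mathsf{E}_0\big(X(\epsilon)\big)=\frac2n\big(1-s_k\big).
\end{equation*}
Since $X$ is critical, this second derivative equals $\mathrm{Hess}\,\mathsf{E}_0(X)[U,U]$ for $U=(0,\ldots,u,\ldots,0)$ (the $k$-th slot), and since $u$ was an arbitrary unit vector of $\Tan_{x_k}\S$, the block of $\mathrm{Hess}\,\mathsf{E}_0(X)$ acting on $\Tan_{x_k}\S$ is exactly $\frac2n(1-s_k)\,\mathrm{Id}$.

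To conclude, it suffices to find one index $k$ with $s_k<1$: then $\mathrm{Hess}\,\mathsf{E}_0(X)$ has the strictly positive eigenvalue $\frac2n(1-s_k)$, so $X$ is a strict saddle point. If $S=0$, then $s_k=0<1$ for every $k$. If $S\neq0$ and $X$ is not the consensus configuration, then by the classification above some particle sits at $-v$, say $x_k=-v$, whence $s_k=\langle S,-v\rangle=-\|S\|<0<1$. Either way the claim follows (and this also exhibits the antipodal configurations of \cite[Theorem~2]{frouvelle2019long} as strict saddles, via $s_k=-(n-2)$). I do not anticipate a genuine obstacle here: the only points needing care are the identification of the critical set, which is immediate from $\nabla_{x_i}\mathsf{E}_0=\frac2n\proj_{x_i}(S)$, and the standard fact that at a critical point the second derivative of a function along a geodesic computes the Hessian quadratic form — here applied to geodesics of the product manifold $(\S)^n$ that move a single factor along a great circle, which are admissible test curves. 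The rest is the short explicit computation above.
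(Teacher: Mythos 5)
Your proof is correct, and the critical-point classification (either $S=\sum_i x_i=0$, or every $x_i=\pm S/\|S\|$) coincides with Step~1 of the paper's argument. Where you genuinely diverge is in how positivity of the Hessian is extracted. The paper perturbs an entire subset $\mathscr{S}\subset[n]$ of particles by a rotation $e^{tB}$ with $B$ skew-symmetric, computes $\mathsf{E}_0''(0)=\frac2n\sum_{i\in\mathscr{S}}\sum_{j\in\mathscr{S}^c}\langle B^2x_i,x_j\rangle$, and then averages over $B$ (via $-I_d=\frac{1}{d-1}\sum_j B_j^2$, or $\mathbb{E}[B^2]=-(d-1)I_d$) to convert the sign condition \eqref{eq:taylor} into a positive second derivative. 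You instead move a single particle along a great circle in a tangent direction $u\in\Tan_{x_k}\S$ and compute the corresponding diagonal block of the Hessian exactly, as $\frac2n(1-s_k)$ times the identity on $\Tan_{x_k}\S$ with $s_k=\langle S,x_k\rangle$; the classification then hands you an index with $s_k\leq 0$. (The two facts you invoke are both sound: at a critical point the second derivative along any curve with prescribed initial velocity computes the Hessian quadratic form, and a positive value of the quadratic form on one tangent vector forces a positive eigenvalue.) Your route is shorter and more explicit, and it yields a uniform spectral gap of $2/n$ at every non-synchronized critical point, marginally sharper than the bound $\frac{2(d-1)}{dn}$ recorded in \eqref{eq: eigval.beta}. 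The paper's rotation-plus-averaging device is less economical here but is chosen deliberately: it only needs the subset inequality \eqref{eq:taylor}, not an explicit description of the critical set, and so it transfers verbatim to $\mathsf{E}_\beta$ with $\beta>0$ (see \eqref{eq: claim.yury} in the proof of Theorem~\ref{thm: beta.interval}), where your single-particle computation would no longer close because the critical points cannot be classified. Finally, note that the lemma's phrase ``global minimum'' should read ``global maximum'' for the ascended energy $\mathsf{E}_0$, as your remark about $\|S\|\leq n$ implicitly corrects.
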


\begin{proof}[Proof of Lemma \ref{lem: yury.lemma}]
We extend the proof idea of~\cite[Theorem 4.1]{taylor2012there} as follows. Let $(x_1,\ldots,x_n)\in(\S)^n$ be a critical point of $\mathsf{E}_0$, and assume that the points $x_i$ are not all equal to each other. 

\subsubsection*{Step 1} We first prove that there exists a set of indices $\mathscr{S} \subset [n]$ such that
\begin{equation}\label{eq:taylor}
   \sum_{i\in \mathscr{S}} \sum_{j\in \mathscr{S}^c} \langle x_i,x_j\rangle < 0.
\end{equation}
To this end, define
$$ m := \sum_{j=1}^n x_j,$$
and consider two cases. If $m\neq 0$, then we deduce from $\nabla \mathsf{E}_0(x_1,\ldots,x_n)=0$ that for any $j\in[n]$, $x_j$ is collinear with $m$. Thus $x_j = \pm x_1$ for any $j\in[n]$. 
Setting 
$$\mathscr{S}=\{j\in[n]\colon  x_j = +x_1\},$$ 
we can see that~\eqref{eq:taylor} holds, unless $\mathscr{S}=[n]$ which has been excluded. 
Now suppose that $m=0$. Then by expanding $\langle m, x_i\rangle = 0$, we find that for any $i\in[n]$
$$ -1 = \sum_{j=2}^n \langle x_j, x_1 \rangle\,, $$ 
holds, which again implies~\eqref{eq:taylor} with $\mathscr{S}=\{1\}$.

\subsubsection*{Step 2} In this second step we look to deduce from \eqref{eq:taylor} that $(x_1,\ldots,x_n)$ is a strict saddle point. Consider an arbitrary non-zero skew-symmetric matrix $B$ and define the perturbation
$$ x_i(t) = \begin{cases}
    x_i & i\not \in \mathscr{S}\\
    e^{tB}x_i & i \in \mathscr{S}.
\end{cases}$$
Set $\mathsf{E}_0(t) = \mathsf{E}_0(x_1(t),\ldots,x_n(t))$. Note that we have 
$$ \mathsf{E}_0(t) = \mathrm{const.} + \frac{2}{n} \sum_{i\in\mathscr{S}}\sum_{j\in \mathscr{S}^c} \langle x_i(t), x_j \rangle\,,$$
where we grouped time-independent terms into the constant (recall that $e^{tB}$ is an orthogonal matrix, since skew-symmetric matrices are the Lie algebra of $\text{SO}(d)$). 
Thus
\begin{align*}
    \mathsf{E}_0'(t) &= \frac{2}{n} \sum_{i\in\mathscr{S}} \sum_{j\in \mathscr{S}^c} \langle \dot{x_i}(t), {x_j}\rangle \\
    \mathsf{E}_0''(t) &= \frac{2}{n} \sum_{i\in\mathscr{S}} \sum_{j\in \mathscr{S}^c} \langle \ddot{x_i}(t), {x_j}\rangle\,.
\end{align*}
Since $(x_1,\ldots,x_n)$ is a critical point of $\mathsf{E}_0$, we have $\mathsf{E}_0'(0)=0$. 
On the other hand, since $\ddot x_i(0) = B^2 x_i$ we have 
\begin{equation}\label{e:helpcl} \mathsf{E}_0''(0) = \frac2n\sum_{i\in \mathscr{S}} \sum_{j\in \mathscr{S}^c} \langle B^2 x_i, x_j\rangle\,. 
\end{equation} 
We claim that given~\eqref{eq:taylor}, there must exist some skew-symmetric matrix $B$ such that $\mathsf{E}_0''(0)>0$. 
Indeed, if $d$ is even, then we just take $B$ as the block-diagonal matrix with repeated block 
$$\begin{bmatrix}
    0 & 1 \\ -1 & 0
\end{bmatrix},$$ 
so that $B^2 = -I_d$. If $d$ is odd, we can represent 
\begin{equation}\label{e:russiantrick}
-I_d = {1\over d-1}\sum_{j=1}^d B_j^2,
\end{equation}
where $B_j$ is the same block-diagonal matrix, with the exception that the $j$-th block is a $1\times1$ zero-matrix. 
If each $B_j$ were to yield $\mathsf{E}_0''(0)\le 0$, then it would violate~\eqref{eq:taylor}. Thus, $\mathsf{E}_0''(0)> 0$ for some well-chosen skew-symmetric $B$, which proves that $(x_1,\ldots,x_n)$ is a strict saddle point.
\end{proof}

\section{Proof of Theorem \ref{thm: beta.interval}} \label{apx:beta_high}

\begin{proof}[Proof of Theorem \ref{thm: beta.interval}] 

We leverage the gradient flow structure presented in Remark \ref{rem: particle.gf} and Section \ref{sec: new.gradient.flow} (the manifold is compact, and the metric and functional are analytic), and use Lemma \ref{l:nosaddleconv} as in the proof of Theorem \ref{p:beta0}. Consequently, it suffices to show that, in the stated regime of $\beta$, the critical points of $\mathsf{E}_\beta$ which are not global maxima are strict saddle points, namely, that all local maxima are global. For simplicity we write the argument for \eqref{USA} and explain the extension to the case of \eqref{SA} in Remark \ref{r:extensiontoSAcase}.

We begin by focusing on the case $d=2$, and provide a brief argument which shows that the case of arbitrary $d\geq2$ readily follows.

Let $(\theta_1,\ldots,\theta_n)\in\mathbb{T}^n$ be a critical point such that all eigenvalues of the Hessian of $\mathsf{E}_\beta$ are
non-positive. We intend to show that if $\beta$ is sufficiently large, then necessarily
$\theta_1=\cdots = \theta_n$. 
To that end, note that the non-positivity of the Hessian of $\mathsf{E}_\beta$ implies in particular that for any subset of indices $\mathscr{S} \subset [n]$, we must have
\begin{equation} \label{eq:taylor2}
	\sum_{i \in \mathscr{S}}\sum_{j\in \mathscr{S}} \partial_{\theta_i} \partial_{\theta_j} \mathsf{E}_\beta(\theta_1,\ldots,\theta_n) \le 0\,.
\end{equation}
Notice that for any $i, j\in[n]$,
$$ \partial_{\theta_i} \mathsf{E}_\beta(\theta_1,\ldots,\theta_n) = {1\over n^2} \sum_{m\in[n]\setminus\{i\}} -\sin(\theta_i-\theta_m)
e^{\beta \cos(\theta_i-\theta_m)} $$
and
\begin{equation*}
\partial_{\theta_i}\partial_{\theta_j} \mathsf{E}_\beta(\theta_1,\ldots,\theta_n) = {1\over n^2}\cdot
\begin{dcases} 
g(\theta_i - \theta_j), & i\neq j\\
-\sum_{m\in[n]\setminus\{i\}} g(\theta_i-\theta_m), & i = j\,,
\end{dcases}
\end{equation*} 
where we set $g(x) := (\cos(x) - \beta\sin^2(x))e^{\beta \cos(x)}$. 
Plugging this expression back into~\eqref{eq:taylor2} and simplifying, we obtain
\begin{equation}\label{eq:taylor3}
	\sum_{i\in \mathscr{S}} \sum_{j\in \mathscr{S}^c} g(\theta_i - \theta_j) \ge 0\,.
\end{equation}
Let us now define $\tau^*_\beta$ be the unique solution on $[0,\frac{\pi}{2})$ of the equation
$$ \beta \sin^2(\tau) = \cos(\tau)\,.$$
Note that $\tau^*_\beta$ is a monotonically decreasing function of $\beta$, and in fact 
$$\tau^*_\beta =
{1+o(1)\over\sqrt{\beta}}$$ 
as $\beta\to+\infty$. 
The importance of $\tau^*_\beta$ is in implying the following property of the function $g$: for any $\tau\not\in[-\tau^*_\beta,
\tau^*_\beta]$, we must have that $g(\tau)<0$ (see Figure \ref{fig: g(x)}).  
We arrive at the following conclusion: it must be that for any proper subset $\mathscr{S} \subset [n]$
there exists, by virtue of~\eqref{eq:taylor3}, some index $j\in \mathscr{S}^c$ such that 
\begin{equation*}
    \inf_{k\in\mathscr{S}}|\theta_j-\theta_k|<\tau_\beta^*.
\end{equation*} 
So now let us start with $\mathscr{S}=\{1\}$ and grow $\mathscr{S}$ inductively by adding those points $\theta_j$ at distance $<\tau^*_\beta$ from $\{\theta_k\colon k\in\mathscr{S}\}$ at each induction step.  
If
$\beta$ is large enough so that 
$$ (n-1) \tau^*_\beta < \frac{\pi}{2},$$
then in the process of adding points we have travelled a total arc-length $<\pi/2$ on each side of $x_1$. Thus it must be that
the collection of points $\theta_1,\ldots,\theta_n$ is strictly contained inside a half-circle of
angular width $<\pi$. By Lemma~\ref{lem: hemisphere.clustering}  we know
that there can be no critical points of $\mathsf{E}_\beta$ that are strictly inside some half-circle, unless that critical point is trivial: $\theta_1=\cdots=\theta_n$. This completes the proof when $d=2$.

\begin{figure}[h!]
    \centering
    \includegraphics[scale=1.05]{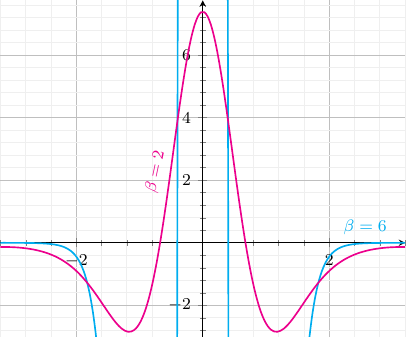}
    \caption{The function $\tau\mapsto g(\tau)$ for two values of $\beta$.} 
    \label{fig: g(x)}
\end{figure}

We can show that the same conclusion holds for any dimension $d\geq2$.
The proof follows by arguing just as above, making instead use of the following generalization of~\eqref{eq:taylor3}: given a collection $x_1,\ldots,x_n\in\mathbb{S}^{d-1}$ at which the Hessian of $\mathsf{E}_\beta$ is non-positive, we
must have for any subset $\mathscr{S}\subset[n]$ that 
\begin{equation} \label{eq: claim.yury}
\sum_{i\in \mathscr{S}} \sum_{j\in \mathscr{S}^c} g(\theta_{ij}) \ge 0,    
\end{equation} 	
where $g(\zeta)=e^{\beta \cos(\zeta)}((d-1)\cos(\zeta) - \beta \sin^2(\zeta))$ and $\theta_{ij}\in[0,\pi]$ is the geodesic distance between $x_i$ and $x_j$, namely $\cos(\theta_{ij}) =
\langle x_i, x_j \rangle$. 
We now show \eqref{eq: claim.yury}.
By repeating the argument in Step 2 of the proof of Lemma \ref{lem: yury.lemma}
, we see that for any skew-symmetric matrix $B$ we must have
\begin{equation}\label{eq:dr1}
\sum_{i\in \mathscr{S}} \sum_{j\in \mathscr{S}^c} e^{\beta \langle x_i,x_j \rangle} \Big( \beta
\langle B x_i, x_j\rangle^2 + \langle B^2 x_i, x_j \rangle\Big) \le 0.
\end{equation}
Now we take $B$ to be random by generating $B_{ij} \stackrel{\text{i.i.d.}}{\sim} P$, $i<j$ and $P$ being any zero-mean, unit-variance distribution. 
We set $B_{ji} = -B_{ij}$ and $B_{ii} = 0$. Then it is easy to
check that 
\begin{equation*}
 \mathbb{E}[B^2] = -(d-1)I_d    
\end{equation*}
and
\begin{equation*}
    \mathbb{E}[\langle B x_i, x_j\rangle^2] = 1-\langle x_i,x_j\rangle^2 = \sin^2(\theta_{ij}).
\end{equation*}
Thus, taking the expectation over all such $B$ in~\eqref{eq:dr1} yields \eqref{eq: claim.yury}. 
Mirroring the proof for $d=2$, we define $\tau_\beta^*$ to be the unique solution on $[0,\frac\pi2)$ of the equation $\beta\sin^2(\tau)=(d-1)\cos(\tau)$. We note that  
$$\tau^*_\beta = \sqrt{\dfrac{(d-1)+o(1)}{\beta}}$$ for $\beta\to +\infty$. Repeating verbatim the argument for the case $d=2$, we deduce the convergence to a single cluster whenever $\beta\gtrsim (d-1)n^2$.
\end{proof}

\begin{remark} \label{r:extensiontoSAcase}
We comment on the extension of the above proof to the dynamics \eqref{SA}. We recall that
\eqref{SA} is a gradient flow, but for a different metric---see Section~\ref{sec:
new.gradient.flow}---and we show that the saddle point property is preserved across metrics.  Our proof is an adaptation of a classical argument: the Hessian of a function at a critical point is a notion which does not depend on the choice of Riemannian metric.

Let $x=(x_1,\ldots,x_n)\in(\S)^n$ be a critical point of $\mathsf{E}_\beta$ (this does not depend on the metric) such that not all $x_i$ are equal to each other. 
Recall that for $f:(\S)^n\to\R$, for any metric on $(\S)^n$ (with associated Christoffel symbols $\Gamma_{ij}^k$) and any associated orthonormal basis $y_1,\ldots,y_{(d-1)n}$, the Hessian of $f$ reads
\begin{equation}\label{e:Hessianincoord}
    \mathrm{Hess}(f) = \left(\frac{\partial^2f}{\partial y_i\partial y_j}-\Gamma_{ij}^k\frac{\partial f}{\partial y_k}\right)dy_i\otimes dy_j.
\end{equation}
Since we are evaluating the Hessian at a critical point $x$ of $\mathsf{E}_\beta$, the term carrying the Christoffel symbols $\Gamma_{ij}^k$ vanishes. 
In the above argument, we saw that $\mathrm{Hess}\,\mathsf{E}_\beta$ evaluated at $x$, and written in an orthonormal basis for the canonical metric $g$ on $(\S)^n$, is not negative semi-definite. We denote this matrix by $M_g$; we know that there exists $v\in \Tan_x (\S)^n$ such that $g(v,v)=1$ and $v^\top M_gv>0$. 
Let $\tilde{g}$ be another metric on $(\S)^n$; we denote by $M_{\tilde{g}}$ the Hessian evaluated at $x$, and written in an orthonormal basis for $\widetilde{g}$. 
Let $c:\mathbb{R}_{\geq 0}\to (\S)^n$ be such that $c(0)=x$ and $\dot{c}(0) = v$. Since $x$ is a critical point (for both metrics), a Taylor expansion to second order in the two orthonormal bases yields
\begin{equation*}
    \mathsf{E}_\beta(c(t)) = \mathsf{E}_\beta(c(0)) + \frac12 t^2 v^\top M_g v + O(t^3)
\end{equation*}
as well as 
\begin{equation*}
\mathsf{E}_\beta(c(t))=\mathsf{E}_\beta(c(0))+\frac12 t^2 \|v\|_{\tilde{g}}^{-2}v^\top M_{\tilde{g}}v+O(t^3)
\end{equation*}
thanks to \eqref{e:Hessianincoord}.
Hence $v^\top M_{\tilde{g}}v>0$. Specializing to $\tilde{g}$ being the metric of Section \ref{sec: new.gradient.flow}, with respect to which \eqref{SA} is a gradient flow for $\mathsf{E}_\beta$, we conclude for \eqref{SA}.
\end{remark}

\section{Proof of Theorem \ref{thm: beta.tiny}} \label{app: beta.tiny}

\begin{proof}[Proof of Theorem \ref{thm: beta.tiny}] 
We leverage the gradient flow structure and follow the same strategy as in the proof of Theorem \ref{thm: beta.interval} presented above. For simplicity we write the argument for \eqref{USA} and explain the extension to the case of \eqref{SA} in Remark \ref{r:extensiontoSAcase.bis}.

    Consider
\begin{equation*}
    \mathsf{E}_\beta(x_1,\ldots,x_n) = \frac{1}{2\beta}\sum_{i=1}^n \sum_{j=1}^n \left(e^{\beta\langle x_i, x_j\rangle}-1\right).
\end{equation*}
Note that this is only a slight deviation from the energy studied in 
Section \ref{sec: new.gradient.flow}: we solely subtracted a constant. Consequently the dynamics \eqref{USA} are also a gradient flow for this energy. The main interest of considering this modified energy is the observation that
$$
\mathsf{E}_\beta(x_1,\ldots,x_n)=\mathsf{E}_0(x_1,\ldots,x_n)+\beta\, \mathsf{R}_\beta(x_1,\ldots,x_n),
$$
where $\mathsf{R}_\beta$ is smooth. Hence $\mathsf{R}_\beta$ has a bounded Hessian on $(\S)^n$ uniformly with respect to $\beta$, and 
\begin{equation}\label{e:contebeta}
\nabla \mathsf{E}_\beta=\nabla \mathsf{E}_0+O(\beta),\qquad \mathrm{Hess }\,\mathsf{E}_\beta=\mathrm{Hess }\,\mathsf{E}_0+O(\beta).
\end{equation}
Observe that in the proof of Theorem \ref{p:beta0}, we actually showed that there exists $c>0$ such that at any critical point $(x_1,\ldots,x_n)$ of $\mathsf{E}_0$ for which $x_i\neq x_j$ whenever $i\neq j$, at least one of the eigenvalues of the Hessian of $\mathsf{E}_0$, $\lambda$ say, satisfies $\lambda\geq c$. Indeed, in \eqref{eq:taylor} the proof actually shows the existence of some $\mathscr{S}\subset[n]$ such that
\begin{equation*}\label{eq:taylor22}
   \sum_{i\in \mathscr{S}} \sum_{j\in \mathscr{S}^c} \langle x_i,x_j\rangle \leq -1.
\end{equation*}
Then, \eqref{e:helpcl}, together with \eqref{e:russiantrick} for instance, yield 
\begin{equation} \label{eq: eigval.beta}
  \mathsf{E}_0''(0)\geq \frac{2(d-1)}{dn}=:c  
\end{equation}
for one of the $B_j$.

Now suppose that there exists a positive sequence $\beta_k\rightarrow 0$ as well as $X_k\in(\S)^n$ such that $X_k$ is a critical point of $\mathsf{E}_{\beta_k}$ and all of the eigenvalues of $\mathrm{Hess}\,\mathsf{E}_{\beta_k}(X_k)$ are non-positive. Then by virtue of the continuity properties of $\mathsf{E}_\beta$ with respect to $\beta$ in \eqref{e:contebeta}, we find that, up to extracting a subsequence, there is some limit point $\overline{X}=(x_1,\ldots,x_n)\in(\S)^n$ of $X_k$ which is a critical point of $\mathsf{E}_0$, and such that all of the eigenvalues of $\mathrm{Hess}\,\mathsf{E}_0(\overline{X})$ are non-positive. Per Theorem \ref{p:beta0}, this implies that $x_1=\ldots=x_n$. But then, for large enough $k$, $X_k$ is also constituted of points which are all nearly equal, whence in the same hemisphere, and the only such critical point of $\mathsf{E}_\beta$ is that in which all points are equal (synchronized).
This, combined with the continuity of the eigenvalues of $\mathrm{Hess}\,\mathsf{E}_\beta$ with respect to $\beta$ and \eqref{eq: eigval.beta}, proves that there exists some $c>0$ independent of $n$ such that whenever $\beta\leq c\,n^{-1}$, all critical points of $\mathsf{E}_\beta$ except synchronized ones are strict saddle points.
\end{proof}

\begin{remark} \label{r:extensiontoSAcase.bis}
We comment on the extension of the above proof to the dynamics \eqref{SA}. 
The point of contention is \eqref{e:contebeta}, since the metric with respect to which the gradient and Hessian of $\mathsf{E}_0$ are taken is not the same as that for $\mathsf{E}_\beta$. Denote the modified metric defined in Section \ref{sec: new.gradient.flow} by $g_\beta$, and the canonical metric by $g$. For any $x\in(\S)^n$ and $v\in\Tan_x(\S)^n$ we have
\begin{equation*}
    \mathrm{D}\mathsf{E}_\beta(x)[v] = g_\beta(\nabla_{g_\beta}\mathsf{E}_\beta(x), v),
\end{equation*}
but also $\mathrm{D}\mathsf{E}_\beta(x)[v]=g(\nabla_{g}\mathsf{E}_\beta(x),v)$. By virtue of the explicit form of $g_\beta$ and $\mathsf{E}_\beta$ as well as \eqref{e:contebeta}, we gather that 
\begin{equation} \label{eq: metric.grad}
    g_\beta(\nabla_{g_\beta} \mathsf{E}_\beta(x), v) = g(\nabla_g \mathsf{E}_0(x), v)+O(\beta)
\end{equation}
which implies that any sequence of critical points of $\mathsf{E}_\beta$ converges to a critical point for $\mathsf{E}_0$.
Similarly, since $\mathrm{Hess}_{g_\beta}\mathsf{E}_\beta(x)[v]=\mathrm{D}(\nabla_{g_\beta}\mathsf{E}_\beta(x))[v]$, we find
\begin{equation} \label{eq: metric.hess}
    \mathrm{Hess}_{g_\beta} \mathsf{E}_\beta(x)[v] = \mathrm{Hess}_g\mathsf{E}_0(x)[v]+O(\beta).
\end{equation}
We can then repeat the argument in the proof above by replacing \eqref{e:contebeta} by \eqref{eq: metric.grad} and \eqref{eq: metric.hess}.
\end{remark}

\section{Proof of Theorem~\ref{thm: phase.transition.curve}}
\label{app: proof.1}

\begin{proof} We focus on the dynamics~\eqref{SA}, since the proof for~\eqref{USA} follows from very similar computations. 

\subsubsection*{Step 1. The flow map is Lipschitz.} We begin by showing that the trajectories satisfy a Lipschitz property with respect to the initial data.
To this end, let $(x_i(\cdot))_{i\in[n]}\in C^0(\R_{\geq0};(\S)^n)$ and $(y_i(\cdot))_{i\in[n]}\in C^0(\R_{\geq0};(\S)^n)$ be two solutions to the Cauchy problem for~\eqref{SA} associated to data $(x_i(0))_{i\in[n]}$ and $(y_i(0))_{i\in[n]}$ respectively. For any $i\in[n]$ and $t\geq0$, we have
\begin{align} \label{eq: lip.1}
x_i(t) - y_i(t) &= x_i(0) - y_i(0)\nonumber\\
&\hspace{0.5cm}+ \int_0^t \sum_{j=1}^n\left(\frac{e^{\beta\langle x_i(s), x_j(s)\rangle}}{\sum_{k=1}^n e^{\beta\langle x_i(s), x_k(s)\rangle}}\right)\big(x_j(s)-\langle x_i(s), x_j(s)\rangle x_i(s)\big)\diff s\nonumber\\
&\hspace{0.5cm}-\int_0^t \sum_{j=1}^n\left(\frac{e^{\beta\langle y_i(s), y_j(s)\rangle}}{\sum_{k=1}^n e^{\beta\langle y_i(s), y_k(s)\rangle}}\right)\big(y_j(s)-\langle y_i(s), y_j(s)\rangle y_i(s)\big)\diff s.
\end{align}
We see that
\begin{align} \label{eq: lip.2}
\left\|\int_0^t \sum_{j=1}^n \left(\frac{e^{\beta\langle x_i(s), x_j(s)\rangle}}{\sum_{k=1}^n e^{\beta\langle x_i(s), x_k(s)\rangle}}\right)(x_j(s)-y_j(s))\diff s\right\|\leq \int_0^t\max_{j\in[n]}\|x_j(s)-y_j(s)\|\diff s.
\end{align}
On another hand, since the softmax function with a parameter $\beta$ is $\beta$--Lipschitz (with respect to the Euclidean norm), we also get 
\begin{align} \label{eq: lip.3} 
&\left\|\int_0^t\sum_{j=1}^n\left(\frac{e^{\beta\langle x_i(s), x_j(s)\rangle}}{\sum_{k=1}^n e^{\beta\langle x_i(s), x_k(s)\rangle}}-\frac{e^{\beta\langle y_i(s), y_j(s)\rangle}}{\sum_{k=1}^n e^{\beta\langle y_i(s), y_k(s)\rangle}}\right)y_j(s)\diff s \right\|\nonumber\\
&\qquad\leq\beta n^\frac12\int_0^t\left(\sum_{j=1}^n\Big[\langle x_i(s), x_j(s) \rangle-\langle y_i(s),y_j(s)\rangle\Big]^2\right)^{\frac12}\diff s\nonumber\\
&\qquad\leq2\beta n\int_0^t \max_{j\in[n]}\|x_j(s)-y_j(s)\|\diff s.\end{align}
Using~\eqref{eq: lip.2},~\eqref{eq: lip.3} and arguing similarly for the remaining terms in~\eqref{eq: lip.1}, we deduce that
\begin{equation*}
\|x_i(t)-y_i(t)\|\leq\|x_i(0)-y_i(0)\| + 10\max\{1,\beta\}n\int_0^t \max_{j\in[n]} \|x_j(s)-y_j(s)\|\diff s.
\end{equation*} 
Maximizing over $i\in[n]$ and applying the Gr\"onwall inequality yields
\begin{equation} \label{eq: stability.4ortho}
\max_{j\in[n]} \|x_j(t)-y_j(t)\|\leq c(\beta)^{nt}\max_{j\in[n]}\left\|x_j(0)-y_j(0)\right\|,
\end{equation}
for any $i\in[n]$ and $t\geq0$.

\subsubsection*{Step 2. Almost orthogonality}
Let $x_1(0), \ldots, x_n(0)\in\S$ be the random i.i.d. initial points. We prove that with high probability, there exist $n$ pairwise orthogonal points $y_1(0),\ldots,y_n(0)\in\S$, such that for any $i\in[n]$,
\begin{equation}\label{eq: almost.ortho.vec}
\|x_i(0)-y_i(0)\|\leq \sqrt{\frac{\log d}{d}}.
\end{equation}
To this end, we take $y_1(0)=x_1(0)$ and then construct the other points $y_i(0)$ by induction. Assume that $y_1(0),\ldots,y_i(0)$ are constructed for some $i\in[n]$, using only knowledge about the points $x_1(0),\ldots,x_i(0)$. Then by Lévy's concentration of measure, since $x_{i+1}(0)$ is independent from $x_1(0),\ldots,x_i(0)$ and uniformly distributed on $\S$, 
\begin{equation*} 
\mathbb{P}\left(\left\{\dist\left(x_{i+1}(0),\text{span}\{y_1(0),\ldots,y_i(0)\}^\perp\right)\leq\sqrt{\frac{\log d}{d}}\right\}\right)\geq 1-4id^{-1/64},
\end{equation*}
for some universal constants $c,C>0$.
Using the union bound, we gather that the event
\begin{equation*}
\mathscr{A}_0=\left\{\text{\eqref{eq: almost.ortho.vec} is satisfied for any $i\in[n]$}\right\}
\end{equation*}
has probability at least $p_0=1-2n^2d^{-1/64}$. We now consider the event
\begin{equation*}
    \mathscr{A}=\mathscr{A}_0\cap\{{\text{for some $C,\lambda>0$, \eqref{eq: expconvtocons} holds for any $i\in[n]$ and $t\geq 0$}}\}
\end{equation*}
which, since $d\geq n$ and thus the second event has probability $1$, 
also holds with probability at least $p_0=1-2n^2d^{-1/64}$. For the remainder of the proof, we assume that $\mathscr{A}$ is satisfied.

\subsubsection*{Step 3. Proof of~\eqref{eq: upto-t}}
Let $(y_i(\cdot))_{i\in[n]}\in C^0(\R_{\geq0};(\S)^n)$ denote the unique solution to the Cauchy problem for~\eqref{SA} corresponding to the initial datum $(y_i(0))_{i\in[n]}$. A combination of~\eqref{eq: stability.4ortho} and~\eqref{eq: almost.ortho.vec} yields
\begin{equation}\label{e:shortdist}
\|x_i(t)-y_i(t)\|\leq c(\beta)^{nt}\sqrt{\frac{\log d}{d}}
\end{equation}
for any $i\in[n]$ and $t\geq0$, under $\mathscr{A}$. Combining~\eqref{e:shortdist} with \Cref{thm: orthogonal} we obtain
\begin{equation}\label{e:ineqfirstpart}
\Big|\langle x_i(t), x_j(t)\rangle-\gamma_\beta(t)\Big|\leq 2c(\beta)^{nt}\sqrt{\frac{\log d}{d}}
\end{equation}
for any $i\neq j$ and $t\geq0$, under $\mathscr{A}$.

We turn to the proof of the second part of~\eqref{eq: upto-t}. For this, we prove that for large times $t$, both $\gamma_\beta(t)$ and $\langle x_i(t),x_j(t)\rangle$ are necessarily close to $1$. We first show that
\begin{equation}\label{e:ybetacloseto1}
1-\gamma_\beta(t)\leq \frac12 \exp\left(\frac{n^2e^\beta}{2\left(n+e^{\frac\beta2}\right)}-\frac{nt}{n+e^{\frac\beta2}} \right)
\end{equation}
for any $t\geq0$.
To this end, we notice that $t\mapsto\gamma_\beta(t)$ is increasing and thus $\gamma_\beta(t)\geq0$, as well as $\dot{\gamma}_\beta(t)\geq \frac{1}{ne^{\beta}}$ as long as $\gamma_\beta(t)\leq \frac12$. Therefore, 
\begin{equation*}
 \gamma_\beta\left(\frac{ne^{\beta}}{2}\right)\geq \frac12.   
\end{equation*}
We deduce that for $t\geq \frac{ne^{\beta}}{2}$,
\begin{equation*}
 \dot{\gamma}_\beta(t)\geq \frac{n(1-\gamma_\beta(t))}{n+e^{\frac\beta2}}.
\end{equation*}
Integrating this inequality from $\frac{ne^{\beta}}{2}$ to $t$, we obtain~\eqref{e:ybetacloseto1}. 
We now set $d^*(n,\beta)\geq n$ such that
\begin{equation} \label{eq: d.large} 
    \frac{d}{\log d}\geq \frac{16c(\beta)^2}{\gamma_\beta(\frac1n)^2}
\end{equation}
holds for any $d\geq d^*(n,\beta)$.
According to \Cref{lem: hemisphere.clustering}, since $\mathscr{A}$ is satisfied, there exists  $x^*\in\S$ such that $x_i(t)\rightarrow x^*$ for any $i\in[n]$ as $t\rightarrow +\infty$.
We set 
$$\alpha(t):=\min_{i\in[n]} \langle x_i(t),x^*\rangle,$$ 
and prove that
\begin{equation} \label{e:productcloseto1}
1-\alpha(t)\leq\exp\left(\frac{1-\gamma_\beta\left(\frac1n\right)t}{2ne^{2\beta}}\right).
\end{equation}
To this end, let us first prove that 
\begin{equation}\label{e:1/n}
    \alpha\left(\frac1n\right)\geq \frac12 \gamma_\beta\left(\frac1n\right).
\end{equation}
From Step 2 in the proof of Lemma~\ref{lem: hemisphere.clustering}, we gather that $x^*$ lies in the convex cone generated by the points $x_1(t),\ldots,x_n(t)$ for any $t>0$, and so the decomposition \eqref{e:decompox*.step2} holds.
Taking the inner product of $x_i(\frac1n)$ with the decomposition~\eqref{e:decompox*.step2} at time $t=\frac1n$, we get
\begin{align*}
    \alpha\left(\frac1n\right)\geq \min_{(i,j)\in[n]^2}\;\left\langle x_i\left(\frac1n\right),x_j\left(\frac1n\right)\right\rangle&\geq \gamma_\beta\left(\frac1n\right)-2c(\beta)\sqrt{\frac{\log(d)}{d}}\\
    &\geq\frac12\gamma_\beta\left(\frac1n\right),
\end{align*}
where the second inequality comes from~\eqref{e:shortdist} evaluated at time $t=\frac1n$, and the last inequality comes from~\eqref{eq: d.large}. This is precisely~\eqref{e:1/n}. 
Using the notation $a_{ij}(t)=Z_{\beta,i}(t)^{-1}e^{\beta\langle x_i(t), x_j(t)\rangle}$ as in the proof of \Cref{lem: hemisphere.clustering}, we now find
\begin{equation}\label{e:dotalpha}
\dot{\alpha}(t)=\langle \dot{x}_{i(t)}(t),x^*\rangle\geq \sum_{j=1}^n a_{i(t)j}(t)(1-\langle x_{i(t)}(t),x_j(t)\rangle) \alpha(t)
\end{equation}
for one of the indices $i(t)\in[n]$ achieving the minimum in the definition of $\alpha(t)$. Combining this with~\eqref{e:1/n}, we gather that $\alpha(t)\geq \alpha(\frac1n)$ for $t\geq\frac1n$. But
\begin{equation}\label{e:mineqalpha}
\min_{j\in[n]} \langle x_{i(t)}(t),x_j(t)\rangle\leq \sum_{k=1}^n \theta_k(t)\langle x_{i(t)}(t),x_k(t)\rangle= \langle x_{i(t)}(t),x^*\rangle=\alpha(t).
\end{equation}
Plugging~\eqref{e:mineqalpha} into~\eqref{e:dotalpha} and using $a_{ij}(t)\geq n^{-1}e^{-2\beta}$ we get
\begin{equation}\label{e:diffineqalpha}
\dot{\alpha}(t)\geq \frac{1}{ne^{2\beta}}\alpha\left(\frac1n\right)(1-\alpha(t))
\end{equation}
for $t\geq\frac1n$. Integrating~\eqref{e:diffineqalpha} from $\frac1n$ to $t$, we get~\eqref{e:productcloseto1}.
We therefore deduce from~\eqref{e:productcloseto1} that
$$\langle x_i(t), x_j(t)\rangle\geq 1-\exp\left(\frac{1-\gamma_\beta(\frac1n)t}{2ne^{2\beta}}\right)$$ 
holds for any distinct $i,j\in[n]$. 
Together with~\eqref{e:ybetacloseto1}, we then get
\begin{equation} \label{e:ineqsecondpart}
\Big|\langle x_i(t), x_j(t)\rangle-\gamma_\beta(t)\Big|\leq \exp\left(\frac{1-\gamma_\beta(\frac1n)t}{2ne^{2\beta}}\right)+\frac12 \exp\left(\frac{n^2e^\beta}{2(n+e^{\frac\beta2})}-\frac{nt}{n+e^{\frac\beta2}} \right).
\end{equation}
Finally, combining~\eqref{e:ineqfirstpart} and~\eqref{e:ineqsecondpart} we obtain~\eqref{eq: upto-t}.
\end{proof}

\begin{remark} \label{rem: usa.d}
An analogous statement to Theorem~\ref{thm: phase.transition.curve} holds for~\eqref{USA}, where $\gamma_\beta$ would rather be the unique solution to~\eqref{eq: ybetaUSA}. More concretely, Step 1 in the proof is only slightly changed---the constant one obtains in the analogue of~\eqref{eq: upto-t} is rather $c(\beta)^{nt}$ with $c(\beta)=e^{10\beta e^{2\beta}}$---. Step 2 remains unchanged. In Step 3,~\eqref{e:ybetacloseto1} is replaced by $\gamma_\beta(\frac{n}{2})\geq\frac12$ and 
$$1-\gamma_\beta(t)\leq\frac12\exp\left(-e^{\frac{\beta}{2}}\left(t-\frac{n}{2}\right)\right).
$$ 
The rest of the proof then remains essentially unchanged.
\end{remark}

\bibliographystyle{alpha}
\bibliography{refs}

\end{document}